\documentclass[11pt,twoside,a4paper]{article}

\usepackage{a4wide}
\usepackage{amsthm}
\usepackage{amssymb}
\usepackage{amsmath}
\usepackage[utf8]{inputenc}
\usepackage{xspace}
\usepackage{algorithm}
\usepackage{algpseudocode}
\usepackage{graphicx}

\newtheorem{definition}{Definition}
\newtheorem{theorem}{Theorem}
\newtheorem{lemma}[theorem]{Lemma}

\newcommand{\AF}{\mathit{AF}}
\newcommand{\Arguments}{\mathit{Ar}}
\newcommand{\attack}{\mathit{att}}
\newcommand{\Args}{\mathit{Args}}
\newcommand{\inn}{\mathtt{in}\xspace}
\newcommand{\out}{\mathtt{out}\xspace}
\newcommand{\undec}{\mathtt{undec}\xspace}
\newcommand{\Lab}{\mathcal{L}ab}
\newcommand{\MM}{\mathcal{MM}}
\newcommand{\MMLab}{\mathcal{MM}_{\Lab}}
\newcommand{\MMLabI}{{\mathcal{MM}_{\Lab}}_I}
\newcommand{\MMLabO}{{\mathcal{MM}_{\Lab}}_O}
\newcommand{\MMLabThree}{{\mathcal{MM}_{\Lab}}_3}
\newcommand{\ArgsToLab}{\mathtt{Args2Lab}}
\newcommand{\LabToArgs}{\mathtt{Lab2Args}}

\begin{document}

\title{Strong Admissibility, a Tractable Algorithmic Approach\\(proofs)}
\author{Martin Caminada\\Cardiff University
	\and
	Sri Harikrishnan\\Cardiff University}
\maketitle

\begin{abstract}
Much like admissibility is the key concept underlying preferred semantics,
strong admissibility is the key concept underlying grounded semantics, as
membership of a strongly admissible set is sufficient to show membership
of the grounded extension. As such, strongly admissible sets and labellings
can be used as an explanation of membership of the grounded extension, as
is for instance done in some of the proof procedures for grounded semantics.
In the current paper, we present two polynomial algorithms for constructing
relatively small strongly admissible labellings, with associated min-max
numberings, for a particular argument. These labellings can be used as 
relatively small explanations for the argument's membership of the grounded 
extension. Although our algorithms are not guaranteed to yield an absolute
minimal strongly admissible labelling for the argument (as doing do would
have implied an exponential complexity), our best performing algorithm 
yields results that are only marginally bigger. Moreover, the runtime of 
this algorithm is an order of magnitude smaller than that of the existing 
approach for computing an absolute minimal strongly admissible labelling 
for a particular argument. As such, we believe that our algorithms can be 
of practical value in situations where the aim is to construct a minimal 
or near-minimal strongly admissible labelling in a time-efficient way.
\end{abstract}

\section{Introduction} \label{sec-intro}

In formal argumentation, one would sometimes like to show that a particular
argument is (credulously) accepted according to a particular argumentation
semantics, without having to construct the entire extension the argument is
contained in. For instance, to show
that an argument is in a preferred extension, it is not necessary to construct
the entire preferred extension. Instead, it is sufficient to construct a set 
of arguments that is \emph{admissible}. Similarly, to show that an argument
is in the grounded extension, it is not necessary to construct the entire
grounded extension. Instead, it is sufficient to construct a set of arguments
that is \emph{strongly admissible}.

The concept of strong admissibility was introduced by Baroni and Giacomin
\cite{BG07a} as one of the properties to describe and categorise argumentation
semantics. It was subsequently studied by Caminada and Dunne \cite{Cam14a,CD19a}
who further developed strong admissibility in both its set and labelling form.
In particular, the strongly admissible sets (resp. labellings) were found to 
form a latice with the empty set (resp. the all-$\undec$ labeling) as its 
bottom element and the grounded extension (resp. the grounded labelling) as
its top element \cite{Cam14a,CD19a}.

As a strongly admissible set (labelling) can be used to explain that a
particular argument is in the grounded extension (for instance, by using
the discussion game of \cite{Cam15a}) a relevant question is whether one
can identify an expanation that is \emph{minimal}. That is, given an
argument $A$ that is in the grounded extension, how can one obtain:\\
(1) a strongly admissible set that contains $A$, of which the number of
arguments is minimal among all strongly admissible sets containing $A$, and\\
(2) a strongly admissible labelling that labels $A$ $\inn$, of which the
number of $\inn$ and $\out$ labelled arguments (its \emph{size}, cf. 
\cite{CD20a}) is minimal among all strongly admissible labelings that label 
$A$ $\inn$. 

It has been found that the verification problem of (1) is NP-complete 
\cite{DW20} whereas the the verification problem of (2) is co-NP-complete 
\cite{CD20a}. Moreover, it has also been observed that even computing a 
c-approximation for the minimum size of a strongly admissible set for a 
given argument is NP-hard for every $c \geq 1$.
This is in sharp contrast with the complexity of the general verification
problem of strong admissibility (i.e. verifying whether a set/labelling
is strongly admissible, without the constraint that it also has to be minimal)
which has been found to be polynomial \cite{CD19a}.

The complexity results related to minimal strong admissibility pose a problem
when the aim is to provide the user with a relatively small explanation of
why a particular argument is in the grounded extension. For this, one can
either apply an algorithmic approach that yields an absolute minimal
explanation, but has an exponential runtime, or one can apply an algorithmic
approach that has a less than exponential runtime, but does not come with any
formal guarantees of how close the outcome is to an absolute minimal explanation
\cite{DW20}. The former approach is taken in \cite{DW20}. The latter approach
is taken in our current paper.

In the absence of a dedicated algorithm for strong admissibility, one may be 
tempted to simply apply an algorithm for computing the grounded extension
or labelling instead (such as \cite{MC09,NAD21}) if the aim is to do the
computation in polynomial time. Still, from the perspective of minimality,
this would yield the absolute worst outcome, as the grounded extension
(labeling) is the maximal strongly admissible set (labelling). In the
current paper we therefore introduce an alternative algorithm which, like
the grounded semantics algoritms, runs in polynomial time but tends to produce 
a strongly admissible set (resp. labelling) that is that is significantly
smaller than the grounded extension (resp. labelling). As the complexity results
from \cite{DW20} prevent us from giving any theory-based guarantees regarding
how close the outcome of the algorithm is to an absolute minimal strongly
admissible set, we will instead assess the performance of the algorithm using
a wide range of benchmark examples.

The remaining part of the current paper is structured as follows.
First, in Section \ref{sec-preliminaries} we give a brief overview
of the formal concepts used in the current paper, including that of
a strongly admissible set and a strongly admissible labelling. 
In Section \ref{sec-algorithm} we then proceed to provide the proposed 
algorithm, including the associated proofs of correctness. Then, in Section 
\ref{sec-empirical-results} we assess the performance of our approach,
and compare it with the results yielded by the approach in \cite{DW20}
both in terms of outcome and runtime. 
We round off with a discussion of our findings in Section \ref{sec-discussion}.

\section{Preliminaries} \label{sec-preliminaries}

In the current section, we briefly restate some of the basic concepts in
formal argumentation theory, including strong admissibility. For current
purposes, we restrict ourselves to finite argumentation frameworks.

\begin{definition} \label{def-AF}
An \emph{argumentation framework} is a pair $(\Arguments, \attack)$ where
$\Arguments$ is a finite set of entities, called arguments, whose internal
structure can be left unspecified, and $\attack$ is a binary relation on
$\Arguments$. For any $x,y \in \Arguments$ we say that $x$ \emph{attacks} 
$y$ iff $(x,y) \in \attack$.
\end{definition}


As for notation, we use lower case letters at the end of the alphabet
(such as $x$, $y$ and $z$) to denote variables containing arguments,
upper case letters at the end of the alphabet (such as $X$, $Y$ and $Z$)
to denote program variables containing arguments, and upper case letters
at the start of the alphabet (such as $A$, $B$ and $C$) to denote concrete
instances of arguments.

When it comes to defining argumentation semantics, one can distinguish
the \emph{extension approach} and the \emph{labelling approach}
\cite{BCG17}. We start with the extensions approach.

\begin{definition} \label{def-cf-defends}
Let $(\Arguments, \attack)$ be an argumentation framework, $x \in \Arguments$
and $\Args \subseteq \Arguments$. We define $x^+$ as $\{ y \in \Arguments \mid
x$ attacks $y \}$, $x^-$ as $\{ y \in \Arguments \mid y$ attacks $x \}$,
$\Args^+$ as $\bigcup \{ x^+ \mid x \in \Args \}$, and $\Args^-$ as 
$\bigcup \{ x^- \mid x \in \Args \}$. $\Args$ is said to be \emph{conflict-free}
iff $\Args \cap \Args^+ = \emptyset$. $\Args$ is said to \emph{defend} $x$ iff 
$x^- \subseteq \Args^+$. The characteristic function $F: 2^{\Arguments} 
\rightarrow 2^{\Arguments}$ is defined as $F(\Args) = \{ x \mid \Args$ defends 
$x \}$.
\end{definition}

\begin{definition} \label{def-ext-semantics}
Let $(\Arguments, \attack)$ be an argumentation framework. $\Args 
\subseteq \Arguments$ is
\begin{itemize}
  \item an admissible set iff 
	$\Args$ is conflict-free and $\Args \subseteq F(\Args)$
  \item a complete extension iff 
	$\Args$ is conflict-free and $\Args = F(\Args)$
  \item a grounded extension iff $\Args$ is 
	the smallest (w.r.t. $\subseteq$) complete extension
  \item a preferred extension iff $\Args$ is 
	a maximal (w.r.t. $\subseteq$) complete extension
\end{itemize}
\end{definition}

As mentioned in the introduction, the concept of strong admissibility was 
originally introduced by Baroni and Giacomin \cite{BG07a}. For current purposes
we will apply the equivalent definition of Caminada \cite{Cam14a,CD19a}.

\begin{definition} \label{def-strong-adm}
Let $(\Arguments, \attack)$ be an argumentation framework. 
$\Args \subseteq \Arguments$ is \emph{strongly admissible} iff 
every $x \in \Args$ is defended by some $\Args' \subseteq \Args \setminus
\{ x \}$ which in its turn is again strongly admissible.
\end{definition}

\begin{figure}[thb]
\centering
\includegraphics[scale=1.0]{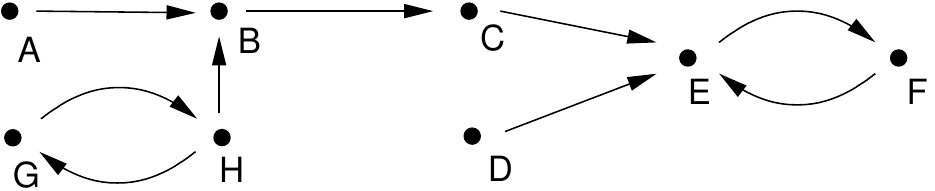}
\caption{An example of an argumentation framework.\label{fig-example-AF}}
\end{figure}

As an example (taken from \cite{CD19a}), in the argumentation framework
of Figure \ref{fig-example-AF} the strongly admissible sets are $\emptyset$, 
$\{A\}$, $\{A,C\}$, $\{A,C,F\}$, $\{D\}$, $\{A,D\}$, $\{A,C,D\}$, $\{D,F\}$, 
$\{A,D,F\}$ and $\{A,C,D,F\}$, the latter also being the grounded extension. 
The set $\{A,C,F\}$ is strongly admissible as $A$ is defended by $\emptyset$,
$C$ is defended by $\{A\}$ and $F$ is defended by $\{A,C\}$, each of which
is a strongly admissible subset of $\{A,C,F\}$ not containing the argument 
it defends. Please notice that although the set $\{A,F\}$ defends argument 
$C$ in $\{A,C,F\}$, it is in its turn not strongly admissible (unlike $\{A\}$).
Hence the requirement in Definition \ref{def-strong-adm} for $\Args'$ to be 
a \emph{subset} of $\Args \setminus \{A\}$. We also observe that although 
$\{C, H\}$ is an admissible set, it is not a \emph{strongly} admissible set, 
since no subset of $\{C, H\} \setminus \{H\}$ defends $H$.

It can be shown that each strongly admissible set is conflict-free and
admissible \cite{CD19a}. The strongly admissible sets form a lattice (w.r.t. 
$\subseteq$), of which the empty set is the bottom element and the grounded
extension is the top element \cite{CD19a}.

The above definitions essentially follow the extension based approach as
described in \cite{Dun95}. It is also possible to define the key argumentation 
concepts in terms of argument labellings \cite{Cam06d,CG09}.

\begin{definition}\label{def-lab-semantics-i}
Let $(\Arguments, \attack)$ be an argumentation framework. An \emph{argument
labelling} is a function $\Lab: \Arguments \rightarrow \{ \inn, \out,
\undec \}$. An argument labelling is called an \emph{admissible labelling}
iff for each $x \in \Arguments$ it holds that:
\begin{itemize}
  \item if $\Lab(x) = \inn$ then for each $y$ that attacks $x$
	it holds that $\Lab(y) = \out$
  \item if $\Lab(x) = \out$ then there exists a $y$ that attacks $x$
	such that $\Lab(y) = \inn$
\end{itemize}
$\Lab$ is called a \emph{complete labelling} iff it is an admissible labelling
and for each $x \in \Arguments$ it also holds that:
\begin{itemize}
  \item if $\Lab(x) = \undec$ then 
	there is a $y$ that attacks $x$ such that $\Lab(y) = \undec$, and
	for each $y$ that attacks $x$ such that $\Lab(y) \neq \undec$ it
	holds that $\Lab(y) = \out$
\end{itemize}
\end{definition}

As a labelling is essentially a function, we sometimes write it as a set of
pairs. Also, if $\Lab$ is a labelling, we write $\inn(\Lab)$ for $\{ x \in
\Arguments \mid \Lab(x) = \inn \}$, $\out(\Lab)$ for $\{ x \in \Arguments \mid
\Lab(x) = \out \}$ and $\undec(\Lab)$ for $\{ x \in \Arguments \mid \Lab(x)
= \undec \}$. As a labelling is also a partition of the arguments into 
sets of $\inn$-labelled arguments, $\out$-labelled arguments and 
$\undec$-labelled arguments, we sometimes write it as a triplet $(\inn(\Lab), 
\out(\Lab), \undec(\Lab))$.

\begin{definition}[\cite{CP11}]\label{def-lab-operators}
Let $\Lab$ and $\Lab'$ be argument labellings 
of argumentation framework $(\Arguments, \attack)$. 
We say that $\Lab \sqsubseteq \Lab'$ iff 
$\inn(\Lab) \subseteq \inn(\Lab')$ and $\out(\Lab) \subseteq \out(\Lab')$. 
\end{definition}


\begin{definition}\label{def-lab-semantics-ii}
Let $\Lab$ be a complete labelling of argumentation framework
$(\Arguments, \attack)$. $\Lab$ is said to be
\begin{itemize}
  \item the grounded labelling iff $\Lab$ is the (unique) smallest (w.r.t.
	$\sqsubseteq$) complete labelling
  \item a preferred labelling iff $\Lab$ is a maximal (w.r.t. $\sqsubseteq$)
	complete labelling
\end{itemize}
\end{definition}

We refer to the \emph{size} of a labelling $\Lab$ as $|\inn(\Lab) \cup
\out(\Lab)|$. We observe that if $\Lab \sqsubseteq \Lab'$ then the size
of $\Lab$ is smaller or equal to the size of $\Lab'$, but not necessarily
vice versa. In the remainder of the current paper, we use the terms smaller,
bigger, minimal and maximal in relation to the size of the respective
labellings, unless stated otherwise.

The next step is to define a strongly admissible labelling. In order to do
so, we need the concept of a min-max numbering \cite{CD19a}.

\begin{definition} \label{def-min-max-numbering}
Let $\Lab$ be an admissible labelling of argumentation framework $(\Arguments,
\attack)$. A \emph{min-max numbering} is a total function 
$\MM_{\Lab}: \inn(\Lab) \cup \out(\Lab) \rightarrow \mathbb{N} \cup \{\infty\}$
such that for each $x \in \inn(\Lab) \cup \out(\Lab)$ it holds that:
\begin{itemize}
  \item if $\Lab(x) = \inn$ then 
	$\MM_{\Lab}(x) = max(\{\MM_{\Lab}(y) \mid y$ attacks $x$ and 
	$\Lab(y) = \out \}) + 1$
	(with $max(\emptyset)$ defined as $0$)
  \item if $\Lab(x) = \out$ then
	$\MM_{\Lab}(x) = min(\{\MM_{\Lab}(y) \mid y$ attacks $x$ and
	$\Lab(y) = \inn \}) + 1$
	(with $min(\emptyset)$ defined as $\infty$)
\end{itemize}
\end{definition}

It has been proved that every admissible labelling has a unique min-max
numbering \cite{CD19a}. A strongly admissible labelling can then be defined
as follows \cite{CD19a}.

\begin{definition} \label{def-strongly-adm-lab}
A \emph{strongly admissible labelling} is an admissible labelling whose
min-max numbering yields natural numbers only (so no argument is numbered
$\infty$).
\end{definition}

As an example (taken from \cite{CD19a}), consider again the argumentation 
framework of Figure \ref{fig-example-AF}. 
Here, the admissible labelling $\Lab_1 = ( \{A,C,F,G\}, \{B,E,H\}, \{D\} )$ 
has min-max numbering $\{ (A:1), (B:2), (C:3), (E:4), (F:5), (G:\infty), 
(H:\infty) \}$, which means that it is not strongly admissible. The admissible 
labelling $\Lab_2 = ( \{A,C,D,F\}, \{B,E\}, \{G,H\} )$ has min-max numbering 
$\{ (A:1), (B:2), (C:3), (D:1), (E:2), (F:3) \}$, which means that it
is strongly admissible.

It has been shown that the strongly admissible labellings form a lattice
(w.r.t. $\sqsubseteq$), of which the all-$\undec$ labelling is the bottom 
element and the grounded labelling is the top element \cite{CD19a}.


The relationship between extensions and labellings has been well-studied
\cite{Cam06a,CG09}. A common way to relate extensions to labellings is
through the functions $\ArgsToLab$ and $\LabToArgs$. These translate a
conflict-free set of arguments to an argument labelling, and an argument 
labelling to a set of arguments, respectively. More specifically, given
an argumentation framework $(\Arguments, \attack)$, and an associated
conflict-free set of arguments $\Args$ and a labelling $\Lab$, 
$\ArgsToLab(\Args)$ is defined as $(\Args, \Args^+, \Arguments \setminus 
(\Args \cup \Args^+))$ and $\LabToArgs(\Lab)$ is defined as $\inn(\Lab)$.
It has been proven \cite{CG09} that if $\Args$ is an admissible set 
(resp. a complete, grounded or preferred extension) then $\ArgsToLab(\Args)$ 
is an admissible labelling (resp. a complete, grounded or preferred labelling),
and that if $\Lab$ is an admissible labelling (resp. a complete, grounded or 
preferred labelling) then $\LabToArgs(\Lab)$ is an admissible set (resp.
a complete, grounded or preferred extension). It has also been proven 
\cite{CD19a} that if $\Args$ is a strongly admissible set then 
$\ArgsToLab(\Args)$ is a strongly admissible labelling, and that if $\Lab$ 
is a strongly admissible labelling then $\LabToArgs(\Lab)$ is a strongly
admissible set.


\section{The Algorithms} \label{sec-algorithm}

In the current section, we present an algorithmic approach for computing a
relatively small\footnote{Small with respect to the size of the labelling.}
strongly admissible labelling. For this, we provide three different algorithms.
The first algorithm (Algorithm \ref{alg-construct}) basically constructs a 
strongly admissible labelling bottom-up, starting with the arguments that have 
no attackers and continuing until the main argument (the argument for which 
one want to show membership of a strongly admissible set) is labelled $\inn$. 
The second algorithm (Algorithm \ref{alg-prune}) then takes the
output of the first algorithm and tries to prune it. That is, it tries to 
identify only those $\inn$ and $\out$ labelled arguments that are actually 
needed in the strongly admissible labelling. The third algorithm (Algorithm
\ref{alg-combine}) then combines Algorithm \ref{alg-construct} (which is used 
as the construction phase) and Algorithm \ref{alg-prune} (which is used as
the pruning phase).

\subsection{Algorithm \ref{alg-construct}} \label{subsec-construct}

The basic idea of Algorithm \ref{alg-construct} is to start constructing the 
grounded labelling bottom-up, until we reach the main argument (that is, until 
we reach the argument that we are trying to construct a strongly admissible 
labelling for; this argument should hence be labelled $\inn$). As such, the 
idea is to take an algorithm for computing the grounded labeling (e.g. 
\cite{MC09} or \cite{NAD21}) and modify it accordingly. We have chosen the 
algorithm of \cite{NAD21} for this purpose, as it has been proved to run 
faster than some of the alternatives (such as \cite{MC09}). We had to adjust 
this algorithm in two ways. First, as mentioned above, we want the algorithm 
to stop once it hits the main argument, instead of continuing to construct the 
entire grounded labelling. Second, we want it to compute not just the strongly
admissible labelling itself, but also its associated min-max numbering.

Obtaining the min-max numbering is important, as it can be used to show
that the obtained admissible labelling is indeed \emph{strongly} admissible,
through the absence of $\infty$ in its min-max numbering. Additionally, the
min-max numbering is also needed for some of the applications of strong
admissibility, in particular the Grounded Discussion Game \cite{Cam15a} where
the combination of a strongly admissible labelling and its associated
min-max numbering serves as a roadmap for obtaining a winning strategy.

Instead of first computing the strongly admissible labelling and then
proceeding to compute the min-max numbering, we want to compute both
the strongly admissible labelling and the min-max numbering in just a
single pass, in order to achieve the best performance.

\begin{algorithm}
\caption{Construct a strongly admissible labelling that labels $A$ $\inn$
and its associated min-max numbering.} \label{alg-construct}
\begin{algorithmic}[1]
\Statex \textbf{Input}: An argumentation framework $\AF=(\Arguments,\attack)$,
\Statex an argument $A \in \Arguments$ that is in the grounded extension 
	of $\AF$.
\Statex \textbf{Output}: A strongly admissible labelling $\Lab$ where $A \in \inn(\Lab)$,
\Statex the associated min-max numbering $\MMLab$.
\Statex

\State  // We start with the type definitions
\State  $\Lab: \Arguments \rightarrow \{\inn, \out, \undec\}$
\State  $\MMLab: \inn(\Lab) \cup \out(\Lab) \rightarrow \mathbb{N} \cup \{\infty\}$
\State  $\mathtt{undec\_pre}: \Arguments \rightarrow \mathbb{N}$
\State  $\mathtt{unproc\_in}: [X_1, ... X_n]$
	($X_i \in \Arguments$ for each $1 \leq i \leq n$)
	// list of arguments
\State
\State  // Next, we initialize and process the arguments that have no attackers
\State  $\mathtt{unproc\_in} \leftarrow [ ]$
        \For{each $X \in \Arguments$}
\State      $\Lab(X) \leftarrow \undec$
\State      $\mathtt{undec\_pre}(X) \leftarrow | X^- |$
            \If{$\mathtt{undec\_pre}(X) = 0$}
\State          add $X$ to the rear of $\mathtt{unproc\_in}$
\State          $\Lab(X) \leftarrow \inn$
\State          $\MMLab(X) \leftarrow 1$
\State          \textbf{if} $X = A$ \textbf{then} 
		    return $\Lab$ and $\MMLab$
            \EndIf
        \EndFor
\State
\State  // We proceed to process the arguments that do have attackers
        \While{$\mathtt{unproc\_in}$ is not empty}
\State      let $X$ be the argument at the front of $\mathtt{unproc\_in}$
\State      remove $X$ from $\mathtt{unproc\_in}$
            \For{each $Y \in X^+$ with $\Lab(Y) \neq \out$}
\State          $\Lab(Y) \leftarrow \out$
\State          $\MMLab(Y) \leftarrow \MMLab(X) + 1$
                \For{each $Z \in Y^+$ with $\Lab(Z) = \undec$}
\State              $\mathtt{undec\_pre}(Z) \leftarrow \mathtt{undec\_pre}(Z)-1$
                    \If{$\mathtt{undec\_pre}(Z) = 0$}
\State                  add $Z$ to the rear of $\mathtt{unproc\_in}$
\State                  $\Lab(Z) \leftarrow \inn$
\State                  $\MMLab(Z) \leftarrow \MMLab(Y) + 1$
\State                  \textbf{if} $Z = A$ \textbf{then}
			    return $\Lab$ and $\MMLab$
                    \EndIf
                \EndFor
            \EndFor
        \EndWhile
\State
\State // If we get here, A is not in the grounded extension,
\State // so we may want to print an error message
\end{algorithmic}
\end{algorithm}

To see how the algorithm works, consider again the argumentation
framework of Figure \ref{fig-example-AF}. Let $C$ be the main argument.
At the start of the first iteration of the while loop (line 21) it holds
that $\Lab = (\{A,D\}, \emptyset, \{B,C,E,F,G,H\})$, $\MMLab = \{(A:1), (D:1)\}$
and $\mathtt{unproc\_in} = [A,D]$. At the first iteration of the while loop,
the argument in front of $\mathtt{unproc\_in}$ ($A$) is selected (line 22).
This then means that $B$ gets labelled $\out$ and $C$ gets labelled $\inn$
Hence, the algorithm hits the main argument ($C$) at line 33 and terminates.
This yields a labelling $\Lab = (\{A,C,D\}, \{B\}, \{E,F,G,H\})$ and associated
min-max numbering $\MMLab = \{(A:1), (B:2), (C:3), (D:1)\}$.

We now proceed to prove some of the formal properties of the algorithm.
The first property to be proved is termination.

\begin{theorem} \label{th-a1-terminates}
Let $\AF = (\Arguments, \attack)$ be an argumentation framework 
and $A$ be an argument in the grounded extension of $\AF$. Let both $\AF$
and $A$ be given as input to Algorithm \ref{alg-construct}. It holds that the 
algorithm terminates.
\end{theorem}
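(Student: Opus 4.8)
**

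The plan is to show that each loop in the algorithm executes finitely often and that no single iteration can run forever. Since the algorithm consists of an initialization \texttt{for} loop followed by a \texttt{while} loop (which itself contains nested \texttt{for} loops), termination reduces to bounding the number of iterations of the outer \texttt{while} loop, as all the \texttt{for} loops range over finite sets ($\Arguments$, $X^+$, and $Y^+$), which are finite because $\AF$ is a finite argumentation framework.

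First I would observe that the initialization \texttt{for} loop at lines 9--19 iterates exactly once per argument in $\Arguments$, hence finitely often, and either returns or falls through to the \texttt{while} loop. So the crux is the \texttt{while} loop at lines 21--38. The key invariant I would establish is that every argument is added to \texttt{unproc\_in} \emph{at most once} during the entire run of the algorithm. The argument for this is that an argument $X$ is added to \texttt{unproc\_in} only at the moment it is labelled $\inn$ (lines 14 and 31), and once an argument is labelled $\inn$ it is never relabelled: the only place arguments get (re)labelled $\inn$ is guarded by the condition $\mathtt{undec\_pre}(Z) = 0$ reached after decrementing, and the inner \texttt{for} loop at line 27 only processes arguments $Z$ with $\Lab(Z) = \undec$, so an argument already labelled $\inn$ (or $\out$) is skipped. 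Hence each argument triggers an addition to \texttt{unproc\_in} at most once.

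Given that invariant, since \texttt{unproc\_in} can contain each of the finitely many arguments at most once over the whole execution, the total number of additions to \texttt{unproc\_in} is bounded by $|\Arguments|$. Each iteration of the \texttt{while} loop removes exactly one element from \texttt{unproc\_in} (line 23) and the loop continues only while \texttt{unproc\_in} is non-empty. Therefore the \texttt{while} loop can iterate at most $|\Arguments|$ times. Each such iteration does only a bounded amount of work, because the nested \texttt{for} loops range over the finite sets $X^+$ and $Y^+$. Combining these observations, the algorithm performs finitely many steps in total and thus terminates.

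I expect the main obstacle to be pinning down rigorously the claim that no argument is ever added to \texttt{unproc\_in} twice, since this requires tracking the interplay between the labelling updates and the guard conditions $\Lab(Y) \neq \out$ (line 24) and $\Lab(Z) = \undec$ (line 27). The careful point is that once an argument leaves the $\undec$ state it can never re-enter it (the algorithm only changes labels from $\undec$ to $\inn$ or to $\out$, never back), so the guards ensure each argument is labelled at most once and hence enqueued at most once; making this monotonicity-of-labelling argument precise is where the real content of the proof lies, while the rest is a routine finiteness counting argument.
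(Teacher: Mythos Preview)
Your proposal is correct and follows essentially the same approach as the paper's own proof: both argue that the initialization \texttt{for} loop is bounded by $|\Arguments|$, and that the \texttt{while} loop terminates because each argument can be added to \texttt{unproc\_in} at most once (via the $\undec \to \inn$ transition that is never reversed), while each iteration removes one element. Your version is slightly more detailed in explicitly noting that the inner \texttt{for} loops range over finite sets and in spelling out the monotonicity-of-labelling argument, but the core reasoning is identical.
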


\begin{proof}
As for the first loop (the for loop of lines 9-18) we observe that it 
terminates as the number of arguments in Ar is finite.

As for the second loop (the while loop of lines 21-37) we first observe that 
no argument can be added to \texttt{unproc\_in} more than once (that is, once 
an argument has been added to \texttt{unproc\_in}, it can never be added again).
This is because for an argument to be added, it has to be labelled $\undec$
(line 27) whereas after adding it, it will be labelled $\inn$ (line 31).
Moreover, once an argument is labelled $\inn$, it will stay labelled $\inn$ 
as there is nothing in the algorithm that will change it. Given that 
(1) there is only a finite number of arguments in $\Arguments$, 
(2) each argument can be added to \texttt{unproc\_in} at most once, and 
(3) each iteration of the while loop removes an argument from 
    \texttt{unproc\_in},
it follows that the loop has to terminate.
\end{proof}

Next, we need to show that the algorithm is correct. That is, we need to
show that the algorithm yields a strongly admissible labelling $\Lab$ that
labels $A$ $\inn$, together with its associated min-max numbering $\MMLab$.
In order to do so, we first need to state and prove a number of lemmas.
We start with showing that $\Lab$ is admissible in every stage of the 
algorithm.

\begin{lemma} \label{lemma-a1-admissible}
Let $\AF = (\Arguments, \attack)$ be an argumentation framework 
and $A$ be an argument in the grounded extension of $\AF$. Let both $\AF$ 
and $A$ be given as input to Algorithm \ref{alg-construct}. It holds that 
during any stage in the algorithm, $\Lab$ is an admissible labelling.
\end{lemma}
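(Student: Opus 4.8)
The plan is to prove the statement as a collection of loop invariants, maintained throughout the execution of Algorithm~\ref{alg-construct} and established by induction on the primitive steps (each label assignment and each counter update). I would carry four invariants simultaneously: (a) every argument in \texttt{unproc\_in} is labelled $\inn$; (b) for every $Z$ with $\Lab(Z) = \undec$ it holds that $\mathtt{undec\_pre}(Z) = |\{ Y \in Z^- \mid \Lab(Y) \neq \out \}|$; (c) every $x$ with $\Lab(x) = \inn$ satisfies $\Lab(y) = \out$ for all $y \in x^-$; and (d) every $x$ with $\Lab(x) = \out$ has some $y \in x^-$ with $\Lab(y) = \inn$. Since admissibility of $\Lab$ is exactly the conjunction of (c) and (d), proving that (a)--(d) hold at every stage yields the lemma.

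For the base case, after the initialisation loop (lines 9-18) the only non-$\undec$ arguments are those without attackers, each labelled $\inn$ and enqueued, so (a) and (b) hold, (c) holds vacuously since $x^- = \emptyset$ for such $x$, and (d) holds vacuously since there are no $\out$ arguments yet. For the inductive step I would inspect each modification inside the while loop. When some $Y \in X^+$ is set to $\out$, the processed argument $X$ is labelled $\inn$ by (a) and attacks $Y$, which directly supplies the witness required by (d); moreover the subsequent decrements of $\mathtt{undec\_pre}(Z)$ for the relevant $Z \in Y^+$ exactly record the departure of $Y$ from the non-$\out$ attacker set of $Z$, preserving (b). When some $Z$ is set to $\inn$ because $\mathtt{undec\_pre}(Z) = 0$, invariant (b) guarantees that all attackers of $Z$ are already labelled $\out$, which is precisely (c) for $Z$, while (a) is preserved because $Z$ is enqueued.

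The main obstacle is the guard of the $\out$-assignment: it is triggered for every $Y \in X^+$ with $\Lab(Y) \neq \out$, so a priori it could relabel an argument that is currently $\inn$ to $\out$. Such a relabelling would be fatal for (d), since an argument that was $\inn$ has, by (c), all of its attackers labelled $\out$ and would therefore possess no $\inn$ attacker to justify an $\out$ label. I would close this gap by showing the situation never arises: if $\Lab(Y) = \inn$ then by (c) every attacker of $Y$ is $\out$, yet $X \in Y^-$ is labelled $\inn$, a contradiction. Hence any $Y$ that passes the guard is in fact $\undec$, and the only label transitions performed by the algorithm are $\undec \to \inn$ and $\undec \to \out$. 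This monotonicity is exactly what guarantees that the $\inn$ witness produced for an $\out$ argument is never downgraded, so that (d) genuinely persists. Because invariant (c) and this no-relabelling property depend on each other, I would verify them first within the inductive step and carry them through the induction together.
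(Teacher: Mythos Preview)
Your proposal is correct and follows essentially the same strategy as the paper: verify the two admissibility conditions as loop invariants by case analysis on where each $\inn$/$\out$ label is assigned. The difference is one of rigour rather than route. The paper argues point~(c) directly from the counting semantics of $\mathtt{undec\_pre}$ (each decrement at line~28 corresponds to a distinct attacker becoming $\out$, so when the counter hits~0 all attackers are $\out$), and argues point~(d) by tracing the $\out$ argument back to the $\inn$ argument that triggered it. What you add is the explicit auxiliary invariant~(b) linking $\mathtt{undec\_pre}(Z)$ to the number of non-$\out$ attackers, and---more notably---an explicit treatment of the guard at line~24: you observe that $\Lab(Y)\neq\out$ formally admits $\inn$-labelled $Y$, which would threaten~(d), and you close this by deriving a contradiction from~(c). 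The paper's proof relies on this no-relabelling property implicitly (it asserts elsewhere that once an argument is $\inn$ it stays $\inn$, and that the $\inn$ witness for an $\out$ argument persists) but does not spell it out inside this lemma. Your version is therefore a cleaner, self-contained induction; the paper's is shorter but leans on facts it does not locally justify.
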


\begin{proof}
Consider the value of $\Lab$ at an arbitrary point during the execution
of Algorith 1. According to the definition of an admissible labelling
(Definition \ref{def-lab-semantics-i}) we need to prove two things, for an 
arbitrary argument
$x \in \Arguments$:
\begin{enumerate}
  \item if $\Lab(x) = \inn$ then for each $y$ that attacks $x$
	it holds that $\Lab(y) = \out$\\
	Suppose $\Lab(x) = \inn$. We distinguish two cases:
	\begin{enumerate}
	  \item $x$ was labelled $\inn$ at line 14. This implies that 
		$\mathtt{und\_pre}(x) = 0$ in line 12, which implies that 
		$x$ has no attackers. Therefore, trivially $\Lab(y) = \out$
		for each $y \in \Arguments$ that attacks $x$.
	  \item $x$ was labelled $\inn$ at line 31. This implies that 
		$\mathtt{undec\_pre}(x) = 0$ in line 29, which implies that 
		each attacker $y$ of $x$ has been relabelled to $\out$.
		To see that this is the case, let $n$ be the number of
		attackers of $x$ (that is, $n = | x^- |$). It follows that
		$\mathtt{undec\_pre}(x)$ is initially $n$ (line 11) and at
		least 1 (otherwise $x$ would have been labelled $\inn$ at line
		14 instead of at line 31). In order for $\mathtt{undec\_pre}(x)$
		to have fallen to 0 (line 29) it will need to have decremented
		(at line 28) $n$ times (as no other line changes the value of
		$\mathtt{undec\_pre}(x)$). Each time this happens at line 28,
		an attacker of $x$ that wasn't previously labelled $\out$
		(line 24) is labelled $\out$ (line 25). Therefore, by the
		time $\mathtt{undec\_pre}(x)$ became 0, it follows that
		\emph{all} attackers of $x$ have become labelled $\out$.
	\end{enumerate}
  \item if $\Lab(x) = \out$ then there exists a $y$ that attacks $x$
	such that $\Lab(y) = \inn$\\
	Suppose $\Lab(x) = \out$. This implies that $x$ was labelled $\out$
	at line 25, which implies that an attacker $y$ of $x$ was an element 
	of $\mathtt{unproc\_in}$. This means that at some point, argument $y$
	was added to $\mathtt{unproc\_in}$. This could have happened at 
	line 13 or 30. In both cases, it follows that (line 14 and 31) $y$
	is labelled $\inn$.
\end{enumerate}
\end{proof}

The next lemma presents an intermediary result that will be needed further
on in the proofs.

\begin{lemma} \label{lemma-a1-min-1}
Let $\AF = (\Arguments, \attack)$ be an argumentation framework and 
$A$ be an argument in the grounded extension of $\AF$. Let both $\AF$ and
$A$ be given as input to Algorithm \ref{alg-construct}. It holds that for 
each argument $x$ that is added to $\mathtt{unproc\_in}$, $\MMLab(x) \geq 1$
\end{lemma}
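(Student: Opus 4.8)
The plan is to show that every argument $x$ added to $\mathtt{unproc\_in}$ receives a min-max number of at least $1$ by inspecting the only two places in the algorithm where an argument is both added to $\mathtt{unproc\_in}$ and assigned a value under $\MMLab$. An argument is added to $\mathtt{unproc\_in}$ only at line 13 or at line 30, so I would organise the argument as a case analysis over these two addition sites, since these are exactly the points where I must verify the claimed lower bound on $\MMLab(x)$.

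First I would handle the addition at line 13. In this case $x$ is added immediately after (or before) being labelled $\inn$ at line 14, and its min-max number is set at line 15 to $\MMLab(x) \leftarrow 1$. Since $1 \geq 1$, the claim holds trivially in this case, and moreover nothing in the algorithm subsequently changes $\MMLab(x)$, so the value remains $\geq 1$ throughout.

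Next I would handle the addition at line 30. Here $x$ is added just before being labelled $\inn$ at line 31, with its min-max number set at line 32 to $\MMLab(x) \leftarrow \MMLab(Y) + 1$, where $Y$ is the argument that was labelled $\out$ at line 25 earlier in the same iteration. To conclude $\MMLab(x) \geq 1$ it suffices to observe that $\MMLab(Y) \geq 0$. This follows because $\MMLab(Y)$ was itself set at line 26 to $\MMLab(X) + 1$ for some argument $X$ taken from the front of $\mathtt{unproc\_in}$, and $\MMLab$ values, once assigned, are natural numbers (the $\infty$ case of Definition \ref{def-min-max-numbering} never arises during assignment here, as every assignment adds $1$ to a previously assigned value or sets the value to $1$ directly). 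Hence $\MMLab(Y) + 1 \geq 1$, giving $\MMLab(x) \geq 1$.

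The main obstacle, such as it is, is purely bookkeeping: I must make sure I have identified \emph{all} lines that add an argument to $\mathtt{unproc\_in}$ (lines 13 and 30) and confirm that no other line ever decreases an already-assigned $\MMLab$ value, so that the lower bound established at the moment of addition is preserved. Since every $\MMLab$ assignment in the algorithm (lines 15, 26, 32) either sets the value to $1$ or to a predecessor's value plus $1$, a straightforward induction on the order of assignments shows all assigned values are positive integers, which in fact yields the slightly stronger statement that $\MMLab(x) \geq 1$ for every argument $x$ on which $\MMLab$ is ever defined, not merely those added to $\mathtt{unproc\_in}$.
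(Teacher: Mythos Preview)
Your proposal is correct and follows essentially the same idea as the paper's proof: both hinge on the observation that every assignment to $\MMLab$ is either the literal value $1$ (line 15) or one more than a previously assigned value (lines 26 and 32), so all assigned values are at least $1$. The paper packages this as an explicit induction on the number of arguments the while loop has added to $\mathtt{unproc\_in}$, whereas you organise it as a direct case split on the two addition sites and defer the inductive bookkeeping to your final paragraph; the underlying reasoning is the same, and your closing remark that the argument actually gives $\MMLab(x)\geq 1$ for every $x$ on which $\MMLab$ is defined is a correct (and slightly stronger) observation.
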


\begin{proof}
We prove this by induction over the number of arguments that are added
to $\mathtt{unproc\_in}$ during the execution of the while loop of lines
21-37.
\begin{description}
  \item[BASIS (n=0)]
	Suppose the while loop has not yet added any arguments to
	$\mathtt{unproc\_in}$. This means that any argument $x$ that
	was added to $\mathtt{unproc\_in}$ was added by the for loop
	(lines 9-18). This could only have been done at line 13.
	Line 15 then implies that $\MMLab(x) =1$ so trivially
	$\MMLab(x) \geq 1$.
  \item[STEP]
	Suppose that at a particular point, the while loop has added
	$n$ ($\geq 0$) arguments to $\mathtt{unproc\_in}$ and that for
	each argument $x$ that has been added to $\mathtt{unproc\_in}$
	(either by the while loop of lines 21-37 or by the for loop of
	lines 9-18) it holds that $\MMLab(x) \geq 1$. 
	We distinguish two cases:
	\begin{itemize}
	  \item $x$ was added to $\mathtt{unproc\_in}$ previously.
		From the induction hypothesis it follows that $\MMLab(x)
		\geq 1$ at the moment $x$ was added. As Algorithm 
		\ref{alg-construct} does not change any value of $\MMLab$ 
		once it is assigned, it follows that $\MMLab(x) \geq 1$ 
		still holds at the current point.
	  \item $x$ is the argument that is currently being added to
		$\mathtt{unproc\_in}$ (so $x = Z$ at line 30). This
		implies that $Z$ is labelled $\inn$ at line 31 and is
		numbered $\MMLab(Y)+1$ at line 32. Following line 26,
		it holds that $\MMLab(Y) = \MMLab(X)+1$, with $X$ being
		an $\inn$ labelled attacker of $Y$ that was added to
		$\mathtt{unproc\_in}$ previously. We can therefore
		apply the induction hypothesis and obtain that $\MMLab(X)
		\geq 1$, which together with the earlier observed facts
		that $\MMLab(Z) = \MMLab(Y)+1$ (line 32) and $\MMLab(Y) = 
		\MMLab(X)+1$ (line 26) implies that $\MMLab(Z) \geq 3$
		which trivially implies that $\MMLab(Z) \geq 1$. Hence
		(as $x = Z$) we obtain that $\MMLab(x) \geq 1$.
	\end{itemize}
\end{description}
\end{proof}

Algorithm \ref{alg-construct} (especially line 22 and line 30) implements a 
FIFO queue for the $\inn$ labelled arguments it processes. This is an important
difference with the algorithm of \cite{NAD21}, which uses a set for this
purpose. Using a set is fine if the aim is merely to compute a strongly
admissible labelling (as is the case for \cite{NAD21} where the aim is to
compute the grounded labelling). However, if the aim is also to compute the 
associated min-max numbering, having a set as the basic data structure could 
compromise the algorithm's correctness. 

As an example, consider again the argumentation framework of Figure 
\ref{fig-example-AF}. Let $F$ be the main argument. Now suppose that
$\mathtt{unproc\_in}$ is a set instead of a queue. In that case, at the
start of the first iteration of the while loop (line 21) it holds that
$\Lab = (\{A,D\}, \emptyset, \{B,C,E,F,G,H\})$, $\MMLab = \{(A:1), (D:1)\}$
and $\mathtt{unproc\_in} = \{A,D\}$. At the first iteration of the while loop,
an argument $X$ from $\mathtt{unproc\_in}$ is selected (line 22). As a set
has no order, it would be possible to select $A$ (so $X=A$). This then
means that $B$ gets labelled $\out$ and $C$ gets labelled $\inn$.
Hence, at the end of the first iteration of the while loop (and therefore
at the start of the second iteration of the while loop) it holds that
$\Lab = (\{A,C,D\}, \{B\}, \{E,F,G,H\})$, $\MMLab = \{(A:1), (B:2), (C:3),
(D:1)\}$ and $\mathtt{unproc\_inn} = \{C,D\}$. At the second iteration of
the while loop, suppose $C$ is the selected argument (so $X=C$). This means
that $E$ gets labelled $\out$ and $F$ gets labelled $\inn$. Hence, at the 
moment the algorithm hits the main argument ($F$, at line 33) and terminates,
it holds that $\Lab = (\{A,C,D,F\}, \{B,E\}, \{G,H\})$ and $\MMLab =
\{(A:1), (B:2), (C:3), (D:1), (E:4), (F:5)\}$. Unfortunately $\MMLab$ is
incorrect. This is because $\out$ labelled argument $E$ is numbered 4,
whereas its two $\inn$ labelled attackers $C$ and $D$ are numbered 3 and 1,
respectively, so the correct min-max number of $E$ should be 2 instead of 4,
which implies that the correct min-max number of $F$ should be 3 instead of 5.

One of the conditions of a min-max numbering is that the min-max number of
an $\out$ labelled argument should be the minimal value of its $\inn$ labelled
attackers, plus 1. This seems to require that the min-max number of the $\inn$
labelled attackers is already known, before assigning the min-max number of
the $\out$ labelled argument. At the very least, it would seem that the min-max
number of an $\out$ labelled argument would potentially need to be recomputed
each time the min-max number of one of its $\inn$ labelled attackers becomes 
known. Yet, Algorithm \ref{alg-construct} does none of this. It determines the
min-max number of an $\out$ labelled argument as soon as the min-max number of 
its first $\inn$ labelled attacker becomes known (line 26) without waiting for 
the min-max number of any other $\inn$ labelled attacker becoming available.
Yet, Algorithm \ref{alg-construct} still somehow manages to always yield the 
correct min-max numbering.

The key to understanding how Algorithm \ref{alg-construct} manages to always 
yield the correct min-max numbering is that the $\inn$ labelled arguments are 
processed in the order of their min-max numbers. That is, once an $\inn$ 
labelled attacker is identified, any subsequently identified $\inn$ labelled 
attacker will have a min-max number greater or equal to the first one and will 
therefore not change the minimal value (in the sense of Definition 
\ref{def-min-max-numbering}, first bullet point). This avoids having to
recalculate the min-max number of an $\out$ labelled attacker once more
of its $\inn$ labelled arguments become available, therefore speeding up
the algorithm.

To make sure that arguments are processed in the order of their min-max
numbers, we need to apply a FIFO queue instead of the set that was applied by
\cite{NAD21}. The following two lemmas (Lemma \ref{lemma-a1-sorted-adding}
and Lemma \ref{lemma-a1-sorted-removing}) state that the $\inn$ labelled
arguments are indeed added and removed to the queue in the order of their
min-max numbers. These properties are subsequently used to prove the 
correctness of the computed min-max numbering (Lemma \ref{lemma-a1-mm-correct}
and Theorem \ref{th-a1-correct}).

\begin{lemma} \label{lemma-a1-sorted-adding}
Let $\AF = (\Arguments, \attack)$ be an argumentation framework and 
$A$ be an argument in the grounded extension of $\AF$. Let both $\AF$ and 
$A$ be given as input to Algorithm \ref{alg-construct}. The order in which 
arguments are added to $\mathtt{unproc\_in}$ is non-descending w.r.t. $\MMLab$.
That is, if argument $x_1$ is added to $\mathtt{unproc\_in}$ before argument 
$x_2$ is added to $\mathtt{unproc\_in}$, then $\MMLab(x_1) \leq \MMLab(x_2)$.
\end{lemma}

\begin{proof}
We first observe that this property is satisfied just after finishing the
for loop of lines 9-18. This is because the for loop makes sure that for
each argument $x$, $\MMLab(x) = 1$ (line 15) so it is trivially satisfied 
that if $x_1$ is added before $x_2$, then $\MMLab(x_1) \leq \MMLab(x_2)$.
We proceed the proof by induction over the number of arguments added by
the while loop (lines 21-37).
\begin{description}
  \item[BASIS (n=0)]
	Suppose no arguments have yet been added to $\mathtt{unproc\_in}$
	by the while loop. In that case, all arguments that have been added 
	to $\mathtt{unproc\_in}$ were added by the for loop (lines 9-18) 
	for which we have observed that the property holds.
  \item[STEP (n+1)]
	Suppose the property holds after n arguments have been added to 
	$\mathtt{unproc\_in}$ by the while loop. We now show that if the 
	while loop adds another argument (n+1) to $\mathtt{unproc\_in}$, 
	the property still holds. In the while loop, only line 30 adds 
	an argument to $\mathtt{unproc\_in}$. Let $Z_{new}$ be the argument 
	(n+1) that is currently added and let $Z_{old}$ be an argument that 
	was previously added. We distinguish two cases:
	\begin{enumerate}
	  \item $Z_{old}$ has been added by the while loop (so at a previous 
		run of line 30). Let $Y_{new}$ be the $\out$ labelled attacker 
		of $Z_{new}$ at line 25 and $Y_{old}$ be the $\out$ labelled 
		attacker of $Z_{old}$ at line 25. Let $X_{new}$ be the $\inn$
		labelled attacker of $Y_{new}$ at line 22 and let $X_{old}$
		be the $\inn$ labelled attacker of $Y_{old}$ at line 22.
		It holds that either
		\begin{enumerate}
		  \item $X_{old}$ was added to $\mathtt{unproc\_in}$ before 
			$X_{new}$ (at a previous iteration of the while loop,
			in which case it follows from our induction hypothesis 
			that $\MMLab(X_{old}) \leq \MMLab(X_{new})$, or
		  \item $X_{new} = X_{old}$, in which case it trivially holds 
			that $\MMLab(X_{old}) \leq \MMLab(X_{new})$.
		\end{enumerate}
		In either case, we obtain that
		$\MMLab(X_{old}) \leq \MMLab(X_{new})$.
		Furthermore, as it holds that \\
		$\MMLab(Y_{old}) = \MMLab(X_{old}) + 1$ (line 26) \\
		$\MMLab(Y_{new}) = \MMLab(X_{new}) + 1$ (line 26) \\
		$\MMLab(Z_{old}) = \MMLab(Y_{old}) + 1$ (line 32) \\
		$\MMLab(Z_{new}) = \MMLab(Y_{new}) + 1$ (line 32) \\
		it follows that $\MMLab(Z_{old}) \leq \MMLab(Z_{new})$.
	  \item $Z_{old}$ has been added by the for loop of lines 9-18. 
		In that case, it holds that $\MMLab(Z_{old}) = 1$ (line 15). 
		As $\MMLab(Z_{new}) \geq 1$ (Lemma 3) it directly follows 
		that $\MMLab(Z_{old}) \leq \MMLab(Z_{new})$.
	\end{enumerate}
\end{description}
\end{proof}

\begin{lemma} \label{lemma-a1-sorted-removing}
Let $\AF = (\Arguments, \attack)$ be an argumentation framework 
and $A$ an argument in the grounded extension of $\AF$. Let both $\AF$ 
and $A$ be given as input to Algorithm \ref{alg-construct}. The order in which 
arguments are removed from $\mathtt{unproc\_in}$ is non-descending w.r.t. 
$\MMLab$.  That is, if argument $x_1$ is removed from $\mathtt{unproc\_in}$
before argument $x_2$ is removed from $\mathtt{unproc\_in}$, then
$\MMLab(x_1) \leq \MMLab(x_2)$.
\end{lemma}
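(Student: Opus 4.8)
The plan is to reduce this lemma to the previous one (Lemma \ref{lemma-a1-sorted-adding}) by exploiting the fact, emphasised repeatedly in the surrounding text, that $\mathtt{unproc\_in}$ is maintained as a FIFO queue. The essential observation is that for a FIFO queue the order in which elements leave the queue coincides exactly with the order in which they entered it. Once this coincidence is established, the non-descending removal order follows immediately from the already-proven non-descending addition order.

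Concretely, first I would verify the FIFO discipline by inspecting the three lines of the algorithm that touch $\mathtt{unproc\_in}$: additions occur only at line 13 and line 30, both of which append to the \emph{rear} of the queue, while removals occur only at line 22, which takes the element from the \emph{front}. Since these are the only operations acting on the queue, no reordering can take place, so the queue genuinely realises FIFO behaviour. I would also record (relying on the termination argument of Theorem \ref{th-a1-terminates}, which already shows that no argument can be added to $\mathtt{unproc\_in}$ more than once) that both ``the order of addition'' and ``the order of removal'' are well-defined total orders on the processed arguments, so that comparing them is meaningful.

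Next I would combine the two facts. Suppose $x_1$ is removed from $\mathtt{unproc\_in}$ before $x_2$. By the FIFO property, $x_1$ must also have been added to $\mathtt{unproc\_in}$ before $x_2$. Applying Lemma \ref{lemma-a1-sorted-adding} then yields $\MMLab(x_1) \leq \MMLab(x_2)$, which is precisely the claim of the lemma.

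I do not anticipate a serious obstacle, since essentially all of the work is carried by the earlier lemma. The only point requiring genuine care — and arguably the ``hard part'' — is making the FIFO claim watertight: one must confirm that no line other than 13 and 30 (enqueue) and 22 (dequeue) ever modifies $\mathtt{unproc\_in}$, and that an argument dequeued from the front is never re-inserted later, since either of these would break the coincidence between the addition order and the removal order. Both facts are already guaranteed by the structural analysis underlying the termination proof, so the argument stays short and clean.
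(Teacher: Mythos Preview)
Your proposal is correct and follows essentially the same approach as the paper: the paper's proof consists of a single sentence stating that the result follows directly from Lemma~\ref{lemma-a1-sorted-adding} together with the FIFO discipline of $\mathtt{unproc\_in}$. Your write-up is more detailed in verifying the FIFO behaviour line-by-line and in noting that arguments are never re-inserted, but the underlying argument is identical.
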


\begin{proof}
This follows directly from Lemma \ref{lemma-a1-sorted-adding}, together with 
the fact that additions to and removals from $\mathtt{unproc\_in}$ are done 
according to the FIFO (First In First Out) principle.
\end{proof}

We proceed to show the correctness of $\MMLab$ in an inductive way.
That is, we show that $\MMLab$ is correct at the start of each iteration
of the while loop. We then later need to do a bit of additional work to
state that $\MMLab$ is still correct at the moment we jump out of the
while loop using the return statement.
    
\begin{lemma} \label{lemma-a1-mm-correct}
Let $AF = (\Arguments, \attack)$ be an argumentation framework 
and $A$ an argument in the grounded extension of $\AF$. Let both $\AF$ 
and $A$ be given as input to Algorithm \ref{alg-construct}. At the start of 
each iteration of the while loop, it holds that $\MMLab$ is a correct min-max
numbering of $\Lab$.
\end{lemma}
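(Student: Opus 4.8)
The plan is to prove the statement by induction on the number of completed iterations of the while loop (lines 21--37), establishing that both clauses of Definition~\ref{def-min-max-numbering} hold for the current pair $(\Lab, \MMLab)$ at the start of every iteration. For the base case (the start of the first iteration), the only labelled arguments are the attacker-free, $\inn$-labelled arguments produced by the for loop, each numbered $1$ (line~15); since these have no attackers, the empty-$max$ convention gives $0+1 = 1$, and there are as yet no $\out$-labelled arguments, so both clauses hold trivially.

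For the inductive step I would fix an iteration in which $X$ is dequeued, assume $\MMLab$ is a correct min-max numbering of $\Lab$ at its start, and show correctness is restored at the start of the next iteration. The only changes made are that certain $\undec$ arguments $Y \in X^+$ become $\out$ with $\MMLab(Y) = \MMLab(X)+1$ (lines~25--26), and certain $\undec$ arguments $Z$ whose attacker count reaches $0$ become $\inn$ with $\MMLab(Z) = \MMLab(Y)+1$ (lines~31--32). I would then verify the conditions over four disjoint classes: (i) arguments already $\inn$, (ii) arguments already $\out$, (iii) the freshly $\out$-labelled $Y$'s, and (iv) the freshly $\inn$-labelled $Z$'s.

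The two preservation cases I would settle using admissibility together with the ordering lemmas. For (i), an already-$\inn$ argument $x$ has, by Lemma~\ref{lemma-a1-admissible}, all of its attackers already labelled $\out$; hence none of the freshly $\out$-labelled $Y$'s can attack $x$, its set of $\out$-attackers is unchanged, and its condition survives by the induction hypothesis. For (ii), an already-$\out$ argument $w$ can only be endangered by acquiring a fresh $\inn$-attacker $Z$; but $\MMLab(Z) = \MMLab(X)+2$, whereas $w$ was labelled $\out$ by an $\inn$-argument dequeued no later than $X$, so by Lemma~\ref{lemma-a1-sorted-removing} its existing minimum attacker value equals $\MMLab(w)-1 \leq \MMLab(X) < \MMLab(Z)$; the new attacker therefore cannot lower the minimum, and the condition is preserved.

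The heart of the argument, and the main obstacle, is the direct verification of cases (iii) and (iv), which is exactly where the FIFO discipline is indispensable. For a freshly $\out$-labelled $Y$, I must show that $X$ realises the minimum over $Y$'s $\inn$-attackers: since $Y$ was still non-$\out$ when $X$ was dequeued, $X$ is the \emph{first} $\inn$-attacker of $Y$ to be removed, so every other $\inn$-attacker of $Y$ present is either removed after $X$ (Lemma~\ref{lemma-a1-sorted-removing}) or still queued and hence added after $X$ (Lemma~\ref{lemma-a1-sorted-adding}); in either case it has $\MMLab \geq \MMLab(X)$, so the minimum is $\MMLab(X)$ and $\MMLab(Y) = \MMLab(X)+1$ is correct. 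For a freshly $\inn$-labelled $Z$, all of its attackers are $\out$ at the moment its count reaches $0$ (by the reasoning in the proof of Lemma~\ref{lemma-a1-admissible}), and $Y$ is the \emph{last} of them to be so labelled; since each $\out$-attacker of $Z$ inherits its number from an $\inn$-argument dequeued no later than $X$, Lemma~\ref{lemma-a1-sorted-removing} yields that $Y$ realises the maximum, whence $\MMLab(Z) = \MMLab(Y)+1$ is correct. The delicate point throughout is that the single pass assigns each $\MMLab$-value exactly once and never recomputes it, so I must rule out any later-appearing attacker that could have altered a $min$ or a $max$ — and this is precisely what the monotone enqueue/dequeue ordering of Lemmas~\ref{lemma-a1-sorted-adding} and~\ref{lemma-a1-sorted-removing} guarantees.
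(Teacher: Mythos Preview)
Your proposal is correct and takes essentially the same approach as the paper: induction on while-loop iterations, with the inductive step split into the same four cases (previously versus freshly $\inn$/$\out$-labelled arguments), each resolved via admissibility (Lemma~\ref{lemma-a1-admissible}) together with the FIFO ordering guarantees of Lemmas~\ref{lemma-a1-sorted-adding} and~\ref{lemma-a1-sorted-removing}. If anything, your handling of case~(iii) is slightly more careful than the paper's, which asserts that $X$ is the \emph{only} $\inn$-labelled attacker of $Y$ at line~25, whereas you correctly allow for other $\inn$-attackers still waiting in the queue and bound their numbers via the ordering lemmas.
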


\begin{proof}
We prove this by induction over the number of loop iterations.

As for the basis of the induction (n=1), let us consider the first loop 
iteration. This is just after the for loop of lines 9-18 has finished. We need 
to prove that $\MMLab$ is a correct min-max numbering of $\Lab$ According to 
the definition of a min-max numbering (Definition \ref{def-min-max-numbering}) 
we need to prove that for every $x$ in $\Arguments$:
\begin{enumerate}
  \item if $\Lab(x) = \inn$ then $\MMLab(x) = 
	max(\{\MMLab(y) | y$ attacks $x$ and $\Lab(y) = \out\}) + 1$ \\
	Suppose $\Lab(x) = \inn$. This means that $x$ has been labelled $\inn$
	by the for loop of lines 9-18, which implies that $x$ does not have
	any attackers and is numbered 1. That is, $\MMLab(x) = 1$ and
	$max(\{\MMLab(y) | y$ attacks $x$ and $\Lab(y) = \out\}) = 0$
	(by definition). Therefore $\MMLab(x) = 
	max(\{\MMLab(y) | y$ attacks $x$ and $\Lab(y) = \out\}) + 1$
  \item if $\Lab(x) = \out$ then $\MMLab(x) =
	min(\{\MMLab(y) | y$ attacks $x$ and $\Lab(y) = \inn\}) + 1$ \\
	This is trivially the case, as at the end of the for loop (lines 9-18)
	no argument is labelled out.
\end{enumerate}
   
As for the induction step, suppose that at the start of a particular loop
iteration, $\MMLab$ is a correct min-max numbering of $\Lab$. We need to 
prove that if there is a next loop iteration, then at the start of this
next loop iteration it is still the case that $\MMLab$ is a correct min-max
numbering of $\Lab$. For this, we need to prove that at the end of the current 
loop iteration, for any $x \in \Arguments$ it holds that:
\begin{enumerate}
  \item if $\Lab(x) = \inn$ then $\MMLab(x) =
	max(\{\MMLab(y) | y$ attacks $x$ and $\Lab(y) = \out\})+1$ \\
	We distinguish two cases:
	\begin{enumerate}
	  \item $x$ was already labelled $\inn$ at the start of the current 
		loop iteration. Then, as $\Lab$ is an admissible labelling at 
		each point of the algorithm (Lemma \ref{lemma-a1-admissible})
		each attacker $y$ of $x$ is labelled $\out$ by $\Lab$. These 
		attackers are still labelled $\out$ at the end of the current
		loop iteration (once an argument is labelled $\out$, it stays 
		labelled $\out$). Also, the value $\MMLab(y)$ of these $\out$
		labelled attackers remains unchanged. Hence, from the fact that 
		$\MMLab(x) = max(\{\MMLab(y) | y$ attacks $x$ and $\Lab(y)
		= \out\})+1$ at the start of the current iteration, it follows 
		that $\MMLab(x) = max(\{\MMLab(y) | y$ attacks $x$ and 
		$\Lab(y) = \out\}) + 1$ at the end of the current iteration.
	  \item $x$ became labelled $\inn$ during the current loop iteration.
		In that case, $x$ was labelled $\inn$ at line 31 (with $Z=x$).
		So $Z=x$ in $\MMLab(Z) = \MMLab(Y) + 1$ (line 32).
		We therefore need to show that $\MMLab(Y) =
		max(\{\MMLab(y) | y$ attacks $Z$ and $\Lab(y) = \out\})$.
		As $Y$ is an $\out$ labelled attacker of $Z$, we already know 
		that $max(\{\MMLab(y) | y$ attacks $Z$ and $\Lab(y) = \out\})$
		will be \emph{at least} $\MMLab(Y)$. We now proceed to show 
		that $max(\{\MMLab(y) | y$ attacks $Z$ and $\Lab(y) = \out\})$
		will be \emph{at most} $\MMLab(Y)$. That is, for each $\out$
		labelled attacker $y$ of $Z$ we show that 
		$\MMLab(y) \leq \MMLab(Y)$.
		Let $Y'$ be an arbitrary $\out$ labelled attacker of $Z$. 
		Let $X$ be the $\inn$ labelled attacker of $Y$ (line 22 of the 
		current loop iteration) and let $X'$ be the $\inn$ labelled 
		attacker of $Y'$ (line 22 of the current or a previous loop 
		iteration). We distinguish two cases:
		\begin{itemize}
		  \item $X' = X$ \\
			In that case, from the fact that
			$\MMLab(Y) = \MMLab(X) + 1$ (line 26) and
			$\MMLab(Y') = \MMLab(X') + 1$ (line 26)
			it follows that $\MMLab(Y') = \MMLab(Y)$ so 
			(trivially) also that $\MMLab(Y') \leq \MMLab(Y)$.
		  \item $X' \neq X$ \\
			As $X$ was removed from $\mathtt{unproc\_in}$ during 
			the current loop iteration, it follows that $X'$ was 
			removed from $\mathtt{unproc\_in}$ during one of the 
			previous loop iterations. This means that $X'$ was 
			removed from $\mathtt{unproc\_in}$ \emph{before} $X$
			was removed from $\mathtt{unproc\_in}$, which implies
			(Lemma \ref{lemma-a1-sorted-removing}) that
			$\MMLab(X') \leq \MMLab(X)$. From the fact that
			$\MMLab(Y) = \MMLab(X) + 1$ (line 26) and
			$\MMLab(Y') = \MMLab(X') + 1$ (line 26)
			it follows that $\MMLab(Y') \leq \MMLab(Y)$.
		\end{itemize}
		As we now observed that $\MMLab(x)$ is the correct min-max 
		number of $x$ at the moment it was assigned (line 32) we can
		use similar reasoning as at the previous point (point (a)) to 
		obtain that it is still the correct min-max number at the end
		of the current loop iteration.
	\end{enumerate}
  \item if $\Lab(x) = \out$ then $\MMLab(x) = min(\{\MMLab(y) | y$
	attacks $x$ and $\Lab(y) = \inn\})+1$ \\
	We distinguish two cases:
	\begin{enumerate}
	  \item $x$ was already labelled $\out$ at the start of the current 
		loop iteration. In that case, our induction hypothesis that
		the min-max numbers are correct at the start of the current
		loop iteration implies that $\MMLab(x) = 
		min(\{\MMLab(y) | y$ attacks $x$ and $\Lab(y) = \inn\})+1$
		at the start of the current loop iteration. As the current
		loop iteraton does not change the value of $\MMLab(x)$ (once 
		a value for $\MMLab(x)$ is assigned, the algorithm never
		changes it) this value will still be the same at the end of
		the current loop iteration. We therefore only need to verify 
		that this value is still correct at the end of the current loop
		iteration. For this, we need to be sure that any newly $\inn$ 
		labelled argument (that is, an argument that became labelled 
		$\inn$ during the current loop iteration) does not change the 
		value of $min(\{\MMLab(y) | y$ attacks $x$ and $\Lab(y) = 
		\inn\})$. Let $Z$ be a newly $\inn$ labelled attacker of $x$ 
		(line 31).  Then $Z$ was added to the rear of 
		$\mathtt{unproc\_in}$ (line 30). Let $Z'$ be an arbitrary 
		$\inn$ labelled attacker of $x$. We distinguish two cases:
		\begin{itemize}
		  \item $Z' = Z$ \\
			In that case, it directly follows that
			$\MMLab(Z') = \MMLab(Z)$ so (trivially) also that 
			$\MMLab(Z') \leq \MMLab(Z)$.
		  \item $Z' \neq Z$ \\
			In that case, it follows that $Z'$ was added to 
			$\mathtt{unproc\_in}$ \emph{before} $Z$ was added
			to $\mathtt{unproc\_in}$. 
			Lemma \ref{lemma-a1-sorted-adding} then implies that
			$\MMLab(Z') \leq \MMLab(Z)$.
		\end{itemize}
		In both cases, we obtain that $\MMLab(Z') \leq \MMLab(Z)$.
		This means that whenever $x$ gets a new $\inn$ labelled 
		attacker $min(\{\MMLab(y) | y$ attacks $x$ and 
		$Lab(y) = \inn\})$ does not change. Therefore, the value of 
		$\MMLab(x)$ is still the correct min-max number of $x$ at 
		the end of the current loop iteration.
	  \item $x$ became labelled $\out$ during the current loop iteration.
		This can only have happened at line 25, so $x=Y$. 
		$\MMLab(Y)$ is then assigned $\MMLab(X)+1$ at line 26. 
		In order for $\MMLab(Y)$ to be a correct min-max number, 			we need to verify that $\MMLab(X) = min(\{\MMLab(y) | y$
		attacks $Y$ and $\Lab(y) = \inn\})$.
		This is the case because at line 25, $X$ is the only $\inn$
		labelled attacker of $Y$ (otherwise $Y$ would have been 
		labelled $\out$ before). As we have observed that $\MMLab(Y)$
		is the correct min-max value at the moment it was assigned,
		we can use similar reasoning as at the previous point 
		(point (a)) to obtain that it is still the correct min-max
		number at the end of the current loop iteration.
	\end{enumerate}
\end{enumerate}
\end{proof}

In order for a labelling to be strongly admissible, its min-max numbering
has to contain natural numbers only (no $\infty$). We therefore proceed to
show the absence of $\infty$ in an inductive way. That is, we show the 
absence of $\infty$ at the start of each iteration of the while loop. We then 
later need to do a bit of additional work to show the absence of $\infty$ at 
the moment we jump out of the while loop using the return statement.

\begin{lemma} \label{lemma-a1-no-infty}
Let $\AF = (\Arguments, \attack)$ be an argumentation framework and 
let $A$ be an argument in the grounded extension of $\AF$. Let both $\AF$ and
$A$ be given as input to Algorithm \ref{alg-construct}. At the start of each 
iteration of the while loop at lines 21-37, it holds that for each $\inn$ or 
$\out$ labelled argument $x \in \Arguments$, $\MMLab(x)$ is a natural number 
(no $\infty$)
\end{lemma}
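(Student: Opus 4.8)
The plan is to prove this by induction over the number of iterations of the while loop, mirroring the structure of Lemma \ref{lemma-a1-mm-correct}. The claim is that at the start of each iteration, every \inn or \out labelled argument has a finite min-max number. For the basis (first iteration, just after the for loop of lines 9-18 has finished), every \inn labelled argument $x$ was labelled at line 14 and assigned $\MMLab(x) = 1$ at line 15, which is a natural number, and no argument is yet labelled \out. Hence the claim holds trivially at the start of the first iteration.

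For the induction step, I would assume that at the start of the current loop iteration every \inn or \out labelled argument has a finite $\MMLab$ value, and then show that any argument whose label (and hence $\MMLab$ value) is assigned \emph{during} the current iteration also receives a finite number. The key observation is that the algorithm only ever assigns $\MMLab$ values by adding $1$ to an already-existing $\MMLab$ value: an \out labelled argument $Y$ receives $\MMLab(Y) = \MMLab(X)+1$ at line 26, where $X$ is the argument removed from \texttt{unproc\_in} at line 22; and a newly \inn labelled argument $Z$ receives $\MMLab(Z) = \MMLab(Y)+1$ at line 32. Since once a value of $\MMLab$ is assigned the algorithm never changes it, the induction hypothesis guarantees $\MMLab(X)$ is a finite natural number (as $X$ was labelled \inn and given its number in a previous iteration or in the for loop), so $\MMLab(Y) = \MMLab(X)+1$ is finite, and consequently $\MMLab(Z) = \MMLab(Y)+1$ is finite as well. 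Thus no argument ever acquires the value $\infty$, and in particular at the start of the next iteration all \inn and \out labelled arguments have finite min-max numbers.

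I expect the main subtlety — though not a genuine obstacle — to be handling the provenance of the argument $X$ at line 22 carefully: I must confirm that $X$ was genuinely labelled \inn and assigned a finite $\MMLab$ value at an earlier point, so that the induction hypothesis applies to it. This follows because $X$ is taken from \texttt{unproc\_in}, and every argument placed in \texttt{unproc\_in} is labelled \inn (at line 14 or line 31) and numbered (at line 15 or line 32) at the moment of insertion, in a strictly earlier iteration or in the initial for loop. Lemma \ref{lemma-a1-min-1} already confirms that such arguments carry a value of at least $1$, which is finite, so the induction hypothesis is genuinely available for $X$. The remaining reasoning is then purely a matter of noting that finiteness is preserved under the operation of adding $1$, which is the essential point that distinguishes this lemma from the correctness lemma and explains why no $\infty$ can ever be introduced.
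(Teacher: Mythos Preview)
Your proposal is correct and follows essentially the same approach as the paper's proof: induction over the iterations of the while loop, with the basis handled by the for-loop assignment $\MMLab(x)=1$ and the absence of any $\out$ labels, and the step handled by tracing the chain $\MMLab(Z)=\MMLab(Y)+1=\MMLab(X)+2$ back to an argument $X$ whose finite number is guaranteed by the induction hypothesis. The paper organises the step slightly more explicitly by separately treating the ``already labelled'' versus ``newly labelled'' sub-cases for both $\inn$ and $\out$, but the substance is the same.
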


\begin{proof}
We prove this by induction over the number of iterations of the while loop 
at lines 21-37. \newline
As for the basis of induction(n=1), let us consider the first loop iteration. 
This is just after the for loop at lines 9-18 has finished. We need to prove 
that for each $\inn$ or $\out$ labelled argument $x$ $\in Ar$, $\MMLab(x)$ is 
a natural number. We therefore need to prove that:
\begin{enumerate}
  \item if $\Lab(x) = \inn$ then $\MMLab(x) \neq \infty$ \\
	Let $x$ be labelled $\inn$ by the for loop at lines 9-18. 
	This can only have happened at line 14. According to line 15, 
	it then follows that $\MMLab(x) = 1$. Hence $\MMLab(x) \neq \infty$.
  \item if $\Lab(x) = \out$ then $\MMLab(x) \neq \infty$ \\
	This is trivially the case as the end of the for loop at lines 9-18, 
	no argument is labelled $\out$. 
\end{enumerate}
As for the induction step, suppose that at the start of a particular loop 
iteration, for each $\inn$ or $\out$ labelled argument $x \in Ar$, $\MMLab(x)$
is a natural number. We therefore need to prove that by the end of the 
iteration (and therefore also at the start of the next loop iteration) 
it holds that:
\begin{enumerate}
  \item if $\Lab(x) = \inn$ then $\MMLab(x) \neq \infty$ \\
	We distinguish two cases:
	\begin{itemize}
	  \item $x$ was already labelled $\inn$ at the start of the current 
		loop iteration. From the induction hypothesis it follows that 
		$\MMLab(x) \neq \infty$ at the start of the current iteration.
		As Algorithm \ref{alg-construct} does not change any values of
		$\MMLab$ once these have been assigned, it follows that 
		$\MMLab(x) \neq \infty$ will hold at the end of the current 
		loop iteration.
	  \item $x$ became labelled $\inn$ during the current loop iteration.
		In the case $x$ was labelled $\inn$ at line 31 (with $Z = x$).
		Following line 32, $\MMLab(X) = \MMLab(Y) + 1$. According to
		line 26, $\MMLab(Y) = \MMLab(X) + 1$ with X being an attacker
		of Y that became labelled $\inn$ during a previous iteration of
		the while loop. From our induction hypothesis it follows that
		$\MMLab(X) \neq \infty$. As $\MMLab(Y) = \MMLab(X)+1$ it
		follows that $\MMLab(Y) \neq \infty$. As $\MMLab(Z) = 
		\MMLab(Y)+1$ it follows that $\MMLab(Z) \neq \infty$.
		That is (as $x = Z$) $\MMLab(x) \neq \infty$.
	\end{itemize}
  \item if $\Lab(x) = \out$ them $\MMLab(x) \neq \infty$ \\
	We distinguish two cases:
	\begin{itemize}
	  \item $x$ was already labelled $\out$ at the start of the current 
		loop iteration. From the induction hypothesis it follows that 
		$\MMLab(x) \neq \infty$ at the start of the current iteration.
		As Algorithm \ref{alg-construct} does not change any values of 
		$\MMLab$ once these have been assigned, it follows that 
		$\MMLab(x) \neq \infty$ will hold at the end of the current 
		loop iteration.
	  \item $x$ became labelled $\out$ during the current loop iteration.
		This can only have happened at line 25 (with $x = Y$). 
		$\MMLab(Y)$ is then assigned $\MMLab(X) + 1$ at line 26, 
		with $X$ being an attacker of $Y$ that became labeled $\inn$ 
		during a previous iteration of the while loop. From the 
		induction hypothesis, it follows that $\MMLab(X) \neq \infty$. 
		As $\MMLab(Y) = \MMLab(X)+1$ (line 26) it follows that 
		$\MMLab(Y) \neq \infty$. That is (as $x = Y$) $\MMLab(x) 
		\neq \infty$.
	\end{itemize}
\end{enumerate}
\end{proof}

Although most of our results so far are about the algorithm itself,
we also need an additional theoretical property of grounded semantics,
stated in the following lemma.

\begin{lemma} \label{lemma-grounded}
Let $\AF = (\Arguments, \attack)$ be an argumentation framework. It holds 
that the grounded labelling of $\AF$ is the only argument labelling that is 
both strongly admissible and complete.
\end{lemma}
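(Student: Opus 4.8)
The plan is to reduce the statement to the two lattice characterisations already available in the excerpt: that the grounded labelling is the top element of the lattice of strongly admissible labellings \cite{CD19a}, and that it is the smallest (w.r.t.\ $\sqsubseteq$) complete labelling (Definition \ref{def-lab-semantics-ii}). Together with the antisymmetry of $\sqsubseteq$, these pin down the grounded labelling uniquely.

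First I would establish the easy half, namely that the grounded labelling $\Lab_{gr}$ itself satisfies both properties. Completeness is immediate, since by Definition \ref{def-lab-semantics-ii} the grounded labelling is by definition a complete labelling. Strong admissibility follows from the cited fact that the strongly admissible labellings form a lattice (w.r.t.\ $\sqsubseteq$) whose top element is precisely the grounded labelling \cite{CD19a}; being the top element, it is in particular a member, hence strongly admissible. This shows existence: at least one labelling (the grounded one) is both strongly admissible and complete.

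Next I would prove uniqueness. Let $\Lab$ be an arbitrary labelling that is both strongly admissible and complete. On the one hand, since $\Lab$ is strongly admissible and $\Lab_{gr}$ is the top element of the lattice of strongly admissible labellings, we have $\Lab \sqsubseteq \Lab_{gr}$. On the other hand, since $\Lab$ is complete and $\Lab_{gr}$ is the smallest complete labelling w.r.t.\ $\sqsubseteq$, we have $\Lab_{gr} \sqsubseteq \Lab$. Combining these gives $\Lab \sqsubseteq \Lab_{gr}$ and $\Lab_{gr} \sqsubseteq \Lab$ simultaneously.

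It then remains to invoke the antisymmetry of $\sqsubseteq$ to conclude $\Lab = \Lab_{gr}$. Concretely, $\Lab \sqsubseteq \Lab_{gr}$ and $\Lab_{gr} \sqsubseteq \Lab$ unfold (via Definition \ref{def-lab-operators}) to $\inn(\Lab) = \inn(\Lab_{gr})$ and $\out(\Lab) = \out(\Lab_{gr})$; since a labelling is a partition of $\Arguments$ into its $\inn$, $\out$ and $\undec$ parts, equality of the $\inn$ and $\out$ sets forces equality of the $\undec$ sets as well, so $\Lab$ and $\Lab_{gr}$ agree everywhere. I do not anticipate a genuine obstacle here: the whole argument is a two-sided squeeze between the top of the strongly admissible lattice and the bottom of the complete lattice. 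The only point requiring a little care is making the antisymmetry step explicit rather than treating it as folklore, i.e.\ spelling out that matching $\inn$ and $\out$ sets determines the labelling; everything else is a direct application of the previously cited results.
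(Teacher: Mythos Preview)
Your proposal is correct and follows essentially the same approach as the paper: show that the grounded labelling is both strongly admissible and complete, then for an arbitrary $\Lab$ with both properties derive $\Lab \sqsubseteq \Lab_{gr}$ from maximality among strongly admissible labellings and $\Lab_{gr} \sqsubseteq \Lab$ from minimality among complete labellings, and conclude $\Lab = \Lab_{gr}$. Your only addition is making the antisymmetry of $\sqsubseteq$ explicit, which the paper leaves implicit.
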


\begin{proof}
First of all, it has been observed that the grounded labelling is both strongly 
admissible \cite{CD19a} and complete (Definition \ref{def-lab-semantics-ii}).
We proceed to prove that it is also the \emph{only} argument labelling that is 
both strongly admissible and complete. Let $\Lab$ be an argument labelling that 
is both strongly admissible and complete. From the fact that the grounded 
labelling ($\Lab_{gr}$) is the unique biggest strongly admissible labelling 
\cite{CD19a} it follows that $\Lab \sqsubseteq \Lab_{gr}$.
From the fact that the grounded labelling is the unique smallest complete
labelling (Definition \ref{def-lab-semantics-ii}) it follows that 
$\Lab_{gr} \sqsubseteq \Lab$. Together, this implies that $\Lab = \Lab_{gr}$.
\end{proof}

If we would not finish the algorithm after hitting the main argument and
instead continue to execute the algorithm until $\mathtt{unproc\_in}$ is
empty, we would be computing the grounded labelling with its associated
min-max numbering as stated by the following lemma.

\begin{lemma} \label{lemma-a1-modified-grounded}
If in Algorithm \ref{alg-construct} one would comment out line 16 and line 33 
and add the following line (line 41) at the end:\\
return $\Lab$ and $\MMLab$ \\
then the output of the thus modified algorithm would be the grounded 
labelling $\Lab$ of $\AF$, together with its min-max numbering $\MMLab$.
\end{lemma}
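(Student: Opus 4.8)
The plan is to show that the labelling $\Lab$ returned by the modified algorithm is \emph{both} strongly admissible and complete, and then to invoke Lemma \ref{lemma-grounded} to conclude that $\Lab$ must be the grounded labelling; correctness of $\MMLab$ as the associated min-max numbering then comes almost for free. The first thing I would note is that commenting out lines 16 and 33 and appending a final return statement merely removes the two early exit points and lets the while loop run until $\mathtt{unproc\_in}$ is empty; it changes nothing about the update logic inside the loop. Consequently the invariant lemmas proved for Algorithm \ref{alg-construct} --- that $\Lab$ is admissible at every stage (Lemma \ref{lemma-a1-admissible}), that $\MMLab$ is a correct min-max numbering of $\Lab$ at the start, hence also at the end, of every iteration (Lemma \ref{lemma-a1-mm-correct}), and that no argument is numbered $\infty$ (Lemma \ref{lemma-a1-no-infty}) --- carry over verbatim, their inductions simply running through the additional iterations. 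Termination follows exactly as in Theorem \ref{th-a1-terminates}. Observe also that in the modified algorithm the input $A$ plays no role at all, since the only lines mentioning $A$ are the ones commented out.

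Next I would pin down the state at termination, when $\mathtt{unproc\_in}$ is empty. Since an argument is pushed onto $\mathtt{unproc\_in}$ exactly when it is labelled $\inn$, and the queue is empty at the end, every $\inn$-labelled argument has been removed and thus fully processed; and when an $\inn$-labelled argument is processed, the inner for loop relabels every argument it attacks to $\out$. From this I can read off the completeness conditions of Definition \ref{def-lab-semantics-i} for an arbitrary $x$ with $\Lab(x) = \undec$. First, $x$ can have no $\inn$-labelled attacker: such an attacker, once processed, would have forced $x$ to $\out$, contradicting $\Lab(x) = \undec$; hence every non-$\undec$ attacker of $x$ is $\out$. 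Second, $x$ must have at least one $\undec$ attacker: the key invariant is that, while $x$ stays $\undec$, $\mathtt{undec\_pre}(x)$ equals the number of attackers of $x$ not yet labelled $\out$, being decremented at line 28 precisely each time an attacker of $x$ is turned $\out$. As $x$ stays $\undec$ for the whole run (labels never change once set to $\inn$ or $\out$), if \emph{all} of its attackers had become $\out$ then $\mathtt{undec\_pre}(x)$ would have reached $0$ and $x$ would have been labelled $\inn$ at line 31, a contradiction. An argument with no attackers is labelled $\inn$ in the initial for loop, so an $\undec$ argument has at least one attacker to begin with. Therefore some attacker of $x$ remains $\undec$, and together with admissibility (Lemma \ref{lemma-a1-admissible}) this shows that $\Lab$ is a complete labelling.

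It then remains to observe that $\Lab$ is strongly admissible. By Lemma \ref{lemma-a1-mm-correct} the function $\MMLab$ is a correct min-max numbering of the admissible labelling $\Lab$ at termination, and by the uniqueness of min-max numberings it is \emph{the} min-max numbering; by Lemma \ref{lemma-a1-no-infty} it takes only natural-number values, so by Definition \ref{def-strongly-adm-lab} $\Lab$ is strongly admissible. Having shown $\Lab$ to be both strongly admissible and complete, Lemma \ref{lemma-grounded} yields that $\Lab$ is the grounded labelling, and $\MMLab$ is its min-max numbering, as required.

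I expect the main obstacle to be the second completeness condition --- arguing that every terminal $\undec$ argument retains an $\undec$ attacker. This is the one place where a genuine invariant about $\mathtt{undec\_pre}$ is needed rather than a direct appeal to a previous lemma, and care is required to check that each relevant decrement really occurred while $x$ was still $\undec$ (so that the counter faithfully tracks the not-yet-$\out$ attackers of $x$), and that the no-attacker corner case is absorbed by the initial for loop.
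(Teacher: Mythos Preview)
Your proposal is correct and follows essentially the same approach as the paper: establish that the terminal labelling is both strongly admissible (via the carried-over invariants of Lemmas \ref{lemma-a1-admissible}, \ref{lemma-a1-mm-correct}, \ref{lemma-a1-no-infty}) and complete (via the $\mathtt{undec\_pre}$ bookkeeping and the fact that every $\inn$-labelled argument is eventually processed), then invoke Lemma \ref{lemma-grounded}. The paper organises the completeness argument by first proving the two implications ``all attackers $\out$ $\Rightarrow$ $\inn$'' and ``some attacker $\inn$ $\Rightarrow$ $\out$'' and then taking contrapositives, whereas you argue directly about an $\undec$ argument, but the content is the same.
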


\begin{proof}
We first observe that $\Lab$ is a strongly admissible labelling. 
This follows from the facts that 
\begin{enumerate}
  \item $\Lab$ is an admissible labelling\\
	This can be proved in a similar way as Lemma \ref{lemma-a1-admissible}.
  \item $\MMLab$ is a correct min-max numbering of $\Lab$\\
	This can be proved in a similar way as Lemma \ref{lemma-a1-mm-correct}.
  \item $\MMLab$ does not contain $\infty$ (natural numbers only)\\
	This can be proved in a similar way as \ref{lemma-a1-no-infty}.
\end{enumerate}

We proceed to show that $\Lab$ is also a complete labelling.
For this, we first show the following two properties:
\begin{enumerate}
  \item if $\Lab(y) = \out$ for each attacker $y$ of $x$ then $\Lab(x) = \inn$\\
	Suppose $\Lab(y) = \out$ for each attacker of $x$. This means that at 
	the end of the algorithm, it holds that $\mathtt{undec\_pre}(x) = 0$ 
	which implies that $x$ became labelled $\inn$ (either at line 14 or 
	at line 31) at the moment when $\mathtt{undec\_pre}(x)$ became $0$ 
	(at either line 11 or line 28)
  \item if $\Lab(y) = \inn$ for some attacker $y$ of $x$ then $\Lab(x) = \out$\\
	Suppose $\Lab(y) = \inn$ for some attacker $y$ of $x$. At the end 
	of the algorithm, it holds that $\mathtt{unproc\_in}$ is empty. 
	As each $\inn$ labelled argument in $\Lab$ (such as $y$) was added 
	to $\mathtt{unproc\_in}$ when it became labelled $\inn$, this implies 
	that each $\inn$ labelled argument in $\Lab$ (in particular $y$) was 
	subsequently removed from $\mathtt{unproc\_in}$. This removal can 
	only have happened at line 23, which implies (line 24 and 25) that
	each argument that is attacked by $y$ (in particular $x$) is labelled 
	$\out$.
\end{enumerate}
Suppose $\Lab(x) = \undec$. From point 1 and the fact that $\Lab(x) \neq \inn$
we obtain that
(3) there is an attacker $y$ of $x$ such that $\Lab(y) \neq \out$. 
From point 2 and the fact that $\Lab(x) \neq \out$ we obtain that
(4) there is no attacker $y$ of $x$ such that $\Lab(y) = \inn$.
From point (3) and (4) it follows that
\begin{itemize}
  \item if $\Lab(x) = \undec$ then there is a $y$ that attacks $x$ such that
	$\Lab(y) = \undec$ and for each $y$ that attacks $x$ such that
	$\Lab(y) \neq \undec$ it holds that $\Lab(y) = \out$
\end{itemize}
This, together with the fact that $\Lab$ is an admissible labelling implies
that $\Lab$ is a complete labelling (Definition \ref{def-lab-semantics-i}).

From the thus obtained facts that $\Lab$ is both a strongly admissible 
labelling and a complete labelling it follows (Lemma \ref{lemma-grounded})
that $\Lab$ is the grounded labelling.
\end{proof}

Using the above lemmas, we now proceed to show the correctness of the algorithm.

\begin{theorem} \label{th-a1-correct}
Let $\AF = (\Arguments, \attack)$ be an argumentation framework and 
let $A$ be an argument in the grounded extension of $\AF$. Let both $\AF$ 
and $A$ be given as input to Algorithm \ref{alg-construct}. Let $\Lab$ and 
$\MMLab$ be the output of the algorithm. It holds that $\Lab$ is a strongly 
admissible labelling that labels $A$ $\inn$ and has $\MMLab$ as its min-max 
numbering.
\end{theorem}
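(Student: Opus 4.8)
The plan is to verify the three requirements of the statement separately, namely that $\Lab$ is admissible, that $A \in \inn(\Lab)$, and that $\MMLab$ is a correct (hence, by uniqueness, \emph{the}) min-max numbering of $\Lab$ taking only natural values. The first two are quick. Admissibility at the moment of return is immediate from Lemma \ref{lemma-a1-admissible}, which holds at every stage of the execution. For $A \in \inn(\Lab)$, I would invoke Lemma \ref{lemma-a1-modified-grounded}: the modified algorithm (with lines 16 and 33 removed) computes the grounded labelling $\Lab_{gr}$, and since $A$ is in the grounded extension we have $A \in \inn(\Lab_{gr})$, so $A$ becomes labelled $\inn$ at some point of that execution. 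As the original algorithm runs identically to the modified one up to the first triggered \texttt{return}, and this \texttt{return} fires precisely when the argument just labelled $\inn$ is $A$, the original algorithm terminates (Theorem \ref{th-a1-terminates}) at line 16 or line 33 having just set $\Lab(A) = \inn$; in particular it never reaches the error case.

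The real work is showing that $\MMLab$ is still correct and $\infty$-free at the \emph{exact} instant we jump out via a \texttt{return}, since Lemmas \ref{lemma-a1-mm-correct} and \ref{lemma-a1-no-infty} only guarantee this at the \emph{start} of a while-loop iteration, whereas a \texttt{return} typically happens in the middle of one. Rather than carry out a delicate mid-iteration case analysis, I would compare the return-point state against the grounded computation of Lemma \ref{lemma-a1-modified-grounded}. Because the two executions coincide up to the \texttt{return}, and because neither $\inn$/$\out$ labels nor assigned $\MMLab$ values are ever overwritten, the return-point labelling $\Lab$ satisfies $\inn(\Lab) \subseteq \inn(\Lab_{gr})$ and $\out(\Lab) \subseteq \out(\Lab_{gr})$, while $\MMLab$ agrees with the grounded numbering $\MMLab_{gr}$ on all of $\inn(\Lab) \cup \out(\Lab)$. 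I would then transfer correctness from $\MMLab_{gr}$ (which is a correct, $\infty$-free numbering of $\Lab_{gr}$ by Lemma \ref{lemma-a1-modified-grounded} together with $\Lab_{gr}$ being strongly admissible) down to $\MMLab$ for $\Lab$.

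For an $\inn$-labelled $x$ the transfer is direct: by admissibility the attackers of $x$ are exactly its $\out$-labelled attackers in both $\Lab$ and $\Lab_{gr}$, the two numberings agree on those attackers, and $\MMLab(x) = \MMLab_{gr}(x)$, so the defining equation (the maximum of the $\out$-attacker numbers, plus one) carries over verbatim. The delicate case, which I expect to be the main obstacle, is an $\out$-labelled $x$: its set of $\inn$-labelled attackers in $\Lab$ is merely a \emph{subset} of the corresponding set in $\Lab_{gr}$, so a priori the relevant minimum could strictly increase when we pass to the partial labelling. Here I would use that $x$ was set $\out$ at line 25 by a particular attacker $X$ with $\MMLab(x) = \MMLab(X) + 1$ (line 26); this $X$ is still $\inn$ in $\Lab$, and $\MMLab(X) = \MMLab_{gr}(x) - 1$ equals the minimum taken over the larger grounded set of $\inn$-attackers. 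Consequently every surviving $\inn$-attacker of $x$ in $\Lab$ has $\MMLab$-value at least $\MMLab(X)$, while $X$ itself attains it, so the minimum is unchanged and the $\min(\ldots)+1$ condition holds for $\Lab$ as well.

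Finally, absence of $\infty$ is immediate once correctness is in place, because $\MMLab = \MMLab_{gr}$ on the labelled arguments and $\MMLab_{gr}$ takes only natural values (as $\Lab_{gr}$ is strongly admissible, Definition \ref{def-strongly-adm-lab}). Combining the pieces: $\Lab$ is an admissible labelling whose min-max numbering $\MMLab$ contains no $\infty$, hence $\Lab$ is strongly admissible by Definition \ref{def-strongly-adm-lab}; moreover $A \in \inn(\Lab)$; and since an admissible labelling has a \emph{unique} min-max numbering, $\MMLab$ is indeed the min-max numbering of $\Lab$. The subset/minimum argument for $\out$-labelled arguments is the crux, as it is exactly the place where the partial (non-grounded) labelling produced by early termination could in principle have diverged in value from the full grounded labelling.
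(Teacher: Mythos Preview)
Your proof is correct, and it takes a genuinely different route from the paper's own argument.

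The paper proves point~2 (correctness of $\MMLab$) and point~3 (absence of $\infty$) by essentially re-running the induction steps of Lemmas~\ref{lemma-a1-mm-correct} and~\ref{lemma-a1-no-infty}, but stopping mid-iteration at the \texttt{return} rather than at the end of the iteration. This is straightforward but duplicates the case analyses of those lemmas.

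You instead treat the completed grounded run of Lemma~\ref{lemma-a1-modified-grounded} as a reference computation and argue by restriction: since the original and modified executions coincide up to the \texttt{return} and no label or number is ever overwritten, the return-point state satisfies $\Lab \sqsubseteq \Lab_{gr}$ with $\MMLab = \MMLab_{gr}$ on $\inn(\Lab)\cup\out(\Lab)$. You then transfer the defining equations of Definition~\ref{def-min-max-numbering} from $\MMLab_{gr}$ down to $\MMLab$. The $\inn$ case is immediate (the relevant attacker set is the same in both labellings), and for the $\out$ case you correctly identify the crux: the first $\inn$-labelled attacker $X$ that caused $x$ to become $\out$ already realises the global minimum $\MMLab_{gr}(x)-1$, so shrinking to the partial set of $\inn$-attackers in $\Lab$ cannot raise the minimum. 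This is precisely the property that the FIFO discipline (Lemmas~\ref{lemma-a1-sorted-adding}--\ref{lemma-a1-sorted-removing}) was designed to guarantee, but you obtain it here as a consequence of the correctness of $\MMLab_{gr}$ rather than by reproving the ordering argument.

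What your approach buys is the elimination of the repeated mid-iteration case analysis; what the paper's approach buys is that it does not need the full strength of Lemma~\ref{lemma-a1-modified-grounded} (in particular, it does not rely on the modified algorithm running all the way to completeness). Both are sound; yours is the more economical packaging once Lemma~\ref{lemma-a1-modified-grounded} is available.
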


\begin{proof}
We first observe that as $A$ is in the grounded extension of $\AF$, the 
modified algorithm of Lemma \ref{lemma-a1-modified-grounded} would have 
produced the grounded labelling, which labels $A$ in. This implies that
at some moment in Algorithm \ref{alg-construct}, line 16 or line 33 is 
trigered, meaning that $\Lab$ as returned by Algorithm \ref{alg-construct}
labels $A$ $\inn$. At the moment the return statement of line 16 or 33 is 
triggered, it holds that:
\begin{enumerate}
  \item $\Lab$ is an admissible labelling.\\
	This follows directly from Lemma \ref{lemma-a1-admissible}.
  \item $\MMLab$ is a correct min-max numbering of $\Lab$.\\
	To see that this is the case, we distinguish two cases:
	\begin{enumerate}
	  \item The return statement that was triggered was the one at line 16.
		In that case, $\MMLab$ is the correct min-max numbering
		of $\Lab$. The proof is similar to the first half of
		the proof of Lemma \ref{lemma-a1-mm-correct}.
	  \item The return statement that was triggered was the one at line 33.
		In that case, Lemma \ref{lemma-a1-mm-correct} tells us
		that the value of $\MMLab$ at the start of the last iteration
		of the while loop was a correct min-max numbering of the
		value of $\Lab$ at the start of the last iteration of the
		while loop. We then need to show that the value of
		$\MMLab$ at the time of the return statement (line 33)
		is still a correct min-max numbering of the value of $\Lab$
		at the time of the return statement (line 33). This can be
		proved in a similar way as is done in the second half of
		the proof of Lemma \ref{lemma-a1-mm-correct} (instead of
		going until the end of the loop iteration, one goes until
		the moment the return statement of line 33 is triggered).
	\end{enumerate}
  \item $\MMLab$ numbers each $\inn$ or $\out$ labelled argument with a 
	natural number (no $\infty$).\\
	To see that this is the case, we distinguish two cases:
	\begin{enumerate}
	  \item The return statement that was triggered is the one at line 16.
		In that case, for each $\inn$ or $\out$ labelled argument $x$
		it holds that $\MMLab(x) \neq \infty$. The proof is similar to
		the first half (the basis) of the proof of Lemma 
		\ref{lemma-a1-no-infty}.
	  \item The return statement that was triggered is the one at line 33.
		In that case, Lemma \ref{lemma-a1-no-infty} tells us that at
		the start of the last iteration of the while loop, $\MMLab(x)
		\neq \infty$ for each argument $x$ that was labelled $\inn$ or
		$\out$. We need to show that this is still the case at
		the time of the return statement (line 33). This can be
		proved in a similar was as is done in the second half of the
		proof of Lemma \ref{lemma-a1-no-infty} (instead of going until
		the end of the loop iteration, the idea is to go until the
		return statement of line 33 is triggered).
	\end{enumerate}
\end{enumerate}
\end{proof}

It turns out that the algorithm runs in polynomical time (more specific, in
cubic time).

\begin{theorem} \label{th-a1-polynomial}
Let $\AF = (\Arguments, \attack)$ be an argumentation framework and 
let $A$ be an argument in the grounded extension of $\AF$. Let both $\AF$ and 
$A$ be given as input to Algorithm \ref{alg-construct}. Let $\Lab$ and $\MMLab$
be the output of the algorithm. It holds that Algorithm \ref{alg-construct}
computes $\Lab$ and $\MMLab$ in O$(n)^3$ time
\end{theorem}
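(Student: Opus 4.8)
The plan is to bound the running time by analyzing the nested loop structure of Algorithm~\ref{alg-construct}, charging work to arguments and attack edges, and using the key observation established in the termination proof (Theorem~\ref{th-a1-terminates}) that each argument is added to, and hence removed from, $\mathtt{unproc\_in}$ at most once. Let $n = |\Arguments|$; I would first note that the number of attacks is at most $n^2$, so any quantity of the form ``sum over all attack edges'' is $O(n^2)$.

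First I would dispose of the initialization for loop (lines 9--18). This loop runs once per argument, and the only nontrivial step inside is computing $\mathtt{undec\_pre}(X) = |X^-|$ at line 11, which requires inspecting the attackers of $X$. Summed over all arguments, $\sum_{X \in \Arguments} |X^-|$ equals the total number of attacks, which is $O(n^2)$; all the remaining per-iteration work (assignments and the constant-time check at line 16) is $O(1)$ per argument, contributing $O(n)$. Hence the first loop costs $O(n^2)$.

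The main work is the while loop (lines 21--37). The crucial point is that the outer while loop executes at most $n$ times, because each iteration removes one argument from $\mathtt{unproc\_in}$ and, by the reasoning in Theorem~\ref{th-a1-terminates}, at most $n$ arguments are ever added. For a fixed selected argument $X$, the body iterates over $Y \in X^+$ (the middle for loop, lines 24--35), and for each such $Y$ it iterates over $Z \in Y^+$ (the inner for loop, lines 27--34). The inner body is $O(1)$ per $(Y,Z)$ pair. I would bound the cost of one while-loop iteration crudely by $O(n^2)$: the middle loop ranges over at most $n$ successors $Y$, and for each the inner loop ranges over at most $n$ successors $Z$, giving at most $n^2$ constant-time inner executions. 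Multiplying by the at-most-$n$ while-loop iterations yields $O(n^3)$ for the while loop, and combining with the $O(n^2)$ initialization gives the claimed $O(n^3)$ bound overall.

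The main obstacle, and the place where one must be slightly careful rather than merely multiplying worst-case loop bounds, is justifying that this crude product does not overcount in a way that would suggest a higher exponent, and conversely that $O(n^3)$ is genuinely attainable as stated. A tighter argument (which I would at least mention) charges the middle loop over edges: across the entire run, since each $\inn$-labelled $X$ is selected at most once, the middle for loop processes each attack edge $(X,Y)$ at most once, for a total of $O(n^2)$ executions of the middle body over the whole algorithm; each such execution then runs an inner loop of length at most $|Y^+| \le n$, giving $O(n^2 \cdot n) = O(n^3)$. This confirms the cubic bound and shows it is robust to the order of summation. One should also confirm the implicit assumption that each elementary step---set membership/label lookups such as $\Lab(Y) \neq \out$ at line 24, the successor-set enumerations $X^+$ and $Y^+$, and the arithmetic and array accesses for $\MMLab$ and $\mathtt{undec\_pre}$---can be performed in $O(1)$ time under a standard adjacency-based representation of $\AF$, since otherwise hidden logarithmic or linear factors could inflate the bound.
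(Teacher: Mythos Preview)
Your proof is correct and follows essentially the same approach as the paper: bound the while loop by $n$ iterations (using the observation that each argument enters $\mathtt{unproc\_in}$ at most once), bound each of the two nested for loops by $n$, and multiply to get $O(n^3)$. You are in fact slightly more careful than the paper, which treats the initialization loop as $O(n)$ without mentioning the cost of computing $|X^-|$ and does not discuss the $O(1)$ data-structure assumptions; your edge-charging refinement is a nice addition but not needed for the stated bound.
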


\begin{proof}
Let $n$ be the number of arguments in $\AF$ (that is, $n = |\Arguments|$).
The for loop (lines 9-18) can have at most $n$ iterations. The while loop
(lines 21-37) can also have at most $n$ iterations. This is because each
iteration of the while loop removes an argument from $\mathtt{unproc\_in}$,
which can be done $n$ times at most, given that no argument can be added to
$\mathtt{unproc\_in}$ more than once (this follows from line 31 and line 27).
For each iteration of the while loop, the outer for loop (lines 24-36) will
run at most $n$ times. Also, for each iteration of the outer for loop, the
inner for loop (lines 27-35) will run at most $n$ times. This means that the
total number of instructions executed by the while loop is of the order $n^3$
at most. This, combined with the earlier observed fact that the for loop of
lines 9-18 runs at most $n$ times brings the total complexity of Algorithm 
\ref{alg-construct} to $O(n + n^3) = O(n^3)$.
\end{proof}

\subsection{Algorithm 2} \label{subsec-prune}

The basic idea of Algorithm \ref{alg-prune} is to prune the part of the strongly
admissible labelling that is not needed, by identifying the part that actually 
is needed.  This is done in a top-down way, starting by including the main 
argument (which is labelled $\inn$), then including all its attackers (which 
are labelled $\out$), for each of which a minimally numbered $\inn$ labelled 
attacker is included, etc. The idea is to keep doing this until reaching the 
($\inn$ labelled) arguments that have no attackers. Each argument that has not 
been included by this process is unnecessary for the strongly admissible 
labelling and can be made $\undec$, resulting in a labelling that is smaller 
or equal to the strongly labelling labelling one started with.

\begin{algorithm}
\caption{Prune a strongly admissible labelling that labels $A$ $\inn$
and its associated min-max numbering.} \label{alg-prune}
\begin{algorithmic}[1]
\Statex \textbf{Input}: An argumentation framework $\AF=(\Arguments,\attack)$,
\Statex an argument $A \in \Arguments$ that is in the grounded extension 
	of $\AF$, A strongly admissible labelling $\Lab_I$ where $A \in \inn(\Lab_I)$ and the associated min-max numbering $\MMLabI$.
\Statex \textbf{Output}: A strongly admissible labelling $\Lab_O \sqsubseteq \Lab_I$ where $A \in \inn(\Lab_O)$,
\Statex the associated min-max numbering $\MMLabO$.
\Statex

\State  // We start with the type definitions
\State  $\Lab_O: \Arguments \rightarrow \{\inn, \out, \undec\}$
\State  $\MMLabO: \inn(\Lab) \cup \out(\Lab) \rightarrow \mathbb{N} \cup \{\infty\}$
\State  $\mathtt{unproc\_in}: [X_1, ... X_n]$
	($X_i \in \Arguments$ for each $1 \leq i \leq n$)
	// list of arguments
\State  // Initialize $\Lab_O$ and include the main argument
\State  $\Lab_O \leftarrow (\emptyset, \emptyset, \Arguments)$
	// $\Lab_O$ becomes the all-$\undec$ labelling
\State  $\mathtt{unproc\_in} \leftarrow [A]$
\State  $\Lab_O(A) \leftarrow \inn$
\State  $\MMLabO(A) \leftarrow \MMLabI(A)$
\State
\State // Next, process the other arguments in a top-down way
        \While{$\mathtt{unproc\_in}$ is not empty}
\State      let $X$ be the argument at the front of $\mathtt{unproc\_in}$
\State      remove $X$ from $\mathtt{unproc\_in}$
            \For{each attacker $Y$ of $X$}
\State          $\Lab_O(Y) \leftarrow \out$
\State          $\MMLabO(Y) \leftarrow \MMLabI(Y)$
		\If{there is no minimal (w.r.t $\MMLabI$) $\inn$ labelled 
		(w.r.t $\Lab_{I}$) attacker of $Y$ that is also labelled $\inn$
		by $Lab_O$}
\State              Let $Z$ be a minimal (w.r.t $\MMLabI$) in labelled 
		    (w.r.t $Lab_I$) attacker of $Y$
\State		    Add Z to the rear of $\mathtt{unproc\_in}$
\State		    $\Lab_O(Z) \leftarrow \inn$
\State		    $\MMLabO(Z) \leftarrow \MMLabI(Z)$
		\EndIf
            \EndFor
        \EndWhile
\end{algorithmic}
\end{algorithm}

To see how the algorithm works, consider again the argumentation framework
of Figure \ref{fig-example-AF}. Let $C$ be the main argument. Suppose the
input labelling $\Lab_I$ is $(\{A,C,D\}, \{B\}, \{E,F,G,H\})$ and its
associated input labelling numbering $\MMLabO$ is $\{(A:1), (B:2), (C:3),
(D:1)\}$.\footnote{The reader may have noticed that this was the output of
Algorithm \ref{alg-construct} for the example that was given in Section
\ref{subsec-construct}.} At the start of the first iteration of the while
loop, it holds that $\Lab_O = (\{C\}, \emptyset, \{A,B,D,E,F,G,H\})$,
$\MMLabO = \{(C:1)\}$ and $\mathtt{unproc\_in} = [C]$.
The first iteration of the while loop then removes $C$ from 
$\mathtt{unproc\_in}$ (line 14), labels its attacker $B$ $\out$ (line 16),
numbers $B$ with $2$ (line 17), adds $A$ to $\mathtt{unproc\_in}$ (line 20),
labels $A$ $\inn$ (line 21) and numbers $A$ with $1$ (line 22).
The second iteration of the while loop then removes $A$ from 
$\mathtt{unproc\_in}$ (line 14). However, as $A$ does not have any attackers,
the for loop (lines 15-24) is skipped. As $\mathtt{unproc\_in}$ is now empty,
the while loop is finished and the algorithm terminates, with $\Lab_O =
(\{A,C\}, \{B\}, \{D,E,F,G,H\})$ and $\MMLabO = \{(A:1), (B:2), (C:3)\}$
being its results.

We now proceed to prove some of the formal properties of the algorithm.
The first property to be proved is termination.

\begin{theorem} \label{th-a2-terminates}
Let $\AF = (\Arguments, \attack)$ be an argumentation framework, $A$ be an 
argument in the grounded extension of $\AF$, $\Lab_I$ be a strongly admissible 
labelling where $A$ is labelled in and $\MMLabI$ be the associated min-max 
numbering. Let $\AF$, $A$, $\Lab_I$ and $\MMLabI$ be given as input to 
Algorithm \ref{alg-prune}. It holds that the algorithm terminates.
\end{theorem}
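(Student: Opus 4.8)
The plan is to mirror the termination argument already given for Algorithm \ref{alg-construct} (Theorem \ref{th-a1-terminates}), adapting it to the top-down pruning structure of Algorithm \ref{alg-prune}. The whole proof hinges on the same three ingredients: that $\Arguments$ is finite, that each iteration of the while loop removes exactly one argument from $\mathtt{unproc\_in}$, and that no argument can ever be added to $\mathtt{unproc\_in}$ more than once. If these hold, then the total number of while-loop iterations is bounded by the number of distinct arguments ever added, which is at most $|\Arguments|$, and termination follows immediately.

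First I would observe that the only line that adds an argument to $\mathtt{unproc\_in}$ inside the loop is line 20, and that this addition is guarded by the conditional on line 18. The key step is to establish the ``added at most once'' property. The cleanest way to do this is to exploit the invariant that an argument is added to $\mathtt{unproc\_in}$ (line 20) only in the same breath as it is relabelled $\inn$ by $\Lab_O$ (line 21), and that $\Lab_O$ never subsequently changes a label away from $\inn$. I would then argue that the conditional on line 18 only fires when $Y$ has \emph{no} attacker that is already labelled $\inn$ by $\Lab_O$; the chosen witness $Z$ is then labelled $\inn$ and added. Since the initial argument $A$ is also labelled $\inn$ at the outset (line 9), once an argument carries the $\inn$ label under $\Lab_O$ it is permanently $\inn$, so it can never again be freshly selected as a $Z$ and re-added. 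Hence each argument enters $\mathtt{unproc\_in}$ at most once.

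Having secured those three facts, I would conclude exactly as in Theorem \ref{th-a1-terminates}: since (1) $\Arguments$ is finite, (2) each argument is added to $\mathtt{unproc\_in}$ at most once, and (3) each while-loop iteration removes one argument from $\mathtt{unproc\_in}$ (line 14), the loop can execute at most $|\Arguments|$ times and must terminate. The inner for loop (lines 15--24) ranges over the attackers of a single argument and trivially terminates because $\Arguments$ is finite.

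The main obstacle, and the only genuinely non-routine point, is justifying that the ``added at most once'' guarantee really follows from the guard on line 18 together with the persistence of the $\inn$ label under $\Lab_O$. One must be careful that the condition checked on line 18 is phrased in terms of $\Lab_O$ (the labelling being constructed), not $\Lab_I$; it is precisely this choice that prevents an already-included $\inn$ argument from being re-added. I would make this dependence explicit, noting that any $Z$ selected on line 19 is by the guard not yet $\inn$ under $\Lab_O$ but becomes so on line 21, and that nothing in the algorithm ever un-sets an $\inn$ label. Once this invariant is stated cleanly, the remainder is a verbatim repetition of the earlier termination bookkeeping.
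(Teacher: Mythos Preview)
Your proposal is correct and follows essentially the same approach as the paper's proof: both establish termination by showing that arguments are added to $\mathtt{unproc\_in}$ at most once (because the guard on line 18 ensures the selected $Z$ is not yet $\inn$ under $\Lab_O$, and the $\inn$ label under $\Lab_O$ is never revoked), that each while-loop iteration removes one argument, and that $\Arguments$ is finite. One small imprecision: the guard on line 18 does not quite say that $Y$ has \emph{no} attacker labelled $\inn$ by $\Lab_O$, but rather that no \emph{minimal} (w.r.t.\ $\MMLabI$) $\inn$-labelled (w.r.t.\ $\Lab_I$) attacker of $Y$ is labelled $\inn$ by $\Lab_O$; however, since the $Z$ chosen on line 19 is precisely such a minimal attacker, your conclusion that $Z$ is not yet $\inn$ under $\Lab_O$ still follows, and the argument goes through.
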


\begin{proof}
At the while loop of lines 12-25, we observe that only a finite number of 
arguments can be added to \texttt{unproc\_in}. This is because there are only 
a finite number of arguments in the argumentation framework, and because no 
argument can be added to \texttt{unproc\_in} more than once. The latter can 
be seen as follows. Following line 18, only arguments that are not already 
labelled $\inn$ by $\Lab_O$ can be added to \texttt{unproc\_in}. Also, if an 
argument is labelled $\inn$ by $\Lab_O$, it will stay labelled $\inn$ by 
$\Lab_O$ as there is nothing in the algorithm that will change it. Following 
line 14, at each iteration of the while loop, an argument is removed from 
\texttt{unproc\_in}. From the fact that only a finite number of arguments 
can be added to \texttt{unproc\_in}, it directly follows that only a finite 
number of arguments can be removed from \texttt{unproc\_in}. Hence, the while 
loop can run only a finite number of times before \texttt{unproc\_in} is empty.
Hence, Algorithm \ref{alg-prune} terminates. 
\end{proof}

Next, we prove that the labelling that is yielded by the algorithm is
smaller or equal to the labelling the algorithm started with.

\begin{theorem} \label{th-LabO-LabI}
Let $\AF = (\Arguments, \attack)$ be an argumentation framework, $A$ be an 
argument in the grounded extension of $\AF$, $\Lab_I$ be a strongly admissible 
labelling where $A$ is labelled $\inn$ and $\MMLabI$ be the associated min-max 
numbering. Let $\AF$, $A$, $\Lab_I$ and $\MMLabI$ be given as input to 
Algorithm \ref{alg-prune}. Let $\Lab_O$ and $\MMLabO$ be the output of
Algorithm \ref{alg-prune}. It holds that $\Lab_O \sqsubseteq \Lab_I$
\end{theorem}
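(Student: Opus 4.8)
The plan is to prove $\Lab_O \sqsubseteq \Lab_I$ by showing the two set-inclusions required by Definition \ref{def-lab-operators}, namely $\inn(\Lab_O) \subseteq \inn(\Lab_I)$ and $\out(\Lab_O) \subseteq \out(\Lab_I)$. Since $\Lab_O$ starts as the all-$\undec$ labelling and the algorithm only ever \emph{adds} labels (it reassigns an argument from $\undec$ to $\inn$ at lines 8 and 21, or from $\undec$ to $\out$ at line 16, and never changes a label once assigned), it suffices to inspect every place where $\Lab_O$ assigns $\inn$ or $\out$ and verify that the input labelling $\Lab_I$ agrees at that argument. So the heart of the argument is a simple invariant: \emph{every argument that $\Lab_O$ labels $\inn$ is also labelled $\inn$ by $\Lab_I$, and every argument that $\Lab_O$ labels $\out$ is also labelled $\out$ by $\Lab_I$}.

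First I would establish the $\inn$-inclusion. An argument $X$ receives $\Lab_O(X) = \inn$ in exactly two places: line 8, where $X = A$, and line 21, where $X = Z$. For line 8, the theorem's hypothesis states that $A \in \inn(\Lab_I)$, so agreement is immediate. For line 21, $Z$ was chosen at line 19 as a minimal (w.r.t. $\MMLabI$) attacker of $Y$ that is labelled $\inn$ \emph{by $\Lab_I$}; the defining property used to select $Z$ is precisely that $\Lab_I(Z) = \inn$. Hence every argument labelled $\inn$ by $\Lab_O$ is labelled $\inn$ by $\Lab_I$, giving $\inn(\Lab_O) \subseteq \inn(\Lab_I)$.

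Next I would establish the $\out$-inclusion. An argument $Y$ receives $\Lab_O(Y) = \out$ only at line 16, where $Y$ is an attacker of the argument $X$ that was just removed from $\mathtt{unproc\_in}$ at line 14. Here I would use the fact that $X$ was at some earlier point added to $\mathtt{unproc\_in}$ and hence (by the $\inn$-inclusion just proved, applied at the moment of addition) satisfies $\Lab_I(X) = \inn$. Since $\Lab_I$ is an admissible labelling and $Y$ attacks the $\inn$-labelled argument $X$, the first clause of Definition \ref{def-lab-semantics-i} forces $\Lab_I(Y) = \out$. Thus every argument labelled $\out$ by $\Lab_O$ is labelled $\out$ by $\Lab_I$, giving $\out(\Lab_O) \subseteq \out(\Lab_I)$, and combining the two inclusions yields $\Lab_O \sqsubseteq \Lab_I$.

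The main obstacle, and the point requiring the most care, is the bookkeeping that connects the $\out$-inclusion back to the $\inn$-inclusion: to argue $\Lab_I(Y) = \out$ at line 16 I must know that the popped argument $X$ satisfies $\Lab_I(X) = \inn$, and this is only guaranteed because $X$ was previously \emph{added} to $\mathtt{unproc\_in}$ (at line 7 as $A$, or at line 20 as some $Z$), where the $\inn$-agreement with $\Lab_I$ was secured. I would therefore phrase the whole argument as a single invariant maintained across iterations of the while loop — every argument ever placed in $\mathtt{unproc\_in}$ is $\inn$ in $\Lab_I$ — rather than treating the two inclusions wholly independently, so that the admissibility of $\Lab_I$ can be invoked cleanly at each $\out$-assignment. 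A minor secondary point to note is that the conditional at line 18 only governs whether a \emph{new} $\inn$-argument is added, and never causes any label already set by $\Lab_O$ to be retracted, so the ``labels are only added'' observation underpinning the monotone comparison indeed holds throughout.
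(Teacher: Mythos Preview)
Your proposal is correct and follows essentially the same approach as the paper's proof: both establish $\inn(\Lab_O) \subseteq \inn(\Lab_I)$ by case-splitting on lines 8 and 21 (using the hypothesis on $A$ and the selection criterion at line 19, respectively), and then establish $\out(\Lab_O) \subseteq \out(\Lab_I)$ by tracing the popped argument $X$ back to its insertion into $\mathtt{unproc\_in}$, invoking the already-proved $\inn$-inclusion, and applying admissibility of $\Lab_I$. Your additional framing in terms of a maintained invariant and the observation that labels are never retracted is sound exposition but does not constitute a different argument.
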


\begin{proof}
In order to prove that $\Lab_O \sqsubseteq \Lab_I$, we must show:
\begin{enumerate}
  \item $\inn(\Lab_O) \subseteq \inn(\Lab_I)$ \\
	Let $x$ be an arbitrary argument that is labelled $\inn$ by $\Lab_O$.
	We distinguish two cases:
	\begin{itemize}
          \item $x$ became labelled $\inn$ at line 8. Therefore, it follows 
		$x$ is the argument in question (with $x = A$). Therefore, 
		$A$ is also labelled $\inn$ by $\Lab_I$. That is, $x$ is also 
		labelled in by $\Lab_I$
	  \item $x$ became labelled $\inn$ at line 21. According to line 19, 
		$x$ is a minimal $\inn$ labelled attacker of some $\out$ 
		labelled argument $y$ w.r.t $\Lab_I$. Therefore, $x$ is also 
		labelled $\inn$ within $\Lab_I$. 
	\end{itemize}
  \item $\out(\Lab_O) \subseteq \out(\Lab_I)$ \\
	Let $y$ be an arbitrary $\out$ labelled argument within $\Lab_O$. 
	It follows that $y$ must have been labelled $\out$ at line 16
	(so $y = Y$). From line 15, it follows that $Y$ attacks $X$, which
	was removed from \texttt{unproc\_in} at line 14. This means that $X$
	at some point was added to \texttt{unproc\_in}, which could only have
	happened at line 7 or line 20. In either case, it holds that
	$\Lab_O(X) = \inn$ (line 8 or 21, respectively). From point 1 above,
	we infer that $\Lab_I(X) = \inn$. As $\Lab_I$ is an admissible
	labelling, it follows that each attacker of $X$ (such as $Y$) is
	labelled $\out$ by $\Lab_I$. As $y = Y$ it directly follows that
	$y$ is labelled $\out$ by $\Lab_I$.
\end{enumerate}
\end{proof}

Next, we prove that the output of the algorithm is at least admissible
(the fact that it is also \emph{strongly} admissible is proved further on).

\begin{theorem} \label{th-LabO-admissible}
Let $\AF = (\Arguments, \attack)$ be an argumentation framework, $A$ be an 
argument in the grounded extension of $\AF$, $\Lab_I$ be a strongly admissible 
labelling where $A$ is labelled $\inn$ and $\MMLabI$ be the associated min-max 
numbering. Let $\AF$, $A$, $\Lab_I$ and $\MMLabI$ be given as input to 
Algorithm \ref{alg-prune}. Let $\Lab_O$ and $\MMLabO$  be the output of 
Algorithm \ref{alg-prune}. It holds that $\Lab_O$ is an admissible labelling 
that labels $A$ $\inn$.
\end{theorem}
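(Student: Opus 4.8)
The plan is to verify the two defining conditions of an admissible labelling (Definition~\ref{def-lab-semantics-i}) directly from the structure of Algorithm~\ref{alg-prune}, and to separately note that $A$ is labelled $\inn$. The fact that $A \in \inn(\Lab_O)$ is immediate: line~8 sets $\Lab_O(A) \leftarrow \inn$ during initialization, and once an argument is labelled $\inn$ by $\Lab_O$ nothing in the algorithm changes it (this is exactly the observation already used in the termination proof of Theorem~\ref{th-a2-terminates}). So the substance is the admissibility check for an arbitrary $x \in \Arguments$.

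First I would handle the $\inn$ condition: if $\Lab_O(x) = \inn$, then every attacker $y$ of $x$ satisfies $\Lab_O(y) = \out$. The key leverage here is Theorem~\ref{th-LabO-LabI}, which gives $\Lab_O \sqsubseteq \Lab_I$, hence $\inn(\Lab_O) \subseteq \inn(\Lab_I)$. So $\Lab_O(x) = \inn$ implies $\Lab_I(x) = \inn$, and since $\Lab_I$ is admissible (it is strongly admissible by hypothesis), every attacker $y$ of $x$ has $\Lab_I(y) = \out$. That tells me what $\Lab_I$ does, but I still need $\Lab_O(y) = \out$. For this I would argue from the algorithm: every $\inn$-labelled argument $x$ of $\Lab_O$ is added to $\mathtt{unproc\_in}$ (line~7 or line~20) and, by the termination argument, eventually removed at line~14; upon removal, the for loop of lines~15--24 processes \emph{every} attacker $Y$ of $x$ and sets $\Lab_O(Y) \leftarrow \out$ at line~16. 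Hence each attacker of $x$ is labelled $\out$ by $\Lab_O$, as required.

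Next I would handle the $\out$ condition: if $\Lab_O(x) = \out$, then some attacker $y$ of $x$ has $\Lab_O(y) = \inn$. An argument becomes $\out$ in $\Lab_O$ only at line~16, so $x = Y$ for some iteration in which $Y$ is an attacker of the argument $X$ removed at line~14. That $X$ was earlier added to $\mathtt{unproc\_in}$ (line~7 or~20) and is therefore labelled $\inn$ by $\Lab_O$ (line~8 or~21). Since $Y$ attacks $X$, setting $y = X$ gives an attacker of $x = Y$ that $\Lab_O$ labels $\inn$, which is exactly what the condition requires. This essentially mirrors the reasoning already deployed in case~2 of Theorem~\ref{th-LabO-LabI}.

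I do not expect a genuine obstacle here, since admissibility is a local (per-argument) condition and all the relevant global facts — $\Lab_O \sqsubseteq \Lab_I$, strong admissibility of $\Lab_I$, and the once-added-once-removed behaviour of $\mathtt{unproc\_in}$ — are already available from the preceding results. The one point requiring a little care is the $\inn$ case, where I must resist the temptation to conclude $\Lab_O(y) = \out$ directly from $\Lab_I(y) = \out$; the containment $\out(\Lab_O) \subseteq \out(\Lab_I)$ runs the wrong direction for that inference. The honest route is to show every attacker of an $\inn$-labelled argument is explicitly set $\out$ by the for loop at line~16, which is where the real work of this lemma lies.
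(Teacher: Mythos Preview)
Your argument for the $\inn$ case is correct and is essentially the paper's argument (your detour via $\Lab_I$ is unnecessary, as you yourself note, but harmless).

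Your argument for the $\out$ case, however, contains a real error in the direction of the attack. At line~15 the algorithm iterates over each \emph{attacker} $Y$ of $X$; that is, $Y$ attacks $X$, not the other way around. So from $x = Y$ and $\Lab_O(X) = \inn$ you cannot take $y = X$ as the required $\inn$-labelled attacker of $x$: there is no guarantee that $X$ attacks $Y$. The analogy you draw with case~2 of Theorem~\ref{th-LabO-LabI} is misleading, because there the goal was to show $Y \in \out(\Lab_I)$, which follows from ``$Y$ attacks the $\inn$-labelled $X$ and $\Lab_I$ is admissible''; here you need an attacker \emph{of} $Y$, which is the opposite direction.

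The correct witness is the argument $Z$ produced at lines~18--21. When $Y$ is set $\out$ at line~16, the algorithm immediately checks whether $\Lab_O$ already contains a minimally numbered $\inn$-labelled (w.r.t.\ $\Lab_I$) attacker of $Y$; if not, it picks such a $Z$ and sets $\Lab_O(Z) \leftarrow \inn$. Either way $Y$ acquires an $\inn$-labelled attacker in $\Lab_O$. To know such a $Z$ exists at all you use that $Y \in \out(\Lab_I)$ (from Theorem~\ref{th-LabO-LabI}) and that $\Lab_I$ is admissible, so $Y$ has at least one $\inn$-labelled attacker in $\Lab_I$. This is exactly how the paper handles the $\out$ case.
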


\begin{proof}
The fact that $\Lab_O$ labels $A$ $\inn$ follows from line 8. In order to 
prove that $\Lab_O$ is an admissible labelling, we must show that it satisfies 
the following two properties (Definition \ref{def-lab-semantics-i}):
\begin{enumerate}
  \item if $\Lab_O(x) = \inn$, then for each $y$ that attacks $x$ 
	it holds that $\Lab_O(y) = \out$ \\
	Let $x$ be an arbitrary $\inn$ labelled argument within $\Lab_O$.
	This means that $x$ became $\inn$ at line 8 or line 21. In either case,
	$x$ has been added to \texttt{unproc\_in} (at line 7 or line 20,
	respectively). Once the algorithm is terminated, \texttt{unproc\_in}
	has to be empty. This means that at some point, $x$ must have been
	removed from \texttt{unproc\_in}. This can only have happened at line
	14, which implies that (lines 15 and 16) each attacker $y$ of $x$ is
	labelled $\out$ by $\Lab_O$.
  \item if $\Lab_O(x) = \out$, then there exists a $y$ that attacks $x$
	such that $\Lab_O(y) = \inn$ \\
	Let $x$ be an arbitrary $\out$ labelled argument within $\Lab_O$.
	It follows that $x$ has been labelled $\out$ at line 16 ($x = Y$).
	According to Theorem \ref{th-LabO-LabI}, $Y$ is also labelled 
	$\out$ by $\Lab_I$. Since $\Lab_I$ is an admissible labelling of $\AF$,
	at least one of $Y$'s attackers is labelled $\inn$ by $\Lab_I$.
	Following lines 18-21, a minimal (w.r.t $\MMLabI$) $\inn$ labelled 
	attacker of $Y$ (w.r.t  $\Lab_I$), $y$ has been labelled $\inn$ by
	$\Lab_O$. That is, there exists a $y$ that attacks $x$ such that 
	$\Lab_O(y) = \inn$ 
\end{enumerate}
\end{proof}

Next, we prove that the algorithm does not change the min-max values of
the arguments it labels $\inn$ or $\out$.

\begin{lemma} \label{lemma-MMLabO-MMLabI}
Let $\AF = (\Arguments, \attack)$ be an argumentation framework, $A$ be an 
argument in the grounded extension of $\AF$, $\Lab_I$ be a strongly admissible 
labelling where $A$ is labelled $\inn$ and $\MMLabI$ be the associated min-max 
numbering. Let $\AF$, $A$, $\Lab_I$ and $\MMLabI$ be given as input to 
Algorithm \ref{alg-prune}. Let $\Lab_O$ and $\MMLabO$  be the output of 
Algorithm \ref{alg-prune}. It holds that for each argument $x$ that is 
labelled $\inn$ or $\out$ by $\Lab_O$, $\MMLabO(x) = \MMLabI(x)$.
\end{lemma}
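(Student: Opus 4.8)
The plan is to show that every argument $x$ with $\Lab_O(x) \in \{\inn, \out\}$ receives its min-max value directly from $\MMLabI$, by tracking exactly where $\MMLabO(x)$ is assigned in Algorithm \ref{alg-prune}. The key observation is that the algorithm has very simple assignment behaviour: the only lines that set a value of $\MMLabO$ are line 9 (for the main argument $A$), line 17 (for an $\out$ labelled argument $Y$), and line 22 (for a newly included $\inn$ labelled attacker $Z$), and in each of these three cases the assigned value is literally $\MMLabI$ of the same argument. So the proof is essentially a case analysis on how $x$ came to be labelled.

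First I would argue that any $x$ labelled $\inn$ or $\out$ by $\Lab_O$ must have had its label (and hence its $\MMLabO$ value) assigned at exactly one of these three lines. An $\inn$ label arises only at line 8 (with the corresponding numbering at line 9, giving $x = A$ and $\MMLabO(A) = \MMLabI(A)$) or at line 21 (with the numbering at line 22, giving $\MMLabO(Z) = \MMLabI(Z)$). An $\out$ label arises only at line 16 (with the numbering at line 17, giving $\MMLabO(Y) = \MMLabI(Y)$). In each case the conclusion $\MMLabO(x) = \MMLabI(x)$ is immediate from the assignment statement itself.

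The remaining point to nail down is that each such value is assigned \emph{once} and never subsequently overwritten, so that the value present in the final output $\MMLabO$ is indeed the one read off from $\MMLabI$. This follows from the same reasoning used in the termination proof (Theorem \ref{th-a2-terminates}): once an argument is labelled $\inn$ by $\Lab_O$ it stays labelled $\inn$, and line 18 prevents it from ever being re-added to $\mathtt{unproc\_in}$, so lines 21--22 fire at most once per argument; similarly, the guard $\Lab_O(Y) \neq \out$ is implicit in the structure so that an $\out$ argument is not reprocessed in a way that changes its number. Hence no $\MMLabO$ value is reassigned after its first setting.

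I expect the only mild subtlety — rather than a genuine obstacle — to be arguing cleanly that the three assignment lines are truly the \emph{only} places $\MMLabO$ is written and that no value is overwritten; everything else is a direct reading of the pseudocode. In contrast to Lemma \ref{lemma-a1-mm-correct} for Algorithm \ref{alg-construct}, no induction over the min-max numbering structure is needed here, because Algorithm \ref{alg-prune} does not \emph{compute} new min-max numbers but merely \emph{copies} them from the already-correct input numbering $\MMLabI$. The heavier task of showing that these copied values actually form a correct min-max numbering of $\Lab_O$ (and contain no $\infty$) is a separate statement that would be handled in a later lemma, so for this lemma the argument stays purely syntactic.
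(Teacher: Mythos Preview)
Your approach is essentially the same as the paper's: the paper's proof is a single line citing Theorem \ref{th-LabO-LabI} together with lines 9, 17 and 22 of Algorithm \ref{alg-prune}, which is exactly your case analysis. Your discussion of non-overwriting is more careful than necessary (even if an argument were reprocessed, it would be reassigned the \emph{same} value $\MMLabI(x)$, so the point is moot), and the one thing the paper adds that you omit is the appeal to Theorem \ref{th-LabO-LabI} to ensure that $\MMLabI(x)$ is actually defined at the moment of assignment --- but this is a minor technicality.
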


\begin{proof}
This follows from Theorem \ref{th-LabO-LabI} and lines 9, 17 and 22 of 
Algorithm \ref{alg-prune}. 
\end{proof}

Next, we prove that the output numbering is actually the correct min-max
numbering of the output labelling.

\begin{theorem} \label{th-MMLabO-correct}
Let $\AF = (\Arguments, \attack)$ be an argumentation framework, $A$ be an 
argument in the grounded extension of $\AF$, $\Lab_I$ be a strongly admissible 
labelling where $A$ is labelled $\inn$ and $\MMLabI$ be the associated min-max 
numbering. Let $\AF$, $A$, $\Lab_I$ and $\MMLabI$ be given as input to 
Algorithm \ref{alg-prune}. Let $\Lab_O$ and $\MMLabO$  be the output of 
Algorithm \ref{alg-prune}. 
It holds that $\MMLabO$ is the correct min-max numbering of $\Lab_O$. 
\end{theorem}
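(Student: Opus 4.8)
The plan is to verify directly that $\MMLabO$ satisfies the two defining conditions of a min-max numbering of $\Lab_O$ (Definition \ref{def-min-max-numbering}), leaning on three already-established facts: that $\Lab_O \sqsubseteq \Lab_I$ (Theorem \ref{th-LabO-LabI}), that $\Lab_O$ is admissible (Theorem \ref{th-LabO-admissible}), and that $\MMLabO$ agrees with $\MMLabI$ on every argument it labels $\inn$ or $\out$ (Lemma \ref{lemma-MMLabO-MMLabI}). Throughout I would use that $\MMLabI$ is, by hypothesis, the correct min-max numbering of $\Lab_I$, and that $\Lab_I$, being strongly admissible, is in particular admissible.

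For the $\inn$ case, take $x$ with $\Lab_O(x) = \inn$. Since $\Lab_O \sqsubseteq \Lab_I$ we also have $\Lab_I(x) = \inn$, and by Lemma \ref{lemma-MMLabO-MMLabI}, $\MMLabO(x) = \MMLabI(x)$. The key observation is that the set of $\out$-labelled attackers of $x$ is the \emph{same} under both labellings: because both $\Lab_O$ and $\Lab_I$ are admissible and label $x$ $\inn$, every attacker of $x$ is $\out$ in each, so both sets equal $x^-$. Moreover $\MMLabO$ and $\MMLabI$ coincide on these attackers (Lemma \ref{lemma-MMLabO-MMLabI}), so the two max-expressions range over literally the same multiset of numbers. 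Hence $max(\{\MMLabO(y) \mid y$ attacks $x$ and $\Lab_O(y) = \out\}) + 1 = \MMLabI(x) = \MMLabO(x)$, as required.

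The $\out$ case is the main obstacle, because here pruning genuinely shrinks the set of $\inn$-labelled attackers, so the relevant minima need not be taken over identical sets. Take $x$ with $\Lab_O(x) = \out$; then $\Lab_I(x) = \out$ and $\MMLabO(x) = \MMLabI(x) = m+1$, where $m$ denotes $min(\{\MMLabI(y) \mid y$ attacks $x$ and $\Lab_I(y) = \inn\})$. I would bound the $\Lab_O$-minimum from both sides. For the lower bound, any $y$ with $\Lab_O(y) = \inn$ is also $\inn$ in $\Lab_I$ (by $\Lab_O \sqsubseteq \Lab_I$) and attacks $x$, so $\MMLabO(y) = \MMLabI(y) \geq m$; thus the $\Lab_O$-minimum is at least $m$. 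For the upper bound, I would argue that a witness of value exactly $m$ survives the pruning: since $x$ is $\out$ in $\Lab_O$, it was set $\out$ at line 16 while processing some removed argument, and the guard at lines 18--22 guarantees that after that iteration some minimal (w.r.t.\ $\MMLabI$) $\inn$-labelled (w.r.t.\ $\Lab_I$) attacker $z$ of $x$ is labelled $\inn$ by $\Lab_O$ (either one already was, or one is then added); once $\inn$ in $\Lab_O$ it stays so. This $z$ satisfies $\MMLabO(z) = \MMLabI(z) = m$, giving the upper bound. Combining, $min(\{\MMLabO(y) \mid y$ attacks $x$ and $\Lab_O(y) = \inn\}) = m$, so this minimum plus $1$ equals $m+1 = \MMLabO(x)$.

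The one point requiring care is the invariant behind the upper bound: I must confirm that the line-18 guard really enforces, for \emph{every} argument that ends up $\out$ in $\Lab_O$, the persistent presence of a $\MMLabI$-minimal $\inn$-attacker labelled $\inn$ by $\Lab_O$, even when that argument is encountered as an attacker of several processed arguments. This follows because the property ``some $\MMLabI$-minimal $\inn$-attacker is $\inn$ in $\Lab_O$'' is monotone under the algorithm's actions (labels are never retracted) and is established the first time the argument is set $\out$.
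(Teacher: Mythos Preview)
Your proposal is correct and follows essentially the same approach as the paper's proof: both treat the $\inn$ case by observing that admissibility of $\Lab_O$ and $\Lab_I$ forces the sets of $\out$-labelled attackers of $x$ to coincide (namely $x^-$), and both handle the $\out$ case by invoking lines~18--22 to guarantee that a $\MMLabI$-minimal $\inn$-attacker survives into $\Lab_O$. Your version is in fact slightly more explicit in the $\out$ case---you separate the lower and upper bounds on the minimum, whereas the paper asserts the equality of the two minima in one step---and your closing remark on monotonicity of the invariant is a welcome clarification the paper leaves implicit.
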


\begin{proof}
Since $\Lab_O$ has been shown to be admissible (Theorem 
\ref{th-LabO-admissible}), we need to show that (Definition 
\ref{def-min-max-numbering}):
\begin{enumerate}
  \item if $\Lab_O(x) = \inn$ then 
	$\MMLabO(x) = max(\{\MMLabO(y) \mid y$ attacks $x$ and 
	$\Lab_O(y) = \out \}) + 1$ \\
	Let $x$ be an arbitrary $\inn$ labelled argument within $\Lab_O$. 
	According to Lemma \ref{lemma-MMLabO-MMLabI}, $\MMLabO(x) = 
	\MMLabI(x)$. Since $\MMLabI$ is the correct min-max numbering of 
	$\Lab_I$, $\MMLabI(x) = max(\{\MMLabI(y) \mid y$ attacks $x$ and 
	$\Lab_I(y) = \out \}) + 1$. It follows that $\MMLabO(x) = 
	max(\{\MMLabI(y) \mid y$ attacks $x$ and $\Lab_I(y) = \out \}) + 1$. 
	From the fact that $\Lab_O(x) = \inn$ and that $\Lab_O \sqsubseteq
	\Lab_I$ (Theorem \ref{th-LabO-LabI}) it follows that $\Lab_I(x) = 
	\inn$.  As both $\Lab_I$ and $\Lab_O$ are admissible labellings, it
	holds that in both labellings, all attackers of $x$ are labelled $\out$.
	It follows that $\{ y \mid y$ attacks $x$ and $\Lab_I(y) = \out \} =
	\{ y \mid y$ attacks $x$ and $\Lab_O(y) = \out \}$. From Lemma
	\ref{lemma-MMLabO-MMLabI}, it then follows that $\{ \MMLabI(y) \mid y$ 
	attacks $x$ and $\Lab_I(y) = \out \} = \{ \MMLabO(y) \mid y$ attacks 
	$x$ and $\Lab_O(y) = \out \}$. Therefore, from the earlier observed 
	fact that $\MMLabO(x) = max(\{\MMLabI(y) \mid y$ attacks $x$ and 
	$\Lab_I(y) = \out \}) + 1$ we obtain that $\MMLabO(x) = 
	max(\{\MMLabO(y) \mid y$ attacks $x$ and $\Lab_O(y) = \out \}) + 1$.
  \item if $\Lab_O(x) = \out$ then
	$\MMLabO(x) = min(\{\MMLabO(y) \mid y$ attacks $x$ and
	$\Lab_O(y) = \inn \}) + 1$ \\
	Let $x$ be an arbitrary $\out$ labelled argument within $\Lab_O$.
	As $\MMLabI$ is the correct min-max numbering of $\Lab_I$ it holds
	that $\MMLabI(x) = min(\{\MMLabI(y) \mid y$ attacks $x$ and
	$\Lab_I(y) = \inn \}) + 1$. As $\MMLabO(x) = \MMLabI(x)$
	(Lemma \ref{lemma-MMLabO-MMLabI}), it follows that $\MMLabO(x) = 
	min(\{\MMLabI(y) \mid y$ attacks $x$ and $\Lab_I(y) = \inn \}) + 1$.
	The fact that $\Lab_O(x) = \out$ means that $x$ must have become
	labelled out at line 16. From lines 18-22, it follows that $\Lab_O$
	will also contain a minimal (w.r.t. $\MMLabI$) $\inn$ labelled attacker
	(w.r.t. $\Lab_I$). This implies that $min(\{\MMLabI(y) \mid y$ attacks
	$x$ and $\Lab_I(y) = \inn \}) = min(\{\MMLabO(y) \mid y$ attacks
	$x$ and $\Lab_O(y) = \inn \})$. So from the earlier obtained fact that
	$\MMLabO(x) = min(\{\MMLabI(y) \mid y$ attacks $x$ and $\Lab_I(y) = 
	\inn \}) + 1$, it follows that $\MMLabO(x) = min(\{\MMLabO(y) \mid y$
	attacks $x$ and $\Lab_O(y) = \inn \}) + 1$.
\end{enumerate}
\end{proof}

We are now ready to state one of the main results of the current section:
the output labelling is strongly admissible.

\begin{theorem} \label{th-LabO-strongly-admissible}
Let $\AF = (\Arguments, \attack)$ be an argumentation framework, $A$ be an 
argument in the grounded extension of $\AF$, $\Lab_I$ be a strongly admissible 
labelling where $A$ is labelled $\inn$ and $\MMLabI$ be the associated min-max 
numbering. Let $\AF$, $A$, $\Lab_I$ and $\MMLabI$ be given as input to 
Algorithm \ref{alg-prune}. Let $\Lab_O$ and $\MMLabO$  be the output of 
Algorithm \ref{alg-prune}. It holds that $\Lab_O$ is a strongly admissible 
labelling of $\AF$.
\end{theorem}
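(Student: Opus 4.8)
The plan is to appeal directly to the definition of a strongly admissible labelling (Definition \ref{def-strongly-adm-lab}), which characterises such a labelling as an admissible labelling whose min-max numbering uses natural numbers only (no $\infty$). So the proof decomposes into exactly two obligations: that $\Lab_O$ is admissible, and that its min-max numbering avoids $\infty$. The first obligation is already discharged by Theorem \ref{th-LabO-admissible}, so essentially all the work is in assembling the earlier machinery for the second.

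First I would invoke Theorem \ref{th-MMLabO-correct} to note that $\MMLabO$ is genuinely the (correct) min-max numbering of $\Lab_O$; since min-max numberings of admissible labellings are unique, it suffices to examine $\MMLabO$ rather than some abstract numbering. The key remaining step is to show that $\MMLabO(x) \neq \infty$ for every $x \in \inn(\Lab_O) \cup \out(\Lab_O)$. For this I would use Lemma \ref{lemma-MMLabO-MMLabI}, which guarantees that for every such argument $x$ we have $\MMLabO(x) = \MMLabI(x)$. Because the input labelling $\Lab_I$ is assumed to be \emph{strongly} admissible, its min-max numbering $\MMLabI$ contains only natural numbers by Definition \ref{def-strongly-adm-lab}; hence $\MMLabI(x) \neq \infty$, and therefore $\MMLabO(x) \neq \infty$, for every argument that $\Lab_O$ labels $\inn$ or $\out$.

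Combining these observations, $\Lab_O$ is an admissible labelling (Theorem \ref{th-LabO-admissible}) whose correct min-max numbering $\MMLabO$ (Theorem \ref{th-MMLabO-correct}) assigns only natural numbers, and this is precisely the definition of a strongly admissible labelling, completing the argument.

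I do not anticipate a genuine obstacle here, as the statement is a corollary of the preceding results; the only point requiring care is to make sure the domain of the claim $\MMLabO = \MMLabI$ in Lemma \ref{lemma-MMLabO-MMLabI} covers \emph{all} arguments in $\inn(\Lab_O) \cup \out(\Lab_O)$, since those are exactly the arguments on which a min-max numbering is defined and thus exactly the arguments whose values must be shown finite. Everything else follows by citing the admissibility, correctness, and value-preservation results already established for Algorithm \ref{alg-prune}.
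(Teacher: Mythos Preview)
Your proposal is correct and follows essentially the same route as the paper's own proof: establish admissibility via Theorem \ref{th-LabO-admissible}, identify $\MMLabO$ as the correct min-max numbering via Theorem \ref{th-MMLabO-correct}, and then use Lemma \ref{lemma-MMLabO-MMLabI} together with the strong admissibility of $\Lab_I$ to conclude that no value is $\infty$. The only cosmetic difference is that the paper also explicitly cites Theorem \ref{th-LabO-LabI} ($\Lab_O \sqsubseteq \Lab_I$) to justify that $\MMLabI$ is defined on the relevant arguments, a point you handle implicitly through your remark about the domain of Lemma \ref{lemma-MMLabO-MMLabI}.
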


\begin{proof}
In order to show that $\Lab_O$ is strongly admissible, we need to show that 
$\Lab_O$ is an admissible labelling for which the min-max numbering does not 
contain any $\infty$ (Definition \ref{def-strong-adm}). First, we observe that 
$\Lab_O$ is an admissible labelling of $\AF$ (Theorem \ref{th-LabO-admissible})
with $\MMLabO$ as its correct min-max numbering (Theorem 
\ref{th-MMLabO-correct}). As $\Lab_I$ is a strongly admissible labelling of
$\AF$, its min-max numbering does not contain $\infty$. This, together with 
the fact that $\Lab_O \sqsubseteq \Lab_I$ (Theorem \ref{th-LabO-LabI}) and 
the fact that for each $\inn$ or $\out$ labelled argument $x$ by $\Lab_O$, $x$
is assigned the same min-max numbering by $\MMLabO$ as by $\MMLabI$ 
(Lemma \ref{lemma-MMLabO-MMLabI}) implies that $\MMLabO$ does not contain 
any $\infty$. Hence, we observe that $\Lab_O$ is an admissible labelling 
whose min-max numbering $\MMLabO$ does not contain $\infty$. That is, 
$\Lab_O$ is a strongly admissible labelling of $\AF$.
\end{proof}

It turns out that the algorithm runs in polynomial time (more specific, in
cubic time).

\begin{theorem} \label{th-a2-polynomial}
Let $\AF = (\Arguments, \attack)$ be an argumentation framework, $A$ be an 
argument in the grounded extension of $\AF$, $\Lab_I$ be a strongly admissible 
labelling where $A$ is labelled in and $\MMLabI$ be the correct min-max 
numbering of $\Lab_I$. Let $\AF$, $A$, $\Lab_I$ and $\MMLabI$ be given as 
input to Algorithm \ref{alg-prune}. Let $\Lab_O$ and $\MMLabO$ be the output 
of Algorithm \ref{alg-prune}. It holds that Algorithm \ref{alg-prune} computes 
$\Lab_O$ and $\MMLabO$ in O$(n)^3$ time. 
\end{theorem}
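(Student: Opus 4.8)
The plan is to bound the cost of Algorithm \ref{alg-prune} the same way Theorem \ref{th-a1-polynomial} bounded Algorithm \ref{alg-construct}: count loop iterations, then count the work done per iteration, and multiply. Let $n = |\Arguments|$. First I would observe that the initialization (lines 6--9) costs $O(n)$, dominated by building the all-$\undec$ labelling $\Lab_O$ on line 6. The real work is the while loop of lines 12--25.

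\medskip
\noindent\textbf{Counting the while-loop iterations.} The key fact, already established in the proof of Theorem \ref{th-a2-terminates}, is that no argument is added to $\mathtt{unproc\_in}$ more than once: line 18 only lets an argument be added when it is not yet labelled $\inn$ by $\Lab_O$, and once labelled $\inn$ it stays $\inn$. Since each iteration of the while loop removes exactly one argument from $\mathtt{unproc\_in}$ (line 14), and there are at most $n$ arguments that can ever be added, the while loop runs at most $n$ times.

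\medskip
\noindent\textbf{Counting the work per iteration.} Inside the while loop, the for loop of lines 15--24 iterates over the attackers $Y$ of the current argument $X$, which is at most $n$ times. The only step inside that for loop whose cost is not obviously $O(1)$ is the test on line 18 together with the selection on line 19: both require finding a minimal (w.r.t.\ $\MMLabI$) $\inn$-labelled attacker of $Y$ and, for line 18, checking whether some such minimal attacker is already labelled $\inn$ by $\Lab_O$. The set of attackers of $Y$ has size at most $n$, and for each one we do a constant amount of work (reading its $\Lab_I$-label, its $\MMLabI$-value, and its $\Lab_O$-label), so lines 18--19 cost $O(n)$. Hence each iteration of the for loop costs $O(n)$, the for loop costs $O(n^2)$ per while iteration, and the whole while loop costs $O(n \cdot n^2) = O(n^3)$.

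\medskip
\noindent\textbf{Total.} Combining the $O(n)$ initialization with the $O(n^3)$ while loop gives $O(n + n^3) = O(n^3)$, as claimed. The main thing to get right --- the one place where the argument could go wrong if stated carelessly --- is the cost of lines 18--19: a naive reading might suggest re-scanning all of $\Arguments$ or recomputing minima over the entire input labelling, but the work is in fact confined to the attackers of the single argument $Y$, which is what keeps the per-iteration cost at $O(n^2)$ rather than something larger. I would make this explicit so that the cubic bound is transparent, mirroring the structure of the proof of Theorem \ref{th-a1-polynomial}.
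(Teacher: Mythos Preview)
Your proof is correct and follows essentially the same approach as the paper's: bound the while loop by $n$ iterations (since no argument is added to $\mathtt{unproc\_in}$ twice), the inner for loop by $n$, and the sequential search at lines 18--19 by $n$, giving $O(n^3)$. You add a little more detail on the initialization cost and on why the search at lines 18--19 is confined to the attackers of $Y$, but the structure and the key counting argument are the same.
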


\begin{proof}
Let $n$ be the number of arguments in $\AF$ (that is, $n = |\Arguments|$). The while loop (lines 12-25) can have at most $n$ iterations. This is because each iteration of the while loop removes an argument from $\mathtt{unproc\_in}$, which can be done $n$ times at most, given that no argument can be added to $\mathtt{unproc\_in}$ more than once (this follows from lines 18-21). For each iteration of the while loop, the for loop (lines 15-24) will run at most $n$ times. In addition, for each iteration of the for loop, a sequential search (lines 18-19) will run at most $n$ times. This means that the total number of instructions executed by the while loop is of the order $n^3$ at most. Therefore, Algorithm \ref{alg-prune} computes $\Lab_O$ in O$(n)^3$ time.
\end{proof}

\subsection{Algorithm \ref{alg-combine}} \label{subsec-combine}

The idea of Algorithm \ref{alg-combine} is to combine Algorithm 
\ref{alg-construct} and Algorithm \ref{alg-prune}, by running them in sequence.
That is, the output of Algorithm \ref{alg-construct} is used as input for 
Algorithm \ref{alg-prune}.

\begin{algorithm}
\caption{Construct a relatively small strongly admissible labelling that labels
	$A$ $\inn$ and its associated min-max numbering.} \label{alg-combine}
\begin{algorithmic}[1]
\Statex \textbf{Input}: An argumentation framework $\AF=(\Arguments,\attack)$,
\Statex an argument $A \in \Arguments$ that is in the grounded extension 
	of $\AF$.
\Statex \textbf{Output}: A strongly admissible labelling $\Lab$ where
	$A \in \inn(\Lab)$, the associted min-max numbering $\MMLab$.
\Statex

\State  run Algorithm \ref{alg-construct}
\State  $\Lab_I \leftarrow \Lab$
\State  $\MMLabI \leftarrow \MMLab$
\State  run Algorithm \ref{alg-prune}
\State  $\Lab \leftarrow \Lab_O$
\State  $\MMLab \leftarrow \MMLabO$
\end{algorithmic}
\end{algorithm}

As an example, consider again the argumentation framework of Figure
\ref{fig-example-AF}. Let $C$ be the main argument. Running Algorithm
\ref{alg-construct} yields a labelling $(\{A,C,D\}, \{B\}, \{E,F,H,H\})$
with associated numbering $\{(A:1), (B:2), (C:3), (D:1)\}$ (as explained in 
Section \ref{subsec-construct}). Feeding this labelling and numbering into 
Algorithm \ref{alg-prune} then yields an output labelling $(\{A,C\}, \{B\}, 
\{D,E,F,G,H\})$ with associated output numbering $\{(A:1), (B:2), (C:3)\}$ 
(as explained in Section \ref{subsec-prune}).

Given the properties of Algorithm \ref{alg-construct} and Algorithm 
\ref{alg-prune}, we can prove that Algorithm \ref{alg-combine} terminates, 
correctly computes a strongly admissible labelling and its associated min-max 
numbering, and runs in polynomial time (more specific, in cubic time).

\begin{theorem} \label{th-a3-terminates}
Let $\AF = (\Arguments, \attack)$ be an argumentation framework 
and $A$ be an argument in the grounded extension of $\AF$. Let both $\AF$
and $A$ be given as input to Algorithm \ref{alg-combine}. It holds that the 
algorithm terminates.
\end{theorem}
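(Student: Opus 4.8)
The plan is to exploit the fact that Algorithm \ref{alg-combine} is nothing more than the sequential composition of Algorithm \ref{alg-construct} (the construction phase, lines 1--3) followed by Algorithm \ref{alg-prune} (the pruning phase, lines 4--6). Termination of a finite sequence of terminating subroutines follows immediately once we verify that each subroutine is invoked on admissible input, so the whole argument reduces to chaining together the termination results already established for the two component algorithms, with one precondition check in between.

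First I would invoke Theorem \ref{th-a1-terminates}: since $A$ is in the grounded extension of $\AF$, the construction phase (Algorithm \ref{alg-construct}) terminates, producing a labelling $\Lab$ and numbering $\MMLab$, which lines 2--3 copy into $\Lab_I$ and $\MMLabI$. Next I would appeal to Theorem \ref{th-a1-correct} to certify that these outputs meet the input requirements of Algorithm \ref{alg-prune}: by that theorem, $\Lab$ is a strongly admissible labelling that labels $A$ $\inn$ and $\MMLab$ is its correct min-max numbering. Hence $\Lab_I$ is a strongly admissible labelling with $A \in \inn(\Lab_I)$ and $\MMLabI$ is the associated min-max numbering, which is precisely the input contract assumed by Algorithm \ref{alg-prune}.

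With the precondition discharged, I would then apply Theorem \ref{th-a2-terminates} to conclude that the pruning phase (Algorithm \ref{alg-prune}), run on $\AF$, $A$, $\Lab_I$ and $\MMLabI$, also terminates, producing $\Lab_O$ and $\MMLabO$, which lines 5--6 return as $\Lab$ and $\MMLab$. Since Algorithm \ref{alg-combine} performs only these two terminating subroutine calls in sequence (together with a constant number of assignment statements), it terminates.

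The only point requiring any care — and the closest thing to an obstacle — is the precondition check sitting between the two phases: Theorem \ref{th-a2-terminates} is stated for an \emph{arbitrary} strongly admissible input labelling labelling $A$ $\inn$ with its correct numbering, so I must confirm the construction phase actually delivers such an object rather than merely \emph{some} labelling. This is exactly what Theorem \ref{th-a1-correct} supplies, so the check is routine, and no genuinely new reasoning about loop variants or queue behaviour is needed here.
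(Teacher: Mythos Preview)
Your proposal is correct and follows essentially the same approach as the paper, which simply cites Theorem \ref{th-a1-terminates} and Theorem \ref{th-a2-terminates}. Your explicit invocation of Theorem \ref{th-a1-correct} to discharge the input precondition of Algorithm \ref{alg-prune} is a welcome bit of extra care that the paper leaves implicit.
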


\begin{proof}
This follows from Theorem \ref{th-a1-terminates} 
and Theorem \ref{th-a2-terminates}.
\end{proof}

\begin{theorem} \label{th-a3-correct}
Let $\AF = (\Arguments, \attack)$ be an argumentation framework and 
let $A$ be an argument in the grounded extension of $\AF$. Let both $\AF$ and 
$A$ be given as input to Algorithm \ref{alg-combine}. Let $\Lab$ and $\MMLab$ 
be the output of the algorithm. It holds that $\Lab$ is a strongly admissible 
labelling that labels $A$ $\inn$ and has $\MMLab$ as its min-max numbering.
\end{theorem}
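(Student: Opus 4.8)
The plan is to establish the correctness of Algorithm \ref{alg-combine} by appealing directly to the correctness results already proved for its two constituent algorithms, since Algorithm \ref{alg-combine} is nothing more than their sequential composition. The key observation is that the output of Algorithm \ref{alg-construct} exactly matches the input requirements of Algorithm \ref{alg-prune}, so the two results chain together cleanly.

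First I would invoke Theorem \ref{th-a1-correct} applied to the first phase (lines 1--3). Since $A$ is in the grounded extension of $\AF$, that theorem guarantees that the $\Lab$ and $\MMLab$ produced by running Algorithm \ref{alg-construct} form a strongly admissible labelling that labels $A$ $\inn$, together with its correct min-max numbering. Lines 2--3 then assign these to $\Lab_I$ and $\MMLabI$. This establishes precisely the input precondition required by Algorithm \ref{alg-prune}: namely, a strongly admissible labelling $\Lab_I$ with $A \in \inn(\Lab_I)$ and its associated min-max numbering $\MMLabI$.

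Next I would invoke the correctness results for the second phase (lines 4--6). By Theorem \ref{th-LabO-strongly-admissible}, the output $\Lab_O$ is a strongly admissible labelling; by Theorem \ref{th-LabO-admissible}, it labels $A$ $\inn$; and by Theorem \ref{th-MMLabO-correct}, $\MMLabO$ is the correct min-max numbering of $\Lab_O$. Lines 5--6 assign these to $\Lab$ and $\MMLab$, so the final output of Algorithm \ref{alg-combine} inherits exactly these three properties. Combining them yields that $\Lab$ is a strongly admissible labelling that labels $A$ $\inn$ and has $\MMLab$ as its min-max numbering, which is the claim.

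I do not anticipate a genuine obstacle here, as this is a composition argument rather than a fresh proof. The only point requiring mild care is verifying that the postcondition of Algorithm \ref{alg-construct} (as certified by Theorem \ref{th-a1-correct}) indeed satisfies the hypotheses under which the Algorithm \ref{alg-prune} theorems were stated, but this match is immediate since both sets of theorems were formulated with exactly these interface conditions in mind. The proof is therefore essentially a two-line citation of the earlier results.
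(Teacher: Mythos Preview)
Your proposal is correct and matches the paper's own proof, which simply cites Theorem \ref{th-a1-correct}, Theorem \ref{th-LabO-admissible}, Theorem \ref{th-MMLabO-correct} and Theorem \ref{th-LabO-strongly-admissible}. You have merely spelled out the chaining between the two phases more explicitly than the paper does, but the underlying argument is identical.
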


\begin{proof}
This follows from Theorem \ref{th-a1-correct}, Theorem \ref{th-LabO-admissible},
Theorem \ref{th-MMLabO-correct} and Theorem \ref{th-LabO-strongly-admissible}.
\end{proof}

\begin{theorem} \label{th-a3-polynomial}
Let $\AF = (\Arguments, \attack)$ be an argumentation framework and 
let $A$ be an argument in the grounded extension of $\AF$. Let both $\AF$ and 
$A$ be given as input to Algorithm \ref{alg-combine}. Let $\Lab$ and $\MMLab$ 
be the output of the algorithm. It holds that Algorithm \ref{alg-combine}
computes $\Lab$ and $\MMLab$ in $O(n^3)$ time.
\end{theorem}

\begin{proof}
This follows from Theorem \ref{th-a1-polynomial} and 
Theorem \ref{th-a2-polynomial}.
\end{proof}

\begin{theorem} \label{th-Lab3-LabI}
Let $\AF = (\Arguments, \attack)$ be an argumentation framework, $A$ be an 
argument in the grounded extension of $\AF$.Let $\AF$ and $A$ be given as input to 
Algorithm \ref{alg-construct} and Algorithm \ref{alg-combine}. Let $\Lab_I$ and $\MMLabI$ be the output of
Algorithm \ref{alg-construct} and let $\Lab_3$ and $\MMLabThree$ be the output of Algorithm \ref{alg-combine}. It holds that $\Lab_3 \sqsubseteq \Lab_I$
\end{theorem}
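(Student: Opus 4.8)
The plan is to unwind the definition of Algorithm \ref{alg-combine} and reduce the claim to the already-established soundness of the pruning phase, namely Theorem \ref{th-LabO-LabI}. First I would observe that Algorithm \ref{alg-combine} operates in two stages: line 1 runs Algorithm \ref{alg-construct} on $\AF$ and $A$, whose output is exactly the pair $(\Lab_I, \MMLabI)$ named in the theorem statement; lines 2--3 then pass this pair as the input labelling and numbering to Algorithm \ref{alg-prune}, and lines 4--6 set the final output $(\Lab_3, \MMLabThree)$ of Algorithm \ref{alg-combine} to be precisely the output $(\Lab_O, \MMLabO)$ produced by that internal run of Algorithm \ref{alg-prune}.

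Next I would check that the preconditions of Theorem \ref{th-LabO-LabI} are met by this internal invocation of Algorithm \ref{alg-prune}. Theorem \ref{th-LabO-LabI} requires that the input labelling be strongly admissible with $A$ labelled $\inn$, and that the input numbering be its associated min-max numbering. By Theorem \ref{th-a1-correct}, the output $(\Lab_I, \MMLabI)$ of Algorithm \ref{alg-construct} is exactly such a pair: $\Lab_I$ is a strongly admissible labelling that labels $A$ $\inn$, and $\MMLabI$ is its min-max numbering. Hence the hypotheses of Theorem \ref{th-LabO-LabI} hold for the run of Algorithm \ref{alg-prune} inside Algorithm \ref{alg-combine}.

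Applying Theorem \ref{th-LabO-LabI} to that run then yields $\Lab_O \sqsubseteq \Lab_I$. Since $\Lab_3 = \Lab_O$ by the final assignment in Algorithm \ref{alg-combine}, I conclude $\Lab_3 \sqsubseteq \Lab_I$, as desired.

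There is no genuine mathematical obstacle here; the only thing to be careful about is the deliberate reuse of variable names. The symbol $\Lab_I$ appears both as the (external) output of Algorithm \ref{alg-construct} in the theorem statement and as the (internal) input variable fed to Algorithm \ref{alg-prune} within Algorithm \ref{alg-combine}, and one must confirm that these two coincide before invoking Theorem \ref{th-LabO-LabI}. Making explicit the identification between Algorithm \ref{alg-combine}'s intermediate variables and the externally named pairs $(\Lab_I, \MMLabI)$ and $(\Lab_3, \MMLabThree)$ is essentially the entirety of the work.
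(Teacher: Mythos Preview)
Your proposal is correct and follows exactly the same approach as the paper's own proof, which simply cites Theorem \ref{th-LabO-LabI}, Theorem \ref{th-a1-correct}, and the definition of Algorithm \ref{alg-combine}. Your write-up is in fact more detailed than the paper's one-sentence justification, but the underlying argument is identical.
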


\begin{proof}
This follows from Theorem \ref{th-LabO-LabI}, together with Theorem \ref{th-a1-correct} and the way Algorithm \ref{alg-combine} is defined (by successively applying Algorithm \ref{alg-construct} and Algorithm \ref{alg-prune})
\end{proof}

\section{Empirical Results} \label{sec-empirical-results}

Now that the correctness of our algorithms has been proved and their
computational complexity has been stated, the next step is to empirically
evaluate their performance. For this, we compare both their runtime and
output with that of other computational approaches.

\subsection{Minimality} \label{subsec-minimality}

Although Algorithm 3 aims to find a relatively small strongly admissible
labelling, it is not guaranteed to find an absolute smallest. This is because
the problem of finding the absolute smallest admissible labelling is
coNP-complete, whereas Algorithm 3 is polynomial
(Theorem \ref{th-a3-polynomial}). In essence, we have given up absolute 
minimality in order to achieve tractability. The question, therefore, is how 
much we had to compromise on minimality. That is, how does the outcome of 
Algorithm 3 compare with what would have been an absolute minimal outcome? In order to make the comparison, we will apply the ASPARTIX ASP encodings of
\cite{DW20} to determine the absolute minimal strongly admissible labelling.

Apart from comparing the strongly admissible labelling yielded by our algorithm 
with an absolute \emph{minimal} strongly admissible labelling, we will also
compare it with the absolute \emph{maximal} strongly admissible labelling.
That is, we will compare it with the grounded labelling. The reason for doing
so is that the grounded semantics algorithms (e.g. \cite{MC09,NAD21}) are to
the best of our knowledge currently the only polynomial algorithms for computing a strongly admissible labelling (in particular, for the \emph{maximal} strongly admissible labelling) that have been stated in the literature. As Algorithm 3 is also polynomial
(Theorem \ref{th-a3-polynomial}) this raises the question of how much 
improvement is made regarding minimality.

For queries, we considered the argumentation frameworks in the benchmark sets of ICCMA’17 and ICCMA’19. For each of the argumentation frameworks we generated a query argument that is within the grounded extension (provided the grounded extension is not empty). We used the queried argument when one was provided by the competition (for instance, when considering the benchmark examples of the \emph{Admbuster} class, we took 'a' to be the queried argument as this was suggested by the authors of this class). After considering 514 argumentation frameworks, we found that 277 argumentation frameworks yielded a grounded extension that is not empty (meaning they could used for current purposes). 

We conducted our experiments on a MacBook Pro 2020 with 8GB of memory and an
Intel Core i5 processor. To run the ASPARTIX system we used clingo v5.5.1. We set a timeout limit of 1000 seconds and a memory limit of 8GB per query.

For each of the selected benchmark examples, we have assessed the following:
\begin{enumerate}
  \item the size of the grounded labelling 
	(determined using the modified version of Algorithm 1 
	as described in Lemma \ref{lemma-a1-modified-grounded})
  \item the size of the strongly admissible labelling yielded by Algorithm 1
  \item the size of the strongly admissible labelling yielded by Algorithm 3
  \item the size of the absolute minimal strongly admissible labelling
	(yielded by the approach of \cite{DW20})
\end{enumerate}

We start our analysis with comparing the output of Algorithm \ref{alg-construct} and Algorithm \ref{alg-combine} with the grounded labelling regarding the size of the respective labellings. We found that the size of the strongly admissible labelling yielded by Algorithm \ref{alg-construct} tends to be smaller than the size of the grounded labelling. More specifically, the strongly admissible labelling yielded by Algorithm \ref{alg-construct} is smaller than the size of the grounded labelling in 63\% of the 277 examples we tested for. In the remaining 37\% of the examples, their sizes are the same. 

Figure \ref{fig-MAlg1GL} provides a more detailed overview of our findings, in the form of a bar graph. The rightmost bar represents the 37\% of the cases where the output of Algorithm \ref{alg-construct} has the same size as the grounded labelling (that is, where the size of the output of Algorithm \ref{alg-construct} is 100\% of the size of the grounded labelling). The bars on the left of this are for the cases where the size of the output of Algorithm \ref{alg-construct} is less than the size of the grounded labelling. For instance, it was found that in 10\% of the examples, the size of output of Algorithm \ref{alg-construct} is 80\% to 89\% of the size of the grounded labelling. On average, we found that the size of the output of Algorithm \ref{alg-construct} is 76\% of the size of the grounded labelling.

\begin{figure}[thb]
\includegraphics[width=15cm]{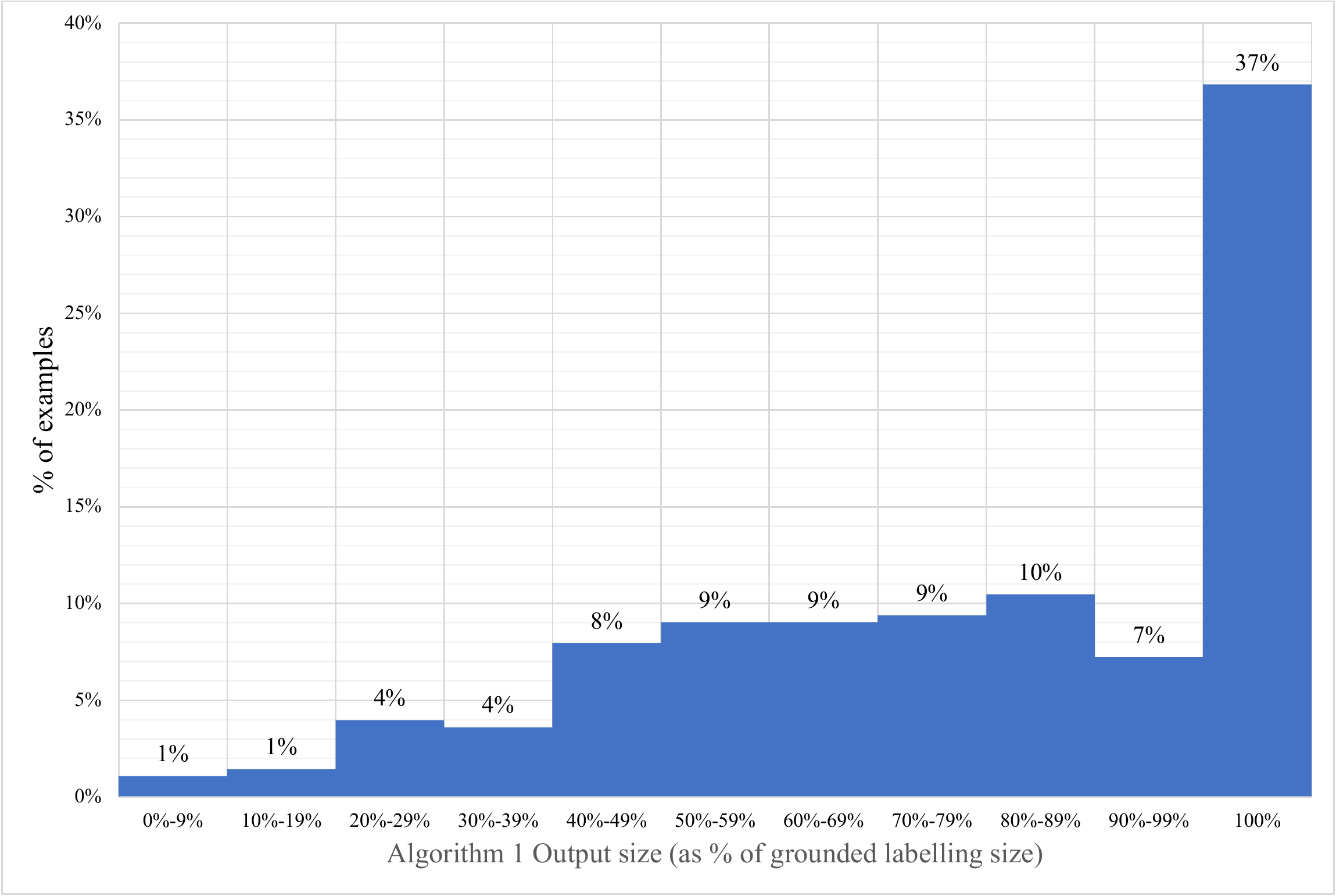}
\caption{The size of output of Algorithm 1 (as a percentage of the grounded labelling).
\label{fig-MAlg1GL}}
\centering
\end{figure}

As for Algorithm \ref{alg-combine}, we found an even bigger improvement in the size of it's output labelling compared to the grounded labelling. More specifically, the size of the strongly admissible labelling yielded by Algorithm \ref{alg-combine} is smaller than the grounded labelling in 88\% of the 277 examples we tested for. Figure \ref{figAlg3GL} provides a more detailed overview of our findings in a similar way as we previously did for Algorithm \ref{alg-construct}. On average, we found that the output of Algorithm \ref{alg-combine} has a size that is 25\% of the size of the grounded labelling. 

\begin{figure}[thb]
\includegraphics[width=15cm]{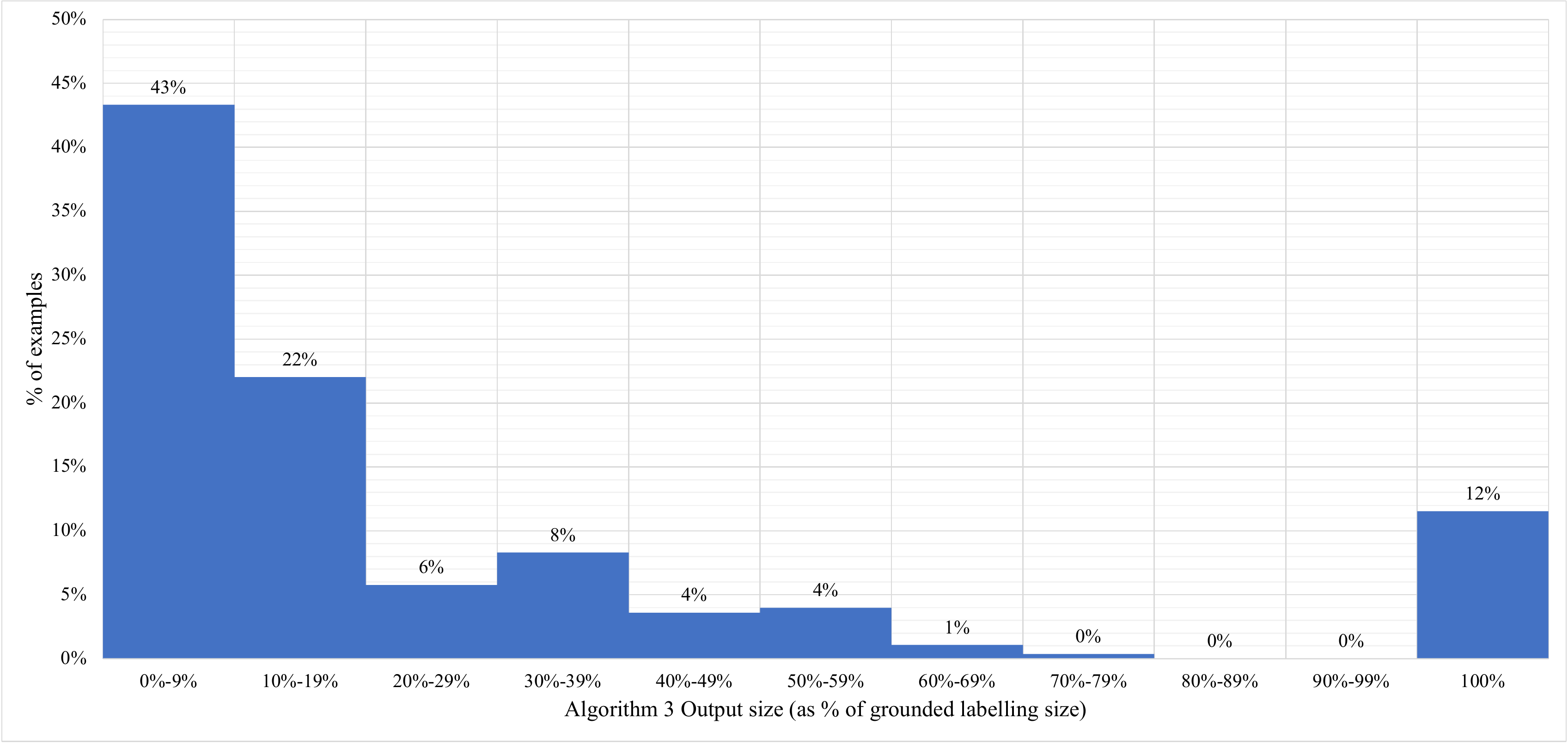}
\caption{The size of output of Algorithm 3 (as a percentage of the grounded labelling).
\label{figAlg3GL}}
\centering
\end{figure}

Apart from comparing Algorithm \ref{alg-construct} and Algorithm \ref{alg-combine} with the grounded labelling, it can also be insightful to compare the two algorithms with each other. In figure \ref{fig-MA1A3-GL}, each dot represents one of the 277 examples.\footnote{Please be aware that some of the dots overlap each other.} The horizontal axis represents the size of the output of Algorithm \ref{alg-construct}, as a percentage of the size of the grounded labelling. The vertical axis represents the size of the output of Algorithm \ref{alg-combine} as a percentage of the size of the grounded labelling. For easy reference, we have included a dashed line indicating the situation where the output of Algorithm \ref{alg-construct} has the same size as the output of Algorithm \ref{alg-combine}. Any dots below the dashed line represent the cases where Algorithm \ref{alg-combine} outperforms Algorithm \ref{alg-construct}, in that it yields a smaller strongly admissible labelling. Any dots above the dashed line represents the cases where Algorithm \ref{alg-combine} under performs Algorithm \ref{alg-construct} in that it yields a bigger strongly admissible labelling. Unsurprisingly, there no such cases as Theorem \ref{th-Lab3-LabI} states that the output of Algorithm \ref{alg-combine} cannot be bigger than the output of Algorithm \ref{alg-construct}. 

We found that for 95\% of the examples, Algorithm \ref{alg-combine} produces a smaller labelling than Algorithm \ref{alg-construct}. Moreover, we found that on average, the output of Algorithm 3 is 32\% smaller than output of Algorithm \ref{alg-construct}.

\begin{figure}[thb]
\includegraphics[width=15cm]{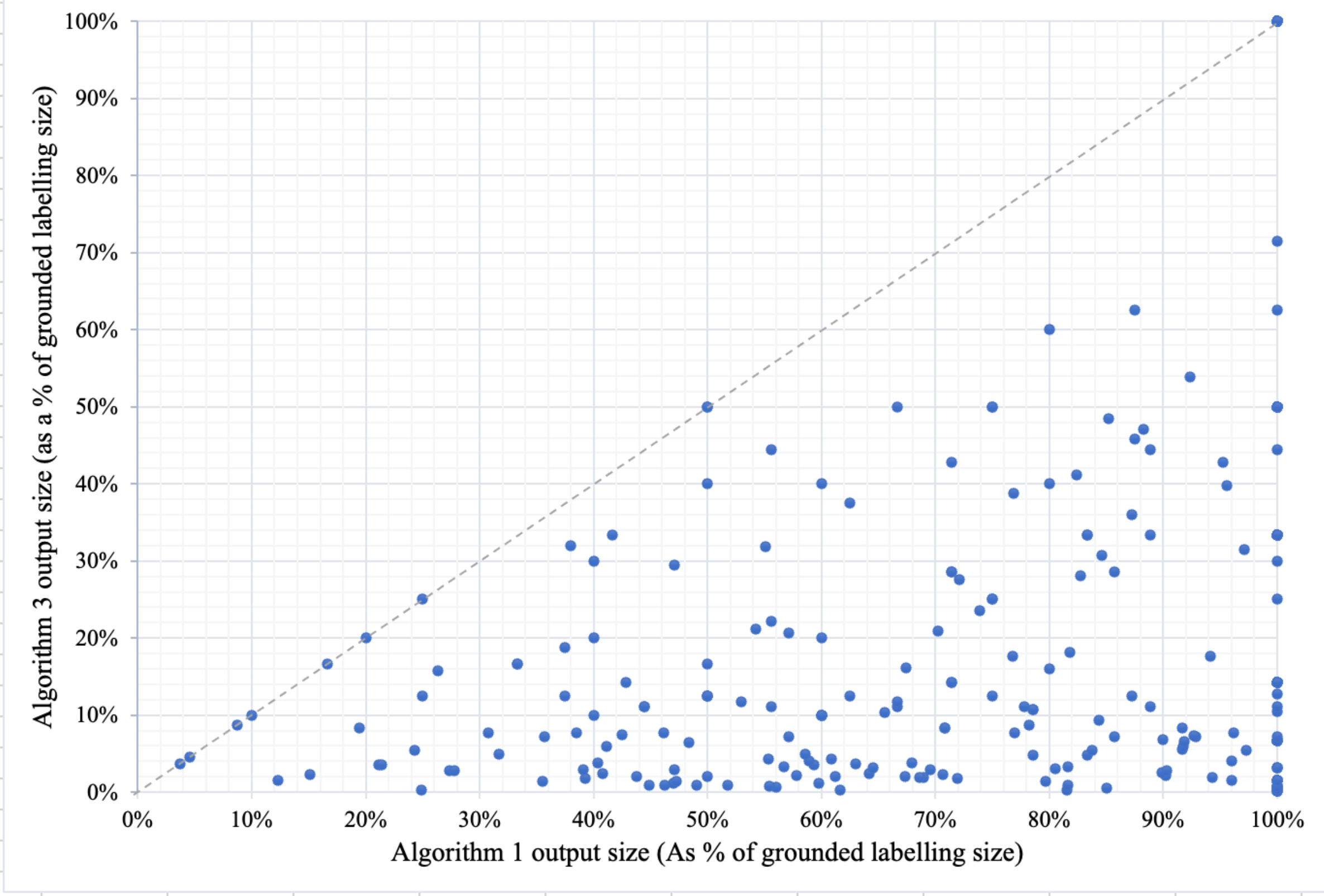}
\caption{The size of output of Algorithm 1 compared to the output Algorithm 3 (as a percentage of the size of the grounded labelling).
\label{fig-MA1A3-GL}}
\centering
\end{figure}

The next question is how the output of our best performing algorithm (Algorithm \ref{alg-combine}) compares with what would have been the ideal output. That is, we compare the size of the output of Algorithm \ref{alg-combine} with the size of an minimal strongly admissible labelling for the main argument in question, as computed using the ASPARTIX encodings of \cite{DW20}. The results are shown in Figure \ref{fig-A3ASP-GL}.

\begin{figure}[thb]
\includegraphics[width=15cm]{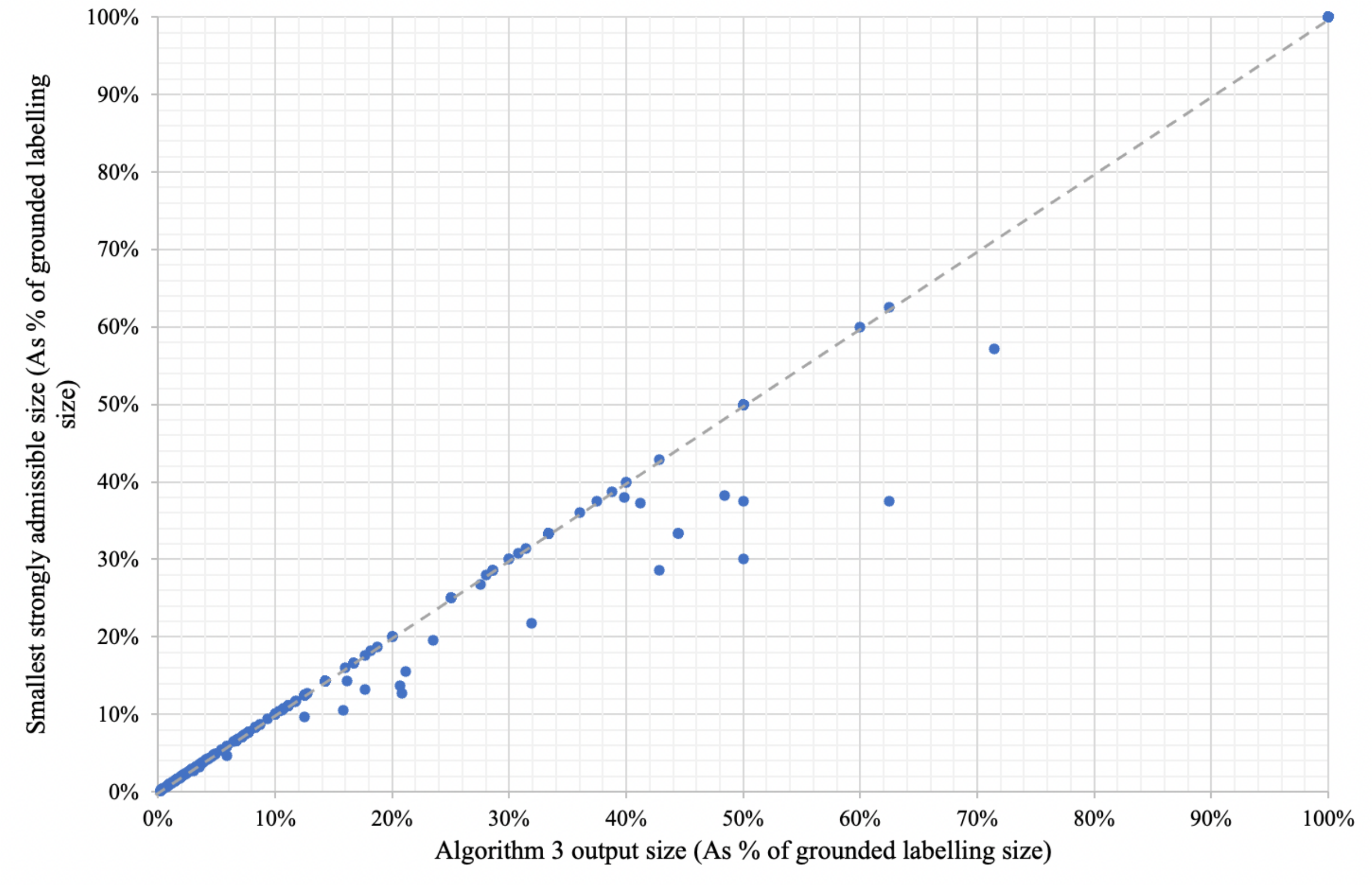}
\caption{The size of output of Algorithm 3 compared to the smallest strongly admissible labelling (as a percentage of the size of the grounded labelling).
\label{fig-A3ASP-GL}}
\centering
\end{figure}

We found that in 91\% of the 277 examples, the output of Algorithm \ref{alg-combine} is of the same size as the smallest strongly admissible labelling for the output of the main argument in question. For the other 9\% of the examples, the output of Algorithm \ref{alg-combine} has a bigger size. On average, we found that the output of Algorithm \ref{alg-combine} is 3\% bigger than the smallest strongly admissible labelling for the main argument in question. Figure \ref{fig-A3ASP-SL}, provides a more detailed overview of how much bigger the output of Algorithm \ref{alg-combine} is compared to the smallest strongly admissible labelling for the main argument in question. 

\begin{figure}[thb]
\includegraphics[width=15cm]{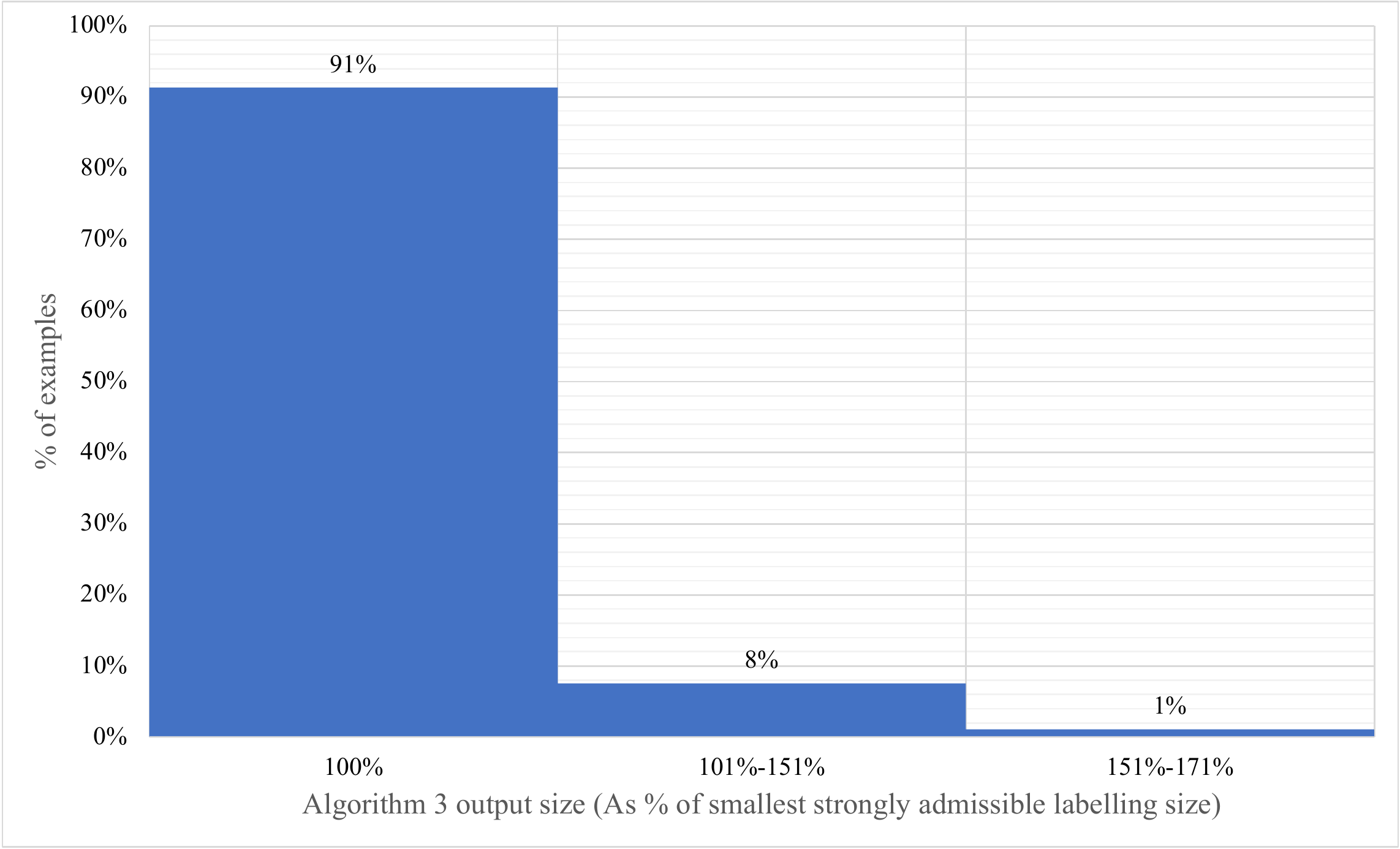}
\caption{The size of output of Algorithm 3 compared to the smaller strongly admissible labelling (as a percentage of the size of the grounded labelling).
\label{fig-A3ASP-SL}}
\centering
\end{figure}

\subsection{Runtime} \label{subsec-runtime}

The next thing to study is how the runtime of our algorithms compares with
the runtime of some of the existing computational approaches. In particular,
we compare the runtime of Algorithm 1 and Algorithm 3 with the runtime of
the ASPARTIX-based approach of \cite{DW20}.

We first compare the runtime of Algorithm \ref{alg-combine} to the runtime of the modified version of Algorithm 3 of \cite{NAD21} for computing the grounded labelling. It turns out that the runtimes of these algorithms are very similar. On average, Algorithm 3 of \cite{NAD21} took 0.02(3\%) seconds more than Algorithm \ref{alg-combine} to solve the test instances.
These runtime results of Algorithm \ref{alg-combine} and Algorithm 3 of \cite{NAD21} are illustrated within Figure \ref{fig-RT-A3-GL}.  

\begin{figure}[thb]
\includegraphics[width=15cm]{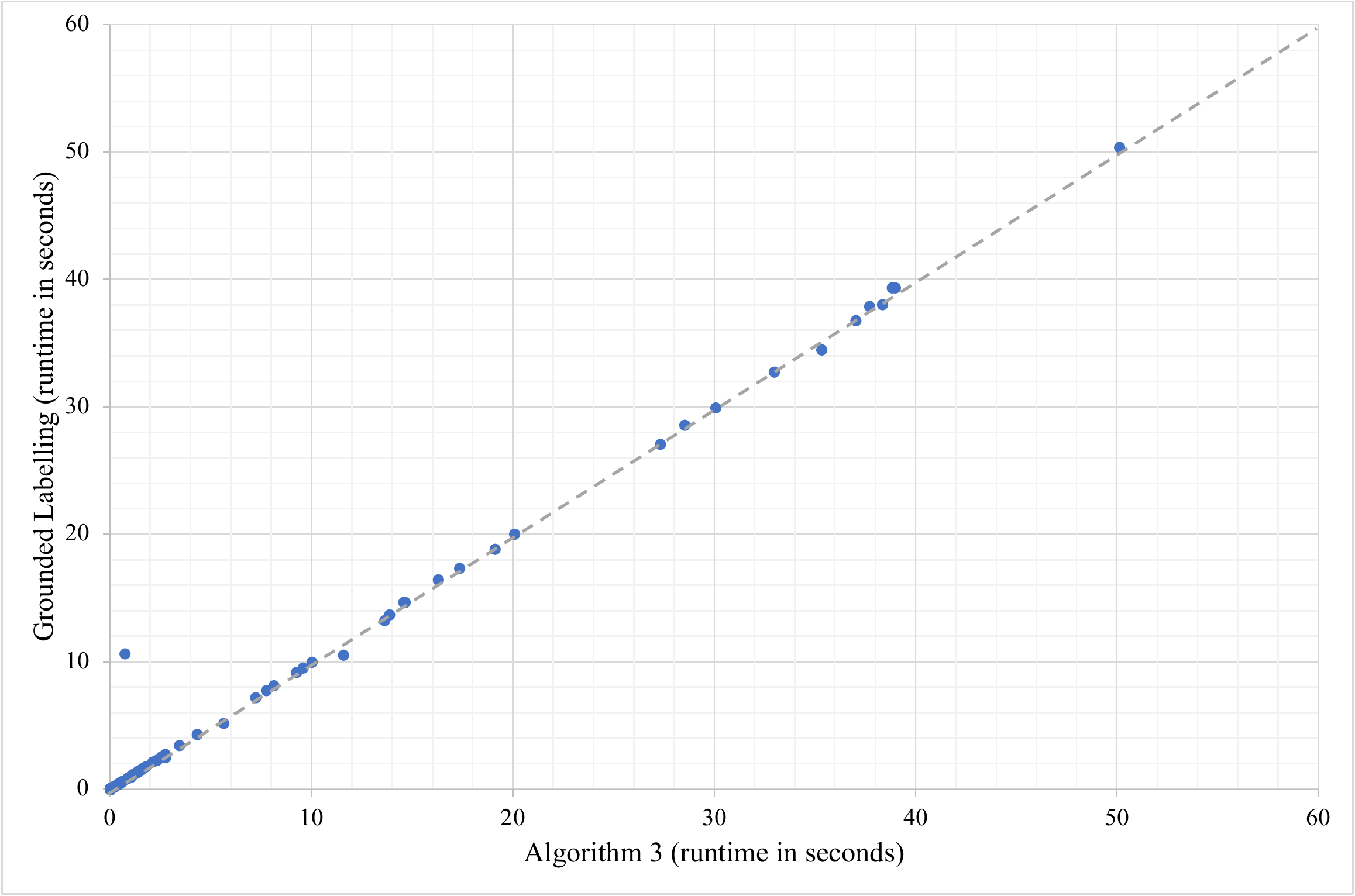}
\caption{The runtime of computing Algorithm 3 compared to the runtime of computing the grounded labelling).
\label{fig-RT-A3-GL}}
\centering
\end{figure}

The next question is how does the runtime of computing Algorithm \ref{alg-combine} compare to the runtime of the ASPARTIX encoding for minimal strongly admissibility. It was observed that the runtime of ASPARTIX encoding is significantly longer than the runtime of Algorith \ref{alg-combine}. A detailed overview of the difference in runtimes of Algorithm \ref{alg-combine} and the ASPARTIX encoding on minimal strong admissibility is shown in Figure \ref{fig-RT-A3-ASP}. On average, the ASPARTIX framework took 12.5 seconds (907\%) more than Algorithm \ref{alg-combine} to solve the test instances.

\begin{figure}[thb]
\includegraphics[width=15cm]{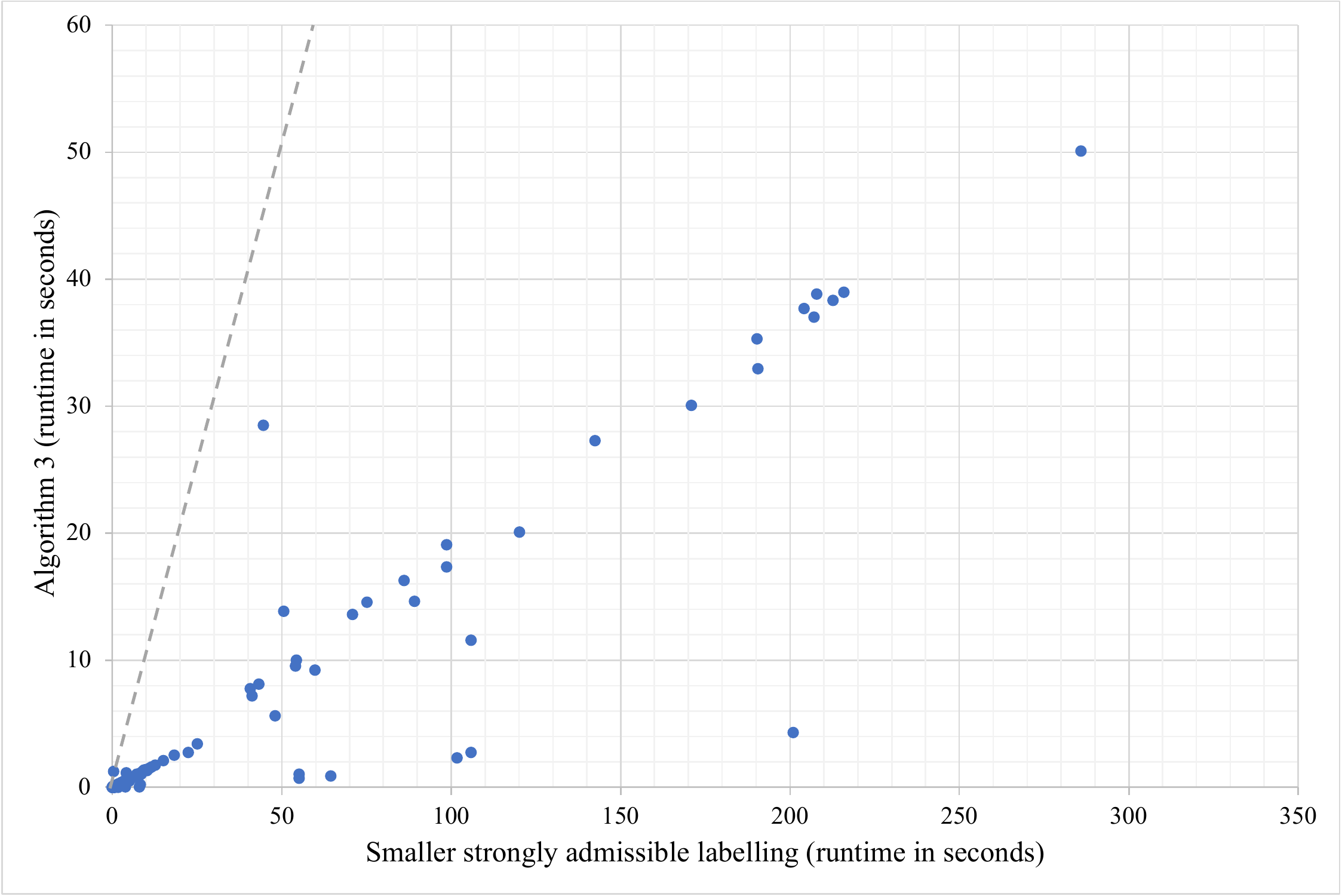}
\caption{The runtime of computing Algorithm 3 compared to the runtime of computing the ASPARTIX encoding on minimal strong admissibility.
\label{fig-RT-A3-ASP}}
\centering
\end{figure}

\section{Discussion} \label{sec-discussion}
In the current paper, we provided two algorithms (Algorithm \ref{alg-construct} and Algorithm \ref{alg-combine}) for computing a relatively small strongly admissible labelling for an argument that is in the grounded extension. We proved that both algorithms are correct in the sense that each of them returns a strongly admissible labelling (with associated min-max numbering) that labels the main argument in question $\inn$ (Theorem \ref{th-a1-correct} and \ref{th-a3-correct}). Moreover, each algorithm runs in polynomial (cubic) time (Theorem \ref{th-a1-polynomial} and Theorem \ref{th-a3-polynomial}). It was also shown that the strongly admissible labelling yielded by Algorithm \ref{alg-combine} is smaller than or equal to the strongly admissible labelling yielded by Algorithm \ref{alg-construct} (Theorem \ref{th-LabO-LabI}). 

The next question we examined is how small the output of Algorithm \ref{alg-construct} and Algorithm \ref{alg-combine} is compared to the smallest strongly admissible for the main argument in question. Unfortunately, previous findings make it difficult to provide formal theoretical results on this. This is because the k-approximation problem for strong admissibility is NP-hard, meaning that a polynomial algorithm (such as Algorithm \ref{alg-construct} and Algorithm \ref{alg-combine}) cannot provide any guarantees of yielding a result within a fixed parameter $k$ from the size of the absolute smallest strongly admissible labelling for the main argument in question. 

Hence, instead of developing theoretical results, we decided to approach the issue of minimality in an empirical way, using a number of experiments. These experiments were based on the benchmark examples that were submitted to ICCMA'17 and ICCMA'19. We compared the output of Algorithm \ref{alg-construct} and Algorithm \ref{alg-combine} with both the biggest and the smallest strongly admissible set for the main argument in question (the biggest was computed using Algorithm 3 described in \cite{NAD21} and the smallest was computed using the ASPARTIX based approach on computing minimal strong admissibility in \cite{DW20}). Overall, we found that Algorithm \ref{alg-combine} yields results that are only marginally bigger than the smallest strongly admissible labelling, with a run-time that is a fraction of the time that would be required to find this smallest strongly admissible labelling. The outputs of both or algorithms return a strongly admissible labelling that is significantly smaller than the biggest strongly admissible labelling (the grounded labelling), with the output of Algorithm \ref{alg-combine} on average being only 25\% of the output of the biggest strongly admissible labelling. 

The research of the current paper fits into our long-term research agenda of using argumentation theory to provide explainable formal inference. In our view, it is not enough for a knowledge-based system to simply provide an answer regarding what to do or what to believe. There should also be a way for this answer to be explained. One way of doing so is by means of (formal) discussion. Here, the idea is that the knowledge-based system should provide the argument that is at the basis of its advice. The user is then allowed to raise objections (counterarguments) which the system then replies to (using counter-counter-arguments), etc. In general, we would like such a discussion to be (1) sound and complete for the underlying argumentation semantics, (2) not be unnecessarily long, and (3) be close enough to human discussion to be perceived as natural and convincing
 
As for point (1), sound and complete discussion games have been identified for grounded, preferred, stable and ideal semantics \cite{Cam15a}. As for point (2), this is what we studied in the current paper, as well as in \cite{Cam14a,CD19a}. As for point (3), this is something that we are aiming to report on in future work.

For future research, it is possible to conduct a similar sort of analysis (as in this paper) on minimal admissible labellings. It was reported obtaining an absolute minimal admissible labelling for a main argument in question is also of coNP-complete complexity \cite{Cam14a,CD19a} therefore, it would be interesting to look into developing an algorithm that generates a small admissible labelling in polynomial time complexity. Similarly, it would also be interesting to look at the complexity and empirical results on generating minimal ideal sets. 

\section{Epilogue} \label{sec-epilogue}

Although the main topic of the current paper is how to construct a relatively small strongly admissible labelling(for a particular argument) in a time-efficient way, our results also allow us to provide an analysis of two adjacent questions: what is the additional cost of computing the min-max numbering compared to only computing the strongly admissible labelling itself and what is the fastest approach for computing \emph{any} strongly admissible labelling (for a particular argument) if the size of the labelling does not matter. In the following two sections, we study the questions in more detail. 

\subsection{The additional costs of computing the min-max numbering} \label{subsec-addtionalcost-minmax}

As we mentioned earlier, our approach (in particular Algorithm \ref{alg-construct}) is based on the work of \cite{NAD21}. However, where the works of \cite{NAD21} only computes a strongly admissible labelling (the biggest strongly admissible labelling, to be precise) our approach additionally computes the associated min-max numbering. This raises the question of what is the additional runtime needed to compute this min-max numbering. 

In order to a like-for-like comparison, we compare the runtime of Algorithm 3 of \cite{NAD21} with the runtime of our own Algorithm for computing the grounded labelling and it's associated min-max numbering, as described by Lemma \ref{lemma-a1-modified-grounded}. Each of the two algorithms was run on 277 examples of the earlier mentioned testset. The results are provided in Figure \ref{fig-RT-Leamm9-Nofal}. On average, the runtime of the algorithm of Lemma \ref{lemma-a1-modified-grounded} is 0.0004\% longer than the runtime of Algorithm 3 of \cite{NAD21}.

\begin{figure}[thb]
\includegraphics[width=15cm]{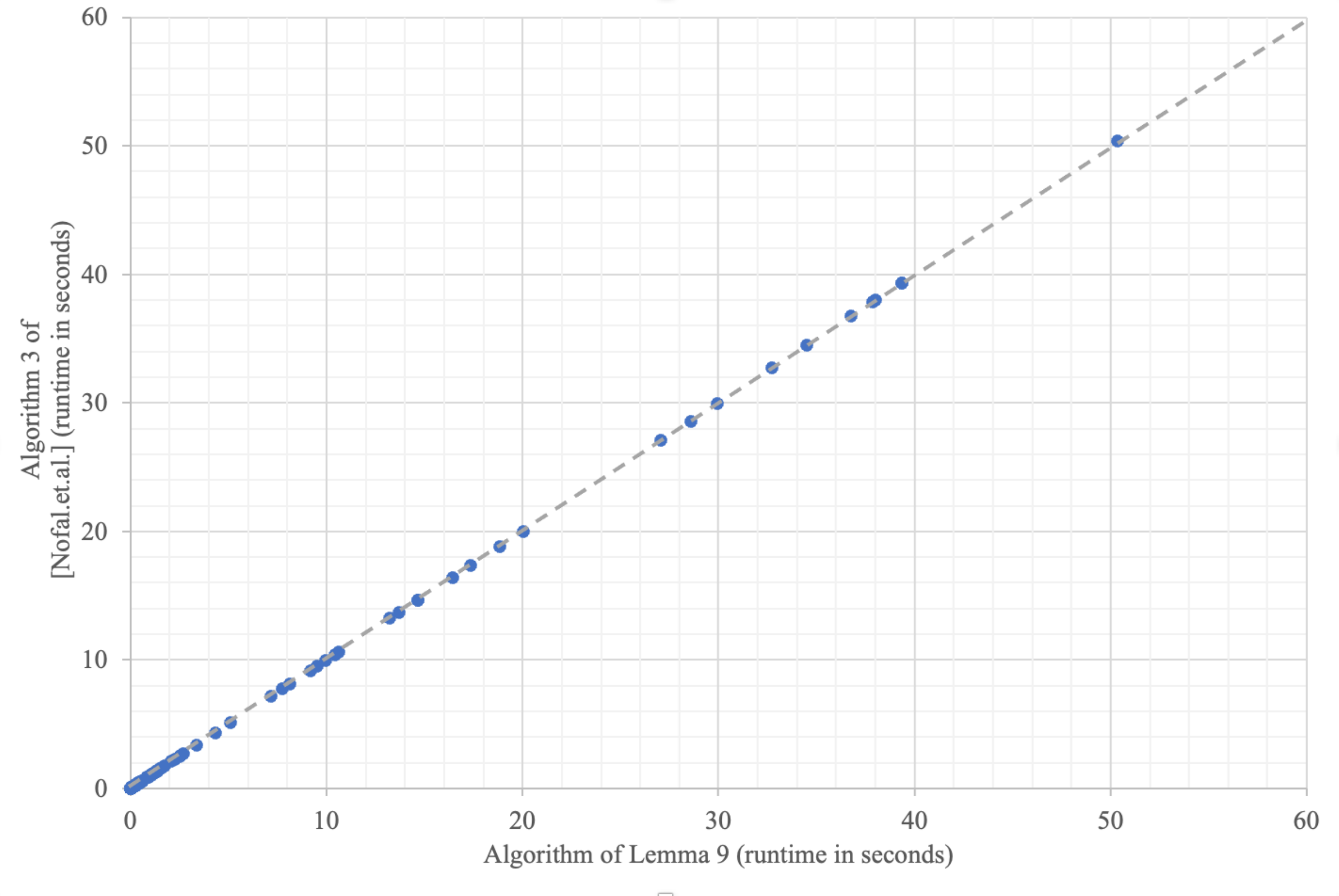}
\caption{The runtime of computing the Algorthim of Lemma \ref{lemma-a1-modified-grounded} compared to the runtime of computing Algorithm 3 of \cite{NAD21}).
\label{fig-RT-Leamm9-Nofal}}
\centering
\end{figure}

It is also possible to do a like-for-like comparison w.r.t Algorithm \ref{alg-construct}, by comparing the runtime of the algorithm itself with the runtime of the algorithm after commenting out lines 15, 26 and 32 (which are used to compute the min-max numbering). The results are provided in Figure \ref{fig-RT-Alg1CmmtedOutAlg1}. On average, we found that the runtime of Algorithm \ref{alg-construct} is 3\%(0.067 seconds) longer than the runtime of Algorithm \ref{alg-construct} with lines 15, 26 and 32 commented out. 

\begin{figure}[thb]
\includegraphics[width=15cm]{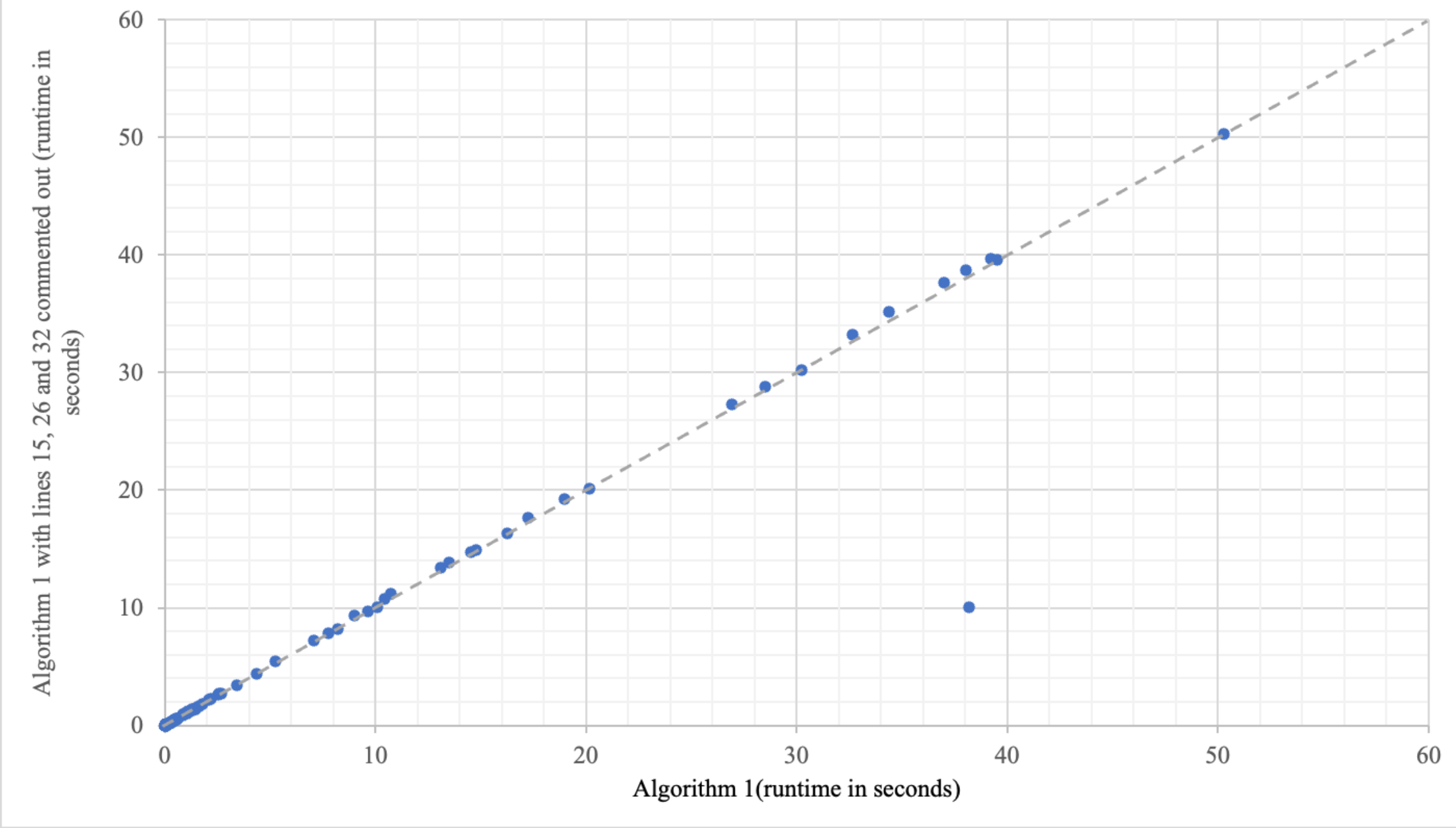}
\caption{The runtime of computing the Algorthim \ref{alg-construct} with the runtime of computing Algorithm \ref{alg-construct} with lines 15, 26 and 32 commented out.
\label{fig-RT-Alg1CmmtedOutAlg1}}
\centering
\end{figure}

Overall, we observe that the additional runtime for computing the min-max numbering is only marginally higher than the runtime for computing only the strongly admissible labelling itself. 
As an aside, the reader might wonder why we did not carry out a similar like-for-like comparison in the context of Algorithm \ref{alg-combine}. That is, why did we not compare the runtime of Algorithm \ref{alg-combine} with the runtime of a modified version of Algorithm \ref{alg-combine} in which all computation of the min-max numbering has been commented out? The reason for not doing so is that Algorithm \ref{alg-combine} contains Algorithm \ref{alg-prune} whose correctness critically depends on the presence of a min-max numbering. To illustrate this, consider the argumentation framework of Figure \ref{fig-square-af}. Suppose $E$ is the main argument in question. Algorithm \ref{alg-construct} in its unmodified form will yield the strongly admissible labelling $(\{A,C,E\}, \{B,D\}, \emptyset$) and it's associated min-max numbering $\{(A:1),(B:2),(C:3),(D:4),(E,5)\}$. 
Algorithm \ref{alg-prune}, in it's unmodified form will use this min-max numbering once it arrives at argument $B$ for selecting a \emph{minimally numbered} in-labelled attacker of $B$ (which is $A$). However, without the numbering, Algorithm \ref{alg-prune} would not know whether to choose $A$ or $E$ as the attacker of $B$. In the absence of a min-max numbering, the algorithm could decide that $B$ already has an in-labelled attacker ($E$), rather than adding the \emph{minimally numbered} in-labelled attacker ($A$), resulting in the incorrect strongly admissible labelling of $(\{C,E\},\{B,D\},\{A\})$.
Hence, we cannot compare Algorithm \ref{alg-combine} with a modified version of \ref{alg-combine} that does not dedicate any resources for computing the min-max numbering, as the latter algorithms would be guaranteed to be correct. 

\begin{figure}[thb]
\centering
\includegraphics[scale=1.0]{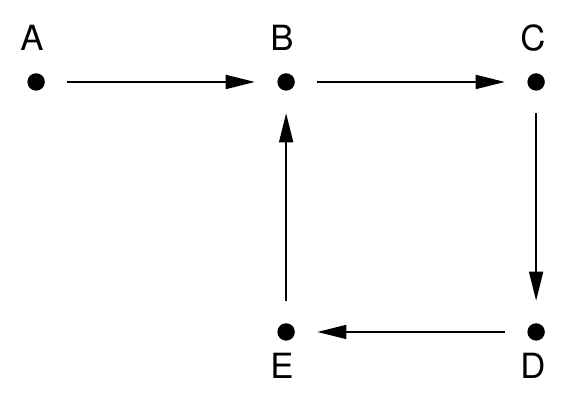}
\caption{Example argumentation framework \label{fig-square-af}}
\end{figure}

\subsection{Computing an \emph{arbitrary} strongly admissible labelling} \label{subsec-arbitrary-strong-admissible-labelling}

So far, we have focused our attention on the problem of finding a \emph{small} strongly admissible labelling (for a particular argument) in a time efficient way. We now examine, the question of how to find an \emph{arbitrary} strongly admissible labelling (for a particular argument) in a time efficient way. That is, we are interested in a fast way of constructing a strongly admissible labelling (that labels the argument in question $\inn$) without caring how big or small the labelling is.\footnote{It can be mentioned that ICCMA'17 and ICCMA'19 describe the somewhat similar task of finding an arbitrary extension for a particular semantics.}
In section \ref{subsec-runtime}, we compared the runtime of Algorithm \ref{alg-construct} and \ref{alg-combine} with the runtime of the ASPARTIX-based approach of \cite{DW20}. The complete our analysis, in the current section we will also compare the runtime of the algorithm of Lemma \ref{lemma-a1-modified-grounded} (for computing the grounded labelling) with the run-time of the ASPARTIX-based approach of \cite{DW20}. The results are provided in the Figure \ref{fig-RT-ASPARTIX-Lemma9}. On average, the runtime of the Algorithm of Lemma \ref{lemma-a1-modified-grounded} is 16\% of the runtime of the ASPARTIX-based approach of \cite{DW20}.

\begin{figure}[thb]
\includegraphics[width=15cm]{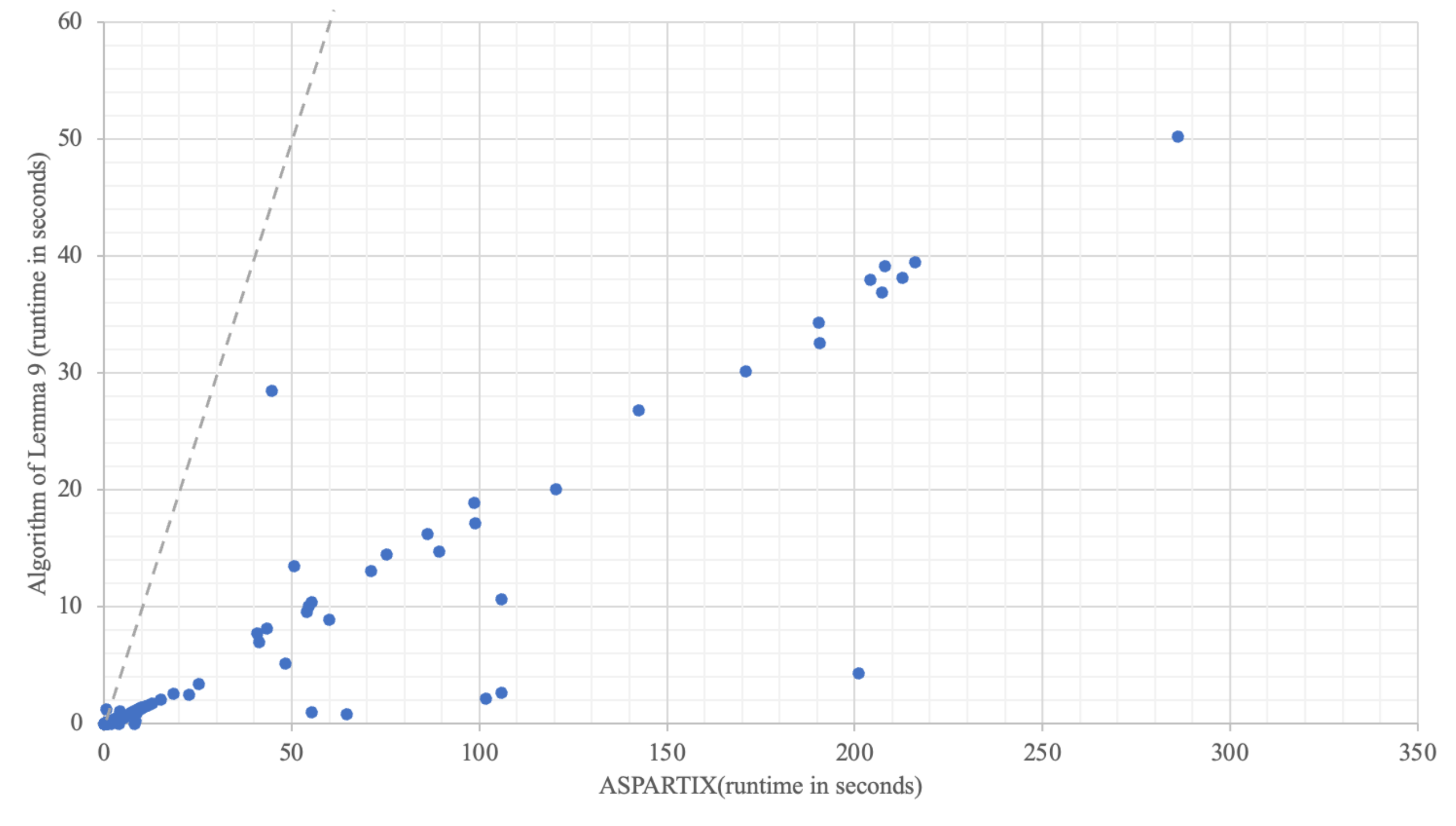}
\caption{The runtime of computing the Algorithm of Lemma \ref{lemma-a1-modified-grounded} with the ASPARTIX-based approach of \cite{DW20}.
\label{fig-RT-ASPARTIX-Lemma9}}
\centering
\end{figure}

To summarise our results, we found that compared to the ASPARTIX-based approach of \cite{DW20}
\begin{enumerate}
  \item the runtime of Algorithm \ref{alg-construct} is on average 16\%,
  \item the runtime of Algorithm \ref{alg-combine} is on average 16\%, and
  \item the runtime of Algorithm of Lemma \ref{lemma-a1-modified-grounded} is on average 16\%. 
\end{enumerate}

Hence, when the aim is to find an arbitrary strongly admissible labelling for a particular argument, our findings confirm the expectation that of above mentioned three algorithms, Algorithm \ref{alg-construct} is the most time-efficient approach.\footnote{This is what one would expect to find as Algorithm \ref{alg-construct} is part of Algorithm \ref{alg-combine}, and unlike the Algorithm of Lemma \ref{lemma-a1-modified-grounded}, Algorithm \ref{alg-construct} terminates when encountering the main argument in question.}

\bibliographystyle{plain}
\bibliography{../../ASPICbibliography}

\begin{thebibliography}{10}

\bibitem{BG07a}
P.~Baroni and M.~Giacomin.
\newblock On principle-based evaluation of extension-based argumentation
  semantics.
\newblock {\em Artificial Intelligence}, 171(10-15):675--700, 2007.

\bibitem{Cam06d}
M.W.A. Caminada.
\newblock On the issue of reinstatement in argumentation.
\newblock In M.~Fischer, W.~van~der Hoek, B.~Konev, and A.~Lisitsa, editors,
  {\em Logics in Artificial Intelligence; 10th European Conference, JELIA
  2006}, pages 111--123. Springer, 2006.
\newblock LNAI 4160.

\bibitem{Cam06a}
M.W.A. Caminada.
\newblock On the issue of reinstatement in argumentation.
\newblock Technical Report UU-CS-2006-023, Institute of Information and
  Computing Sciences, Utrecht University, 2006.

\bibitem{Cam14a}
M.W.A. Caminada.
\newblock Strong admissibility revisited.
\newblock In S.~Parsons, N.~Oren, C.~Reed, and F.~Cerutti, editors, {\em
  Computational Models of Argument; Proceedings of COMMA 2014}, pages 197--208.
  IOS Press, 2014.

\bibitem{Cam15a}
M.W.A. Caminada.
\newblock A discussion game for grounded semantics.
\newblock In E.~Black, S.~Modgil, and N.~Oren, editors, {\em Theory and
  Applications of Formal Argumentation (proceedings TAFA 2015)}, pages 59--73.
  Springer, 2015.

\bibitem{BCG17}
M.W.A. Caminada, P.~Baroni, and M.~Giacomin.
\newblock Abstract argumentation frameworks and their semantics.
\newblock In {\em Handbook of Formal Argumentation}, volume~1. College
  Publications, 2018.

\bibitem{CD19a}
M.W.A. Caminada and P.E. Dunne.
\newblock Strong admissibility revised: theory and applications.
\newblock {\em Argument \& Computation}, 10:277--300, 2019.

\bibitem{CD20a}
M.W.A. Caminada and P.E. Dunne.
\newblock Minimal strong admissibility: a complexity analysis.
\newblock In H.~Prakken, S.~Bistarelli, F.~Santini, and C.~Taticchi, editors,
  {\em Proceedings of COMMA 2020}, pages 135--146. IOS Press, 2020.

\bibitem{CG09}
M.W.A. Caminada and D.M. Gabbay.
\newblock A logical account of formal argumentation.
\newblock {\em Studia Logica}, 93(2-3):109--145, 2009.
\newblock Special issue: new ideas in argumentation theory.

\bibitem{CP11}
M.W.A. Caminada and G.~Pigozzi.
\newblock On judgment aggregation in abstract argumentation.
\newblock {\em Autonomous Agents and Multi-Agent Systems}, 22(1):64--102, 2011.

\bibitem{Dun95}
P.M. Dung.
\newblock On the acceptability of arguments and its fundamental role in
  nonmonotonic reasoning, logic programming and $n$-person games.
\newblock {\em Artificial Intelligence}, 77:321--357, 1995.

\bibitem{DW20}
W.~Dvo\v{r}\'{a}k and J.~Wallner.
\newblock Computing strongly admissible sets.
\newblock In H.~Prakken, S.~Bistarelli, F.~Santini, and C.~Taticchi, editors,
  {\em Proceedings of COMMA 2020}, pages 179--190. IOS Press, 2020.

\bibitem{MC09}
S.~Modgil and M.W.A. Caminada.
\newblock Proof theories and algorithms for abstract argumentation frameworks.
\newblock In I.~Rahwan and G.R. Simari, editors, {\em Argumentation in
  Artificial Intelligence}, pages 105--129. Springer, 2009.

\bibitem{NAD21}
S.~Nofal, K.~Atkinson, and P.E. Dunne.
\newblock Computing grounded extensions of abstract argumentation frameworks.
\newblock {\em The Computer Journal}, 64:54--63, 2021.

\end{thebibliography}

\end{document}